\def\eqref#1{equation~\ref{#1}}
\def\1{\bm{1}}
\DeclareMathAlphabet{\mathsfit}{\encodingdefault}{\sfdefault}{m}{sl}
\SetMathAlphabet{\mathsfit}{bold}{\encodingdefault}{\sfdefault}{bx}{n}
\newcommand\Mycomb[2][^n]{\prescript{#1\mkern-0.5mu}{}C_{#2}}
\newtheorem{theorem}{Theorem}[section]
\newtheorem{lemma}[theorem]{Lemma}
\newtheorem{definition}{Definition}[section]
\newtheorem{assumption}{Assumption}[section]
\definecolor{Gray}{gray}{0.85}
\newcolumntype{Y}{>{\centering\arraybackslash}X}
\title{Causal Order: The Key to Leveraging \\Imperfect Experts in Causal Inference}
\author{Aniket Vashishtha\textsuperscript{1}\thanks{
Work primarily done as a Research Fellow at MSR India, with additional contributions made at UIUC.}, Abbavaram Gowtham Reddy\textsuperscript{2}\thanks{
Work primarily done at IIT-Hyderabad, with additional contributions made at CISPA.}, Abhinav Kumar\textsuperscript{3}, \\
\textbf{Saketh Bachu\textsuperscript{4}, Vineeth N Balasubramanian\textsuperscript{5}\thanks{
Work primarily done at IIT Hyderabad, with additional contributions made at MSR India.\\ Code: \footnotesize{\url{https://github.com/AniketVashishtha/Causal_Order_Imperfect_Experts}}.\\}, Amit Sharma\textsuperscript{5}}\\
\textsuperscript{1}{UIUC, } \textsuperscript{2}{CISPA Helmholtz Center for Information Security, Germany},\\
\textsuperscript{3}{MIT, }\textsuperscript{4}{IIT Hyderabad, India, }\textsuperscript{5}{Microsoft Research, India}\\
\texttt{aniketv2@illinois.edu, gowtham.abbavaram@cispa.de, }\\
\texttt{akumar03@mit.edu, sakethvnit@gmail.com,}\\
\texttt{vineeth.nb@microsoft.com, amshar@microsoft.com}\\
}
\begin{document}

\maketitle

\begin{abstract}
Large Language Models (LLMs) have been used as \textit{experts} to infer causal graphs, often by repeatedly applying a pairwise prompt that asks about the causal relationship of each variable pair. However, such experts, including human domain experts, cannot distinguish between direct and indirect effects given a pairwise prompt. Therefore, instead of the graph, we propose that \textit{causal order} be used as a more stable output interface for utilizing expert knowledge. Even when querying a \textit{perfect} expert with a pairwise prompt, we show that the inferred graph can have significant errors whereas the causal order is always correct. In practice, however, LLMs are imperfect experts and we find that pairwise prompts lead to multiple cycles. Hence, we propose the triplet method, a novel querying strategy that introduces an auxiliary variable for every variable pair and instructs the LLM to avoid cycles within this triplet. It then uses a voting-based ensemble method that results in higher accuracy and fewer cycles while ensuring cost efficiency. Across multiple real-world graphs, such a \textit{triplet}-based method yields a more accurate order than the pairwise prompt, using both LLMs and human annotators. The triplet method enhances robustness by repeatedly querying an expert with different auxiliary variables, enabling smaller models like Phi-3 and Llama-3 8B to surpass GPT-4 with pairwise prompting. For practical usage, we show how the expert-provided causal order from the triplet method can be used to reduce error in downstream graph discovery and effect inference tasks.
\end{abstract}

\vspace{-12pt}
\section{Introduction}
\vspace{-7pt}
Based on evidence that LLMs' domain knowledge, even if imperfect, can be used to decide the direction of causal relationship between a pair of variables~\citep{kiciman2023causal,willig2022probing},  recent years have seen the use of LLMs for inferring the entire causal graph for a given problem domain. This is done by typically invoking a pairwise prompt---of the form: ``\textit{does variable A cause variable B?}''---multiple times for different pairs of variables~\citep{long2022can,antonucci2023zero,kiciman2023causal,cohrs2023large}. In other related efforts, causal graphs or edges obtained from LLMs are used as a prior~\citep{takayama2024integrating} or constraint~\citep{long2023causal,khatibi2024alcm,ban2023causal2} for causal discovery algorithms, showing that LLM-derived graphs enhance downstream graph discovery accuracy. 

However, we highlight a key limitation of using graphs as the \textit{output interface} for such domain knowledge inferred from LLMs, or for that matter, even other imperfect experts (e.g., humans). Obtaining the complete graph requires distinguishing between direct and indirect effects among variables. Given only a pair of variables, it is not possible to decide whether an edge exists or is mediated by another variable, even for a \textit{perfect} human expert---the existence of an edge depends on which other variables are considered to be a part of the node set in the query. 
For example, consider the true data-generating process, \emph{Smoking} $\rightarrow$ \emph{Lung Damage} $\rightarrow$ \emph{Respiratory Diseases}. If an expert is asked whether there should be a direct causal edge from \emph{Smoking} to \emph{Respiratory Diseases}, they would answer ``\textit{Yes}'', which may not capture the true process. 
However, if they are told that the set of observed variables additionally includes \textit{Lung Damage}, then the correct answer would be to not create a direct edge between \emph{Smoking} and \emph{Respiratory Diseases}, but rather create edges mediated through \emph{Lung Damage}. In large graphs, keeping track of the different variables that can affect a given pairwise decision can be cumbersome. 


\begin{figure}
\begin{minipage}[c]{0.3\linewidth}
\centering
\scalebox{0.55}{
\tikzset{every picture/.style={line width=0.4pt}}  
    \begin{tikzpicture}[x=0.75pt,y=0.75pt,yscale=-1,xscale=1]

\draw  [color={rgb, 255:red, 0; green, 0; blue, 0 }  ,draw opacity=1 ][line width=0.75]  (61,72.5) .. controls (61,62.84) and (74.43,55) .. (91,55) .. controls (107.57,55) and (121,62.84) .. (121,72.5) .. controls (121,82.16) and (107.57,90) .. (91,90) .. controls (74.43,90) and (61,82.16) .. (61,72.5) -- cycle ;
\draw  [color={rgb, 255:red, 0; green, 0; blue, 0 }  ,draw opacity=1 ][line width=0.75]  (176,73.5) .. controls (176,63.84) and (189.43,56) .. (206,56) .. controls (222.57,56) and (236,63.84) .. (236,73.5) .. controls (236,83.16) and (222.57,91) .. (206,91) .. controls (189.43,91) and (176,83.16) .. (176,73.5) -- cycle ;
\draw  [color={rgb, 255:red, 0; green, 0; blue, 0 }  ,draw opacity=1 ][line width=0.75]  (119,135.83) .. controls (119,126.17) and (132.43,118.33) .. (149,118.33) .. controls (165.57,118.33) and (179,126.17) .. (179,135.83) .. controls (179,145.5) and (165.57,153.33) .. (149,153.33) .. controls (132.43,153.33) and (119,145.5) .. (119,135.83) -- cycle ;
\draw  [color={rgb, 255:red, 0; green, 0; blue, 0 }  ,draw opacity=1 ][line width=0.75]  (61,197.5) .. controls (61,187.84) and (74.43,180) .. (91,180) .. controls (107.57,180) and (121,187.84) .. (121,197.5) .. controls (121,207.16) and (107.57,215) .. (91,215) .. controls (74.43,215) and (61,207.16) .. (61,197.5) -- cycle ;
\draw  [color={rgb, 255:red, 0; green, 0; blue, 0 }  ,draw opacity=1 ][line width=0.75]  (176,197.5) .. controls (176,187.84) and (189.43,180) .. (206,180) .. controls (222.57,180) and (236,187.84) .. (236,197.5) .. controls (236,207.16) and (222.57,215) .. (206,215) .. controls (189.43,215) and (176,207.16) .. (176,197.5) -- cycle ;
\draw [color={rgb, 255:red, 0; green, 0; blue, 0 }  ,draw opacity=1 ][line width=0.75]    (91,90) -- (134.97,118.13) ;
\draw [shift={(137.5,119.75)}, rotate = 212.61] [fill={rgb, 255:red, 0; green, 0; blue, 0 }  ,fill opacity=1 ][line width=0.08]  [draw opacity=0] (10.72,-5.15) -- (0,0) -- (10.72,5.15) -- (7.12,0) -- cycle    ;
\draw [color={rgb, 255:red, 0; green, 0; blue, 0 }  ,draw opacity=1 ][line width=0.75]    (206,91) -- (162.53,118.64) ;
\draw [shift={(160,120.25)}, rotate = 327.55] [fill={rgb, 255:red, 0; green, 0; blue, 0 }  ,fill opacity=1 ][line width=0.08]  [draw opacity=0] (10.72,-5.15) -- (0,0) -- (10.72,5.15) -- (7.12,0) -- cycle    ;
\draw [color={rgb, 255:red, 0; green, 0; blue, 0 }  ,draw opacity=1 ][line width=0.75]    (149,153.33) -- (92.42,177.81) ;
\draw [shift={(89.67,179)}, rotate = 336.61] [fill={rgb, 255:red, 0; green, 0; blue, 0 }  ,fill opacity=1 ][line width=0.08]  [draw opacity=0] (10.72,-5.15) -- (0,0) -- (10.72,5.15) -- (7.12,0) -- cycle    ;
\draw [color={rgb, 255:red, 0; green, 0; blue, 0 }  ,draw opacity=1 ][line width=0.75]    (149,153.33) -- (202.28,178.07) ;
\draw [shift={(205,179.33)}, rotate = 204.9] [fill={rgb, 255:red, 0; green, 0; blue, 0 }  ,fill opacity=1 ][line width=0.08]  [draw opacity=0] (10.72,-5.15) -- (0,0) -- (10.72,5.15) -- (7.12,0) -- cycle    ;

\draw (64,65.83) node [anchor=north west][inner sep=0.75pt]  [font=\normalsize] [align=left] {\textbf{Pollution}};
\draw (179,66.83) node [anchor=north west][inner sep=0.75pt]  [font=\normalsize] [align=left] {\textbf{Smoking}};
\draw (126,128.5) node [anchor=north west][inner sep=0.75pt]  [font=\normalsize] [align=left] {\textbf{Cancer}};
\draw (72,190.83) node [anchor=north west][inner sep=0.75pt]  [font=\normalsize] [align=left] {\textbf{X-ray}};
\draw (177,190.83) node [anchor=north west][inner sep=0.75pt]  [font=\normalsize] [align=left] {\textbf{Dyspnoea}};

\begin{scope}[yshift=170]
\draw  [color={rgb, 255:red, 0; green, 0; blue, 0 }  ,draw opacity=1 ][line width=0.75]  (61,72.5) .. controls (61,62.84) and (74.43,55) .. (91,55) .. controls (107.57,55) and (121,62.84) .. (121,72.5) .. controls (121,82.16) and (107.57,90) .. (91,90) .. controls (74.43,90) and (61,82.16) .. (61,72.5) -- cycle ;
\draw  [color={rgb, 255:red, 0; green, 0; blue, 0 }  ,draw opacity=1 ][line width=0.75]  (176,72.5) .. controls (176,62.84) and (189.43,55) .. (206,55) .. controls (222.57,55) and (236,62.84) .. (236,72.5) .. controls (236,82.16) and (222.57,90) .. (206,90) .. controls (189.43,90) and (176,82.16) .. (176,72.5) -- cycle ;
\draw  [color={rgb, 255:red, 0; green, 0; blue, 0 }  ,draw opacity=1 ][line width=0.75]  (119,135.83) .. controls (119,126.17) and (132.43,118.33) .. (149,118.33) .. controls (165.57,118.33) and (179,126.17) .. (179,135.83) .. controls (179,145.5) and (165.57,153.33) .. (149,153.33) .. controls (132.43,153.33) and (119,145.5) .. (119,135.83) -- cycle ;
\draw  [color={rgb, 255:red, 0; green, 0; blue, 0 }  ,draw opacity=1 ][line width=0.75]  (61,197.5) .. controls (61,187.84) and (74.43,180) .. (91,180) .. controls (107.57,180) and (121,187.84) .. (121,197.5) .. controls (121,207.16) and (107.57,215) .. (91,215) .. controls (74.43,215) and (61,207.16) .. (61,197.5) -- cycle ;
\draw  [color={rgb, 255:red, 0; green, 0; blue, 0 }  ,draw opacity=1 ][line width=0.75]  (176,197.5) .. controls (176,187.84) and (189.43,180) .. (206,180) .. controls (222.57,180) and (236,187.84) .. (236,197.5) .. controls (236,207.16) and (222.57,215) .. (206,215) .. controls (189.43,215) and (176,207.16) .. (176,197.5) -- cycle ;
\draw [color={rgb, 255:red, 0; green, 0; blue, 0 }  ,draw opacity=1 ][line width=0.75]    (91,90) -- (134.97,118.13) ;
\draw [shift={(137.5,119.75)}, rotate = 212.61] [fill={rgb, 255:red, 0; green, 0; blue, 0 }  ,fill opacity=1 ][line width=0.08]  [draw opacity=0] (10.72,-5.15) -- (0,0) -- (10.72,5.15) -- (7.12,0) -- cycle    ;
\draw [color={rgb, 255:red, 0; green, 0; blue, 0 }  ,draw opacity=1 ][line width=0.75]    (206,90) -- (162.53,118.64) ;
\draw [shift={(160,120.25)}, rotate = 330.23] [fill={rgb, 255:red, 0; green, 0; blue, 0 }  ,fill opacity=1 ][line width=0.08]  [draw opacity=0] (10.72,-5.15) -- (0,0) -- (10.72,5.15) -- (7.12,0) -- cycle    ;
\draw [color={rgb, 255:red, 0; green, 0; blue, 0 }  ,draw opacity=1 ][line width=0.75]    (149,153.33) -- (92.42,177.81) ;
\draw [shift={(89.67,179)}, rotate = 335.96] [fill={rgb, 255:red, 0; green, 0; blue, 0 }  ,fill opacity=1 ][line width=0.08]  [draw opacity=0] (10.72,-5.15) -- (0,0) -- (10.72,5.15) -- (7.12,0) -- cycle    ;
\draw [color={rgb, 255:red, 0; green, 0; blue, 0 }  ,draw opacity=1 ][line width=0.75]    (149,153.33) -- (202.28,178.07) ;
\draw [shift={(205,179.33)}, rotate = 204.96] [fill={rgb, 255:red, 0; green, 0; blue, 0 }  ,fill opacity=1 ][line width=0.08]  [draw opacity=0] (10.72,-5.15) -- (0,0) -- (10.72,5.15) -- (7.12,0) -- cycle    ;

\draw [color={rgb, 255:red, 0; green, 0; blue, 0 }  ,draw opacity=1 ][line width=0.75]    (91,90) -- (91,177) ;
\draw [shift={(91,180)}, rotate = 270] [fill={rgb, 255:red, 0; green, 0; blue, 0 }  ,fill opacity=1 ][line width=0.08]  [draw opacity=0] (10.72,-5.15) -- (0,0) -- (10.72,5.15) -- (7.12,0) -- cycle    ;
\draw [color={rgb, 255:red, 0; green, 0; blue, 0 }  ,draw opacity=1 ][line width=0.75]    (206,90) -- (206,177) ;
\draw [shift={(206,180)}, rotate = 270] [fill={rgb, 255:red, 0; green, 0; blue, 0 }  ,fill opacity=1 ][line width=0.08]  [draw opacity=0] (10.72,-5.15) -- (0,0) -- (10.72,5.15) -- (7.12,0) -- cycle    ;
\draw [color={rgb, 255:red, 0; green, 0; blue, 0 }  ,draw opacity=1 ][line width=0.75]    (91,90) .. controls (113.66,151.32) and (130,167.76) .. (189.26,188.31) ;
\draw [shift={(192,189.25)}, rotate = 198.85] [fill={rgb, 255:red, 0; green, 0; blue, 0 }  ,fill opacity=1 ][line width=0.08]  [draw opacity=0] (10.72,-5.15) -- (0,0) -- (10.72,5.15) -- (7.12,0) -- cycle    ;
\draw [color={rgb, 255:red, 0; green, 0; blue, 0 }  ,draw opacity=1 ][line width=0.75]    (206,90) .. controls (184.44,160.81) and (142.55,184.47) .. (102.92,187.58) ;
\draw [shift={(100.5,187.75)}, rotate = 356.69] [fill={rgb, 255:red, 0; green, 0; blue, 0 }  ,fill opacity=1 ][line width=0.08]  [draw opacity=0] (10.72,-5.15) -- (0,0) -- (10.72,5.15) -- (7.12,0) -- cycle    ;

\draw (64,65.83) node [anchor=north west][inner sep=0.75pt]  [font=\normalsize] [align=left] {\textbf{Pollution}};
\draw (179,66.83) node [anchor=north west][inner sep=0.75pt]  [font=\normalsize] [align=left] {\textbf{Smoking}};
\draw (126,128.5) node [anchor=north west][inner sep=0.75pt]  [font=\normalsize] [align=left] {\textbf{Cancer}};
\draw (72,190.83) node [anchor=north west][inner sep=0.75pt]  [font=\normalsize] [align=left] {\textbf{X-ray}};
\draw (177,190.83) node [anchor=north west][inner sep=0.75pt]  [font=\normalsize] [align=left] {\textbf{Dyspnoea}};
\end{scope}

\end{tikzpicture}}
\caption{\footnotesize \textbf{Cancer dataset}~\citep{bnlearn}: \textbf{Top:} True causal graph. \textbf{Bottom:} Expert-estimated causal graph. 
    Note that the latter, while not correct wrt. the true graph, yields the correct causal order.} 
\label{fig:cancertdvsshd}
\end{minipage}
\hfill
\begin{minipage}[c]{0.68\linewidth}
    \includegraphics[width=\linewidth]{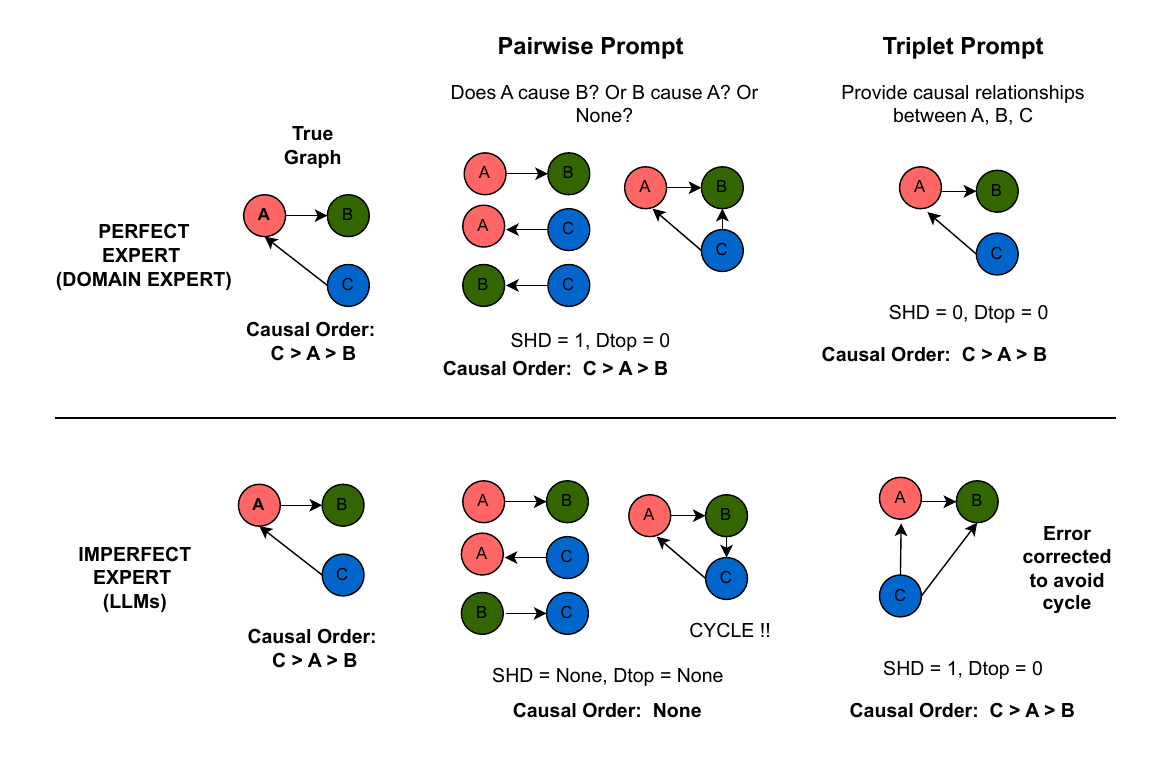}
    \caption{\footnotesize \textbf{Top:} Using the pairwise prompt, even under a perfect expert (e.g., domain expert), the estimated graph may not be correct ($SHD=1$). Causal order, however, is correct ($D_{top}=0$) and hence a better metric. \textbf{Bottom}: under imperfect experts such as LLMs, pairwise prompts may not lead to valid order, creating cycles. The proposed triplet prompting strategy alleviates this issue to provide better estimates of causal order ($D_{top}=0$).}
    \label{fig:perf_imperf_exp}
    \end{minipage}%
\vspace{-2.5em}
\end{figure}

As another example showing the subjectivity of deciding direct or indirect edges, 
consider the scenario in Fig ~\ref{fig:cancertdvsshd} with the variables: \textit{Pollution Exposure, Cancer, Dyspnoea, Smoking History} and \textit{Positive X-ray}. 
When queried only for the presence of a causal edge from \textit{Pollution} to \textit{Dyspnoea} (shortness of breath), an expert may answer \textit{``Yes''}. 
However, if one has to provide a complete graph, it may be non-trivial for an expert to decisively agree on adding a direct edge from \textit{Pollution} to \textit{Dyspnoea}, creating edges mediating through \textit{Cancer}, or both. 


\textbf{Causal Order: Significance and Utility.} We instead propose \textit{causal order} as a more stable approach to obtain experts' domain knowledge. 
\textit{Causal order} is defined as the topological ordering over graph variables. Since the causal order does not distinguish between direct and indirect effects, in both examples above, the causal order is unique and unambiguous. In the first example, \emph{Smoking} $\prec$ \emph{Respiratory Diseases} is a valid causal order ($a \prec b$ indicates that $a$ occurs before $b$ in a casual process). Similarly, in the second example, the causal order, \textit{Pollution} $\prec$ \textit{Dyspnoea} holds true in all three cases considered above by the expert. Formally, we show that for an (optimal) perfect expert that is given only a pair of variables at a time, the predicted causal graph can be incorrect but the predicted causal order is always correct. 
As a result, the standard practice of obtaining a causal graph from 
LLMs and crowd-sourced annotators (using pairwise questions) may introduce errors in inferred edges, which can be mitigated by using causal order in downstream causal algorithms. Order is a stable causal construct, independent of other variables present in the query, making it more generalizable. Though simpler than a full graph, it aids tasks like effect inference and graph discovery.
We show that the correct causal order is sufficient for identifying a valid backdoor set for any pair of treatment and outcome variables. Moreover, a causal order-based metric, topological divergence ($D_{top}$), correlates better with effect estimation accuracy than commonly used graph metrics such as structural hamming distance (\textit{SHD}). Specifically, $D_{top}=0$ if and only if the causal order provides a valid backdoor adjustment set. 
Causal order enhances effect inference and improves graph discovery. We provide simple algorithms to integrate it into existing causal discovery methods.

\textbf{Causal Order: Eliciting from Experts.} In practice, obtaining causal order from experts is still a challenge because we need to account for \textit{imperfect experts} such as human annotators and LLMs. Using the standard method~\citep{kiciman2023causal,long2022can} of iterating with a pairwise prompt/question over a set of variables, while a perfect expert would always predict the correct causal order, we find that using LLMs as experts leads to many cycles.
To reduce the number of cycles from LLM output, we propose a novel \textit{triplet} method for obtaining causal order (see Figure~\ref{fig:perf_imperf_exp}). Rather than asking questions about a pair of variables, the triplet prompt asks about the causal relationship between the pair and an auxiliary variable, and instructs the LLM to obey acyclicity for the triplet. We theoretically show that given an imperfect expert with an error $\epsilon$ on each prediction,  using the triplet-based prompt results in an error less than $\epsilon$, which is less than the error of the pairwise prompt. 
Moreover, since each variable pair occurs in more than one triplet, the repetition allows for ensembling strategies for a more reliable order. 
Using human annotators and LLMs as imperfect experts, the triplet method provides more accurate causal order than the pairwise prompt, especially in large graphs. It also enables small models like Phi-3 and Llama-3 8B to outperform GPT-4's pairwise prompt. 
  


\vspace{-6pt}
\section{Related Work}
\label{gen_inst}
\vspace{-4pt}
\noindent \textbf{Domain Expertise-aided Causal Discovery.} Prior knowledge has been used in causal discovery literature~\citep{kcrl,Constantinou2023,heckerman2013learning,teshima2021incorporating,o2006causal,wallace1996causal}.  These methods rely on prior knowledge such as domain experts' opinions and documented knowledge from randomized controlled trials. Various priors have been studied in literature, such as \textit{edge existence}, \textit{forbidden edge}~\citep{meek1995causal}, and \textit{ancestral constraints}~\citep{Constantinou2023,ban2023query}. Recent advances in LLMs have led to more attention on how LLMs may act as imperfect experts and provide causal knowledge based on metadata such as variable names~\citep{kiciman2023causal, ban2023query, long2023causal, willig2022probing}. Early methods~\citep{kiciman2023causal,willig2022probing,long2022can}  rely on LLMs to predict the complete causal structure, which is evaluated using metrics for full graph structure such as Structural Hamming Distance (SHD). 
Recent methods however use LLM's output to improve accuracy of graph discovery algorithms. The key idea is that LLM can provide information about edges in the graph, which can then be added as a prior or  constraint~\citep{long2023causal,jiralerspong2024efficient} to improve the accuracy for a causal discovery algorithm. For example,~\citep{long2023causal} use LLMs to improve output of a constraint-based algorithm for full graph discovery by orienting undirected edges in the CPDAG. 
Most of these works, however, depend on obtaining correct edge information from LLMs and evaluate LLMs' quality by full graph metrics~\citep{naik2023applying,zhang2024causal} such as SHD~\citep{kiciman2023causal,long2023causal}. We observe that imperfect experts (LLMs or humans) cannot reliably provide edge information given a pair (or subset) of variables. Causal order may be a more appropriate causal structure to elicit from experts. For the same reason, 
the quality of an imperfect expert's output for such tasks is better evaluated on the accuracy of causal order, rather than the full graph structure. 

\noindent \textbf{LLM-based Prompting Strategies.} Existing LLM-based algorithms for graph discovery~\citep{kiciman2023causal,long2022can,ban2023query,antonucci2023zero} use a pairwise prompt, essentially asking ``\textit{does A cause B?}'' with varying levels of prompt complexity. Going beyond this line of work, we propose a triplet-based prompt that provides more accurate answers through aggregation and provides an uncertainty score for each edge to aid in  cycle removal. As a result, our triplet-based prompt may be of independent interest for causal tasks.

\vspace{-5pt}
\section{Causal Order: A Stable Interface for Experts' Knowledge}
\label{headings}
\vspace{-7pt}
\noindent \textbf{Preliminaries.}
Let $\mathcal{G}(\textbf{X}, \textbf{E})$ be a causal directed acyclic graph (DAG) consisting of a set of variables $\textbf{X}=\{X_1,\dots,X_n\}$ and a set of directed edges $\textbf{E}$ among variables in $\mathbf{X}$. A directed edge $X_i\rightarrow X_j\in \mathbf{E}$ denotes the \textit{direct} causal influence of $X_i$ on $X_j$. Let $pa(X_i)=
\{X_k|X_k\rightarrow X_i\}$, 
$de(X_i)=\{X_k|X_k\leftarrow \dots \leftarrow X_i\}$, $ch(X_i)=\{X_k| X_i\rightarrow X_k\}$ denote the set of \textit{parents}, \textit{descendants} and \textit{children} of $X_i$ respectively. If a variable $X_k$ is a descendant of $X_i$ (but they are not connected by a direct edge), then $X_i$ is said to have an \textit{indirect} effect on $X_k$. Average causal effect~\citep{pearl2009causality} (ACE) of a variable $X_i$ on a variable $X_j$ is defined as:
$\small{
    ACE_{X_i}^{X_j} = \mathbb{E}[X_j|do(X_i=x_i)] - \mathbb{E}[X_j|do(X_i=x_i^*)] 
    }$, 
where $X_i$ is called the \textit{treatment}, $X_j$ is called the \textit{target}, and $do(X_i=x_i)$ denotes an external intervention to the variable $X_i$ with the value $x_i$. 
If a set of variables $\mathbf{Z}$ satisfies the backdoor criterion (Defn.~\ref{def:backdooradjustment}) relative to $(X_i, X_j)$, $\mathbb{E}[X_j|do(X_i=x_i)]$ can be computed as: $\mathbb{E}[X_j|do(X_i=x_i)] = \mathbb{E}_{\mathbf{z}\sim\mathbf{Z}}\mathbb{E}[X_j|X_i=x_i, \mathbf{Z}=\mathbf{z}]$ (Thm. 3.3.2 of~\cite{pearl2009causality}); and $\mathbf{Z}$ is called a valid \textit{adjustment set}. We now define the causal (topological) order and the topological divergence metric~\citep{score} that measures  the goodness of a given causal order wrt. the ground-truth graph.  

\begin{definition} \label{def:causal-order}
    \textbf{Topological Order}. Given a causal graph $\mathcal{G}(\mathbf{X},\mathbf{E})$, a sequence (or ordered permutation) $\pi$ of variables $\mathbf{X}$ is a topological order iff for each edge $X_i\rightarrow X_j\in \mathbf{E}$, $\pi_i<\pi_j$.
\end{definition}


\begin{definition} \label{def:top-divergence}

The topological divergence of an estimated order $\hat{\pi}$ with ground truth adjacency matrix $A$, denoted by $D_{top} (\hat{\pi}, A)$, is defined as: $D_{top} (\hat{\pi}, A) = \sum_{i=1}^n \sum_{j:\hat{\pi}_i > \hat{\pi}_j} A_{ij}$ where $A_{ij} = 1$ if there is a directed edge from node $i$ to $j$ else $A_{ij}=0$. $D_{top}(\hat{\pi}, A)$ counts the number of ground-truth edges that cannot be recovered due to the estimated topological order $\hat{\pi}$. 
\end{definition}
\vspace{-3pt}
Structural Hamming Distance (SHD) is also a popular metric for assessing the goodness of a predicted DAG. Given a true DAG $\mathcal{G}$ and an estimated DAG $\hat{\mathcal{G}}$, SHD counts the number of missing, falsely detected, and falsely directed edges in $\hat{\mathcal{G}}$. $D_{top}$ acts as a lower-bound on SHD~\citep{score}.

\vspace{-5pt}
\subsection{Causal order from a perfect expert is always accurate, but graph is not}
\vspace{-5pt}

The predominant approach to extract causal knowledge from LLMs is to use a pairwise prompt~\citep{kiciman2023causal,long2022can,choi2022lmpriorspretrainedlanguagemodels} to determine the existence of an edge and then aggregate to build a causal graph. We highlight a key limitation of pairwise prompts for inferring edges and causal graphs, even with a hypothetical perfect expert, as LLMs are imperfect.

Revisiting the two graphs in Fig.~\ref{fig:cancertdvsshd}, the second graph is estimated by asking pairwise questions to a \textit{perfect} expert that (hypothetically) knows about all cause-effect relationships in a domain (see Defn.~\ref{def:perfectexpert} for a formal definition). The difference in edge predictions is introduced due to the existence of direct and indirect effects. 
For example, when asked about the relationship between \textit{Pollution} and \textit{Dyspnoea}, it may be valid to draw a direct edge if the expert is not aware of the \textit{Cancer} node. As a result, if we compare the estimated graph in Fig.~\ref{fig:cancertdvsshd} using standard graph comparison metrics such as SHD, we may find that that the estimated graph is significantly different from the true graph and (incorrectly) conclude that the expert's knowledge was insufficient. 
Instead, if we compute the causal order  using Def.~\ref{def:causal-order} for the predicted graph   (Fig.~\ref{fig:cancertdvsshd} right), we obtain $\{Smoking, Pollution\} \prec Cancer \prec \{Dyspnoea, X\text{-}ray\}$.
This order is fully consistent with the true graph (Fig.~\ref{fig:cancertdvsshd} left), and thus is a valid causal order. We could thus correctly validate the expert's knowledge as perfect. 
In particular, using causal order as the \textit{output interface} of the expert-estimated graph  ensures that no incorrect constraints are added. If the expert was asked to output the entire graph, erroneous edge constraints such as $Pollution \rightarrow Dyspnoea$ may be added to a downstream discovery algorithm. However, causal order only constrains that some path exists from $Pollution$ to $Dyspneoa$, and allows the downstream algorithm to learn the correct edges from data. 

Note that the limitation is not about using a pairwise prompt, but using its output to infer edges in a graph. As stated earlier, given a pair of variables, it is not possible to determine whether an edge exists between them, without knowing whether potential mediators between the two variables exist. By not explicitly inferring edges, causal order instead corresponds to an ancestor-descendant relationship between a  pair of variables which can be objectively decided given only the two variables.  One can view our approach as formalizing the intuition in~\cite{ban2023query} who consider an LLM's pairwise answer to represent ancestor relationship between a pair of variables.  We now formally show that causal order is a more accurate measure of an expert's knowledge. 
All proofs are in Appendix~\ref{sec proof}.

 



\begin{definition}\label{def:perfectexpert} \textbf{\textit{Perfect Expert.}} A perfect expert is an entity with access to the full ground-truth DAG $\mathcal{G}(\mathbf{X}, \mathbf{E})$. Given two variables two variables, $X_i, X_j \in \mathbf{X}$, and (optionally) an auxiliary set of nodes $\mathbf{O}_{ij} \subset \mathbf{X}$ (note that rest of the variables in set $\mathbf{U} = \mathbf{X}\setminus \mathbf{O}_{ij} \bigcup \{X_i,X_j\}$ need not be known), the expert 
can provide information on the existence of a causal edge between $X_i$ and $X_j$ (``does $X_i$ cause $X_j$'') as follows:
\vspace{-4pt}
\begin{itemize}[leftmargin=*]
\setlength\itemsep{-0.2em}
\item $X_i \rightarrow X_j$: If there is directed edge from $X_i$ to $X_j$ ($X_i \rightarrow X_j \in \mathbf{E}$), or if a directed path exists from $X_i$ to $X_j$ such that it does not contain any node $Z \in \mathbf{O}_{ij}$. 
\item $X_j \rightarrow X_i$: If there is directed edge from $X_j$ to $X_i$ ($X_j \rightarrow X_i \in \mathbf{E}$), or if a directed path exists from $X_j$ to $X_i$ such that it does not contain any node $Z \in \mathbf{O}_{ij}$.
\item Otherwise, output no edge.
\end{itemize}
\end{definition}

\begin{definition}
\label{def: level order}
    \textbf{Level Order}. Given a causal DAG $\mathcal{G}(\mathbf{X}, \mathbf{E})$, its level order is the systematic assignment of levels to variables, beginning with level $0$ to the set of variables $\{X_i | \text{pa}(X_i) = \emptyset\}$. Subsequently, each remaining variable is assigned a level such that for each variable at a given level $i$, the length of the longest directed path from one/more variables in level 0 is $i$. 
\end{definition}

\begin{restatable}[]{proposition}{propositionorder}
\label{propositionorder}
Let the true causal DAG be $\mathcal{G}(\mathbf{X}, \mathbf{E})$ with ground-truth adjacency matrix $A$. Consider a procedure to estimate a graph $\hat{G}$ by querying a Perfect Expert (as in Def.~\ref{def:perfectexpert}) with pairwise queries $X_i$, $X_j$  with auxiliary set $\mathbf{O}_{ij}$, followed by subsequent aggregation of predicted edges from each query (i.e. from a total of 
$\Mycomb[|\mathbf{X}|]{2}$ 
queries). The causal order of the graph $\hat{G}$ thus estimated is correct, i.e. $D_{top}({\pi}(\hat{\mathcal{G}}), A)=0$ for all values of the sets $\mathbf{O}_{ij}$. As a corollary, the causal graph thus estimated can however have errors. In other words, 
when $\mathbf{O}_{ij}=\mathbf{\phi} \  \forall i, j$,  $D_{top}({\pi}(\hat{\mathcal{G}}), A)=0$ whereas Structural Hamming Distance (SHD) between $\mathcal{G}$ and   $\hat{\mathcal{G}}$ = 
$\sum_{i=1}^{|\mathbf{X}|} |de(X_i)| - |ch(X_i)|$.

\end{restatable}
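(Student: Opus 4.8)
The plan is to prove the three parts of the proposition in sequence: (a) the estimated edge set $\hat{\mathbf{E}}$ is exactly the ancestor relation of $\mathcal{G}$ when $\mathbf{O}_{ij}=\emptyset$, or more generally a superset of $\mathbf{E}$ that is still consistent with some valid topological order; (b) conclude $D_{top}(\pi(\hat{\mathcal{G}}),A)=0$; (c) count edges to get the SHD formula. First I would fix notation: for a query on pair $(X_i,X_j)$ with auxiliary set $\mathbf{O}_{ij}$, the Perfect Expert outputs $X_i\to X_j$ precisely when there is a directed $X_i\leadsto X_j$ path in $\mathcal{G}$ avoiding every node of $\mathbf{O}_{ij}$ (and symmetrically). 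Since $\mathcal{G}$ is a DAG, at most one of the two directions can hold, so aggregating over all $\binom{|\mathbf{X}|}{2}$ queries yields a well-defined directed graph $\hat{\mathcal{G}}$ with no $2$-cycles.

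The key structural claim is that $\hat{\mathcal{G}}$ is a DAG and every edge of $\hat{\mathcal{G}}$ respects any topological order $\pi$ of the \emph{true} graph $\mathcal{G}$. This is the heart of the argument. For each estimated edge $X_i\to X_j\in\hat{\mathbf{E}}$, by the expert's rule there is a directed path from $X_i$ to $X_j$ in $\mathcal{G}$ (the auxiliary-set restriction only rules out \emph{some} such edges; whenever an edge is emitted, a genuine directed path exists). A directed path from $X_i$ to $X_j$ in $\mathcal{G}$ forces $\pi_i<\pi_j$ by Definition~\ref{def:causal-order}. Hence every edge of $\hat{\mathcal{G}}$ is oriented consistently with $\pi$; in particular $\hat{\mathcal{G}}$ inherits acyclicity and $\pi(\hat{\mathcal{G}})$ can be taken to be $\pi$ itself (or any linear extension of $\hat{\mathcal{G}}$, all of which are also linear extensions of $\mathcal{G}$). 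Plugging into Definition~\ref{def:top-divergence}: $D_{top}(\pi(\hat{\mathcal{G}}),A)=\sum_{i}\sum_{j:\pi_i>\pi_j}A_{ij}$, and since $A_{ij}=1$ implies $X_i\to X_j\in\mathbf{E}$ implies $\pi_i<\pi_j$, every term vanishes, giving $D_{top}=0$ for all choices of the $\mathbf{O}_{ij}$.

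For the corollary, specialize to $\mathbf{O}_{ij}=\emptyset$ for all $i,j$: now the expert emits $X_i\to X_j$ iff \emph{any} directed path from $X_i$ to $X_j$ exists, i.e. iff $X_j\in de(X_i)$. Thus $\hat{\mathcal{G}}$ is exactly the transitive closure (reachability DAG) of $\mathcal{G}$, so $|\hat{\mathbf{E}}|=\sum_i |de(X_i)|$ while $|\mathbf{E}|=\sum_i |ch(X_i)|$. Since $\mathbf{E}\subseteq\hat{\mathbf{E}}$ (each true edge is a length-one directed path), $\hat{\mathcal{G}}$ has no missing edges and no misoriented edges relative to $\mathcal{G}$ — the only discrepancies are the spurious transitive edges — so $\mathrm{SHD}(\mathcal{G},\hat{\mathcal{G}})=|\hat{\mathbf{E}}|-|\mathbf{E}|=\sum_{i=1}^{|\mathbf{X}|}\bigl(|de(X_i)|-|ch(X_i)|\bigr)$, which is nonzero whenever $\mathcal{G}$ has a directed path of length $\geq 2$.

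The main obstacle is making the structural claim watertight in the general-$\mathbf{O}_{ij}$ case: one must be careful that the expert's rule, although it can \emph{suppress} an edge when all witnessing paths pass through $\mathbf{O}_{ij}$, never \emph{introduces} an edge $X_i\to X_j$ without a genuine directed $X_i\leadsto X_j$ path — and also that it never produces both orientations for a pair (ruled out by acyclicity of $\mathcal{G}$). Once "$X_i\to X_j\in\hat{\mathbf{E}}\implies$ directed path $X_i\leadsto X_j$ in $\mathcal{G}$" is established, consistency with $\pi$ and hence $D_{top}=0$ follow immediately; the edge count for SHD is then a routine bookkeeping step using $\mathbf{E}\subseteq\hat{\mathbf{E}}\subseteq$ transitive closure. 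A minor point worth a sentence: $\hat{\mathcal{G}}$ need not equal the transitive closure for nonempty $\mathbf{O}_{ij}$, which is exactly why the clean SHD formula is stated only for the $\mathbf{O}_{ij}=\emptyset$ case.
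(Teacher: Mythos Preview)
Your proof is correct and arguably cleaner than the paper's, though the two take genuinely different routes to the first claim. The paper argues via \emph{level order} (Definition~\ref{def: level order}): it first observes that the Perfect Expert (i) never drops a true edge (since $X_i\to X_j\in\mathbf{E}$ is explicitly one of the output conditions, regardless of $\mathbf{O}_{ij}$) and (ii) never adds an edge between two nodes on the same level (since no directed path exists between same-level nodes). From (i) and (ii) it concludes that $\hat{\mathcal{G}}$ and $\mathcal{G}$ share the same level order, and then invokes a short auxiliary lemma stating that two DAGs with the same level order admit a common topological order. Your argument bypasses level order entirely: you go straight to ``every edge of $\hat{\mathcal{G}}$ witnesses a directed path in $\mathcal{G}$, hence is consistent with any topological order $\pi$ of $\mathcal{G}$,'' which immediately gives $D_{top}=0$ without the intermediate structural lemma. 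This is more elementary and handles the general $\mathbf{O}_{ij}$ case in one stroke. The paper's approach, on the other hand, makes the structural relationship between $\hat{\mathcal{G}}$ and $\mathcal{G}$ more explicit (identical level stratification), which may be conceptually useful elsewhere.

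One small point: your parenthetical ``any linear extension of $\hat{\mathcal{G}}$, all of which are also linear extensions of $\mathcal{G}$'' silently relies on $\mathbf{E}\subseteq\hat{\mathbf{E}}$, which you only state later for the $\mathbf{O}_{ij}=\emptyset$ case. It does hold for arbitrary $\mathbf{O}_{ij}$ (the first bullet of Definition~\ref{def:perfectexpert} fires on any true edge unconditionally), so the claim is fine, but you should say so explicitly. For the SHD corollary, your identification of $\hat{\mathcal{G}}$ as the transitive closure and the resulting edge count matches the paper's calculation exactly.
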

\vspace{-3pt}

\begin{wrapfigure}[8]{r}{0.27\textwidth}
\vspace{-12pt}
    \centering
    \vspace{-4pt}
    \includegraphics[width=0.24\textwidth]{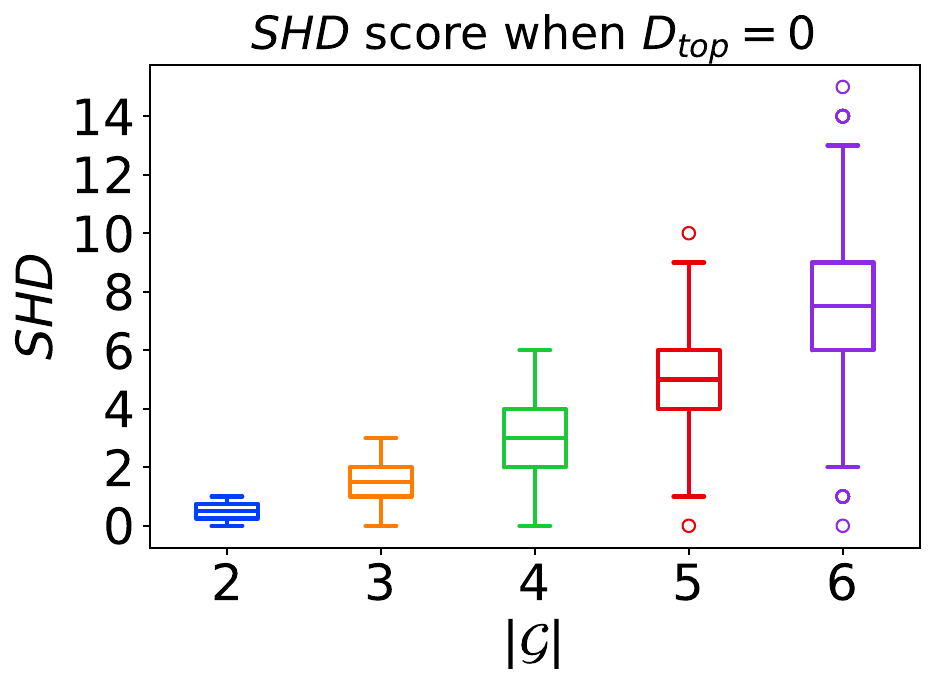}
    \vspace{-8pt}
    \caption{\footnotesize Variability of SHD for various graph sizes with $D_{top}=0$ within each graph.} 
    \label{fig:boxplot_td_shd_variance}
\end{wrapfigure}

Figure~\ref{fig:boxplot_td_shd_variance} illustrates the result of the proposition using empirical simulation. Given a fixed number of nodes, we sample a graph at random as the `ground truth' and then consider all graph orientations of the same size (number of nodes) such that $D_{top}=0$ w.r.t. the ground truth graph. These are potentially the graphs outputted by a Perfect Expert with different values of the auxiliary set $\mathbf{O}$.  For this set of graphs, we compute SHD w.r.t the ground truth graph. Notice the variance in SHD, despite $D_{top}$ being 0.
For graphs with six nodes, SHD can vary from 0 to 14 even as $D_{top}=0$. 

The above observations indicate that SHD can be high even when we obtain information from a Perfect Expert, but  $D_{top}$ is always 0. 
This result is of significance since most estimated graphs (including those that are LLM-generated~\citep{ban2023query,long2023causal}) are evaluated using graph metrics such as  SHD. Rather than the graph, it motivates us to posit the use of causal order as a more accurate output interface for experts' domain knowledge,  since it  allows objective evaluation of the expert's output using the topological divergence metric (Defn~\ref{def:top-divergence}). 


\vspace{-6pt}
\subsection{Downstream Utility of Causal Order: Discovery and Effect Inference} 
\label{sec methodology}
\vspace{-4pt}
While the causal order is a more stable measure of experts' knowledge than the full graph, a natural question is whether it is a useful measure by itself. 
We now show the utility of causal order for effect estimation and causal discovery, which is also demonstrated by our experimental results in Sec \ref{sec expts}. 
Specifically, we show that causal order is sufficient to find a valid backdoor set and $D_{top}$ is an ideal metric to minimize for effect estimation, assuming no latent confounders. Effect estimation error correlates more with topological divergence than it does with SHD. Causal order is also useful as a prior or constraint  to increase accuracy of graph discovery algorithms.  

\noindent \textbf{Correct topological order is necessary and sufficient for finding a valid backdoor set.}
 We first present the (known) result that a correct causal order is sufficient for identifying a backdoor set. We assume there are no unobserved variables in the underlying causal graph.

\begin{restatable}[]{proposition}{propositionone}
\label{propn validadjustment}
\citep{pearl2009causality,cinelli2022crash} Under the no latent confounding assumption, for a pair of treatment and target variables $(X_i, X_j)$ in a DAG $\mathcal{G}$, $\mathbf{Z}=\{X_k|\pi_k<\pi_i\}$ is a valid adjustment set relative to $(X_i,X_j)$ for any topological order $\pi$ of $\mathcal{G}$.
\vspace{-3pt}
\end{restatable}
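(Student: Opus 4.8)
The plan is to prove that $\mathbf{Z}=\{X_k \mid \pi_k < \pi_i\}$ satisfies the backdoor criterion relative to $(X_i, X_j)$, from which validity of the adjustment follows by the standard backdoor adjustment theorem (Thm.~3.3.2 of Pearl cited earlier). Recall the backdoor criterion requires two conditions: (i) no node in $\mathbf{Z}$ is a descendant of $X_i$, and (ii) $\mathbf{Z}$ blocks every path between $X_i$ and $X_j$ that contains an arrow into $X_i$ (a ``backdoor path'').

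First I would establish condition (i). Since $\pi$ is a topological order, any descendant $X_k \in de(X_i)$ satisfies $\pi_k > \pi_i$ (a directed path from $X_i$ to $X_k$ forces strictly increasing $\pi$-values along it). Hence no $X_k$ with $\pi_k < \pi_i$ can be a descendant of $X_i$, so $\mathbf{Z}$ contains no descendants of $X_i$; in particular $X_i, X_j \notin \mathbf{Z}$ (assuming $X_j$ is a descendant of $X_i$, or more carefully, $X_j \notin \mathbf{Z}$ because if $\pi_j < \pi_i$ the effect question is degenerate — I would note the intended setting is $\pi_i < \pi_j$, or simply that $X_j \neq X_i$ and we only care about the backdoor paths).

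Next I would establish condition (ii) by contradiction: suppose some backdoor path $p$ from $X_i$ to $X_j$ is not blocked by $\mathbf{Z}$. Walk along $p$ starting from $X_i$; by definition the first edge is $X_i \leftarrow X_m$, so $\pi_m < \pi_i$, meaning $X_m \in \mathbf{Z}$. The key combinatorial observation is this: on an unblocked path, the first node adjacent to $X_i$ along the path that lies in $\mathbf{Z}$ must be a non-collider on $p$ (otherwise $\mathbf{Z}$ would not be conditioning on it in a way that opens the path) — but conditioning on a non-collider that is in $\mathbf{Z}$ blocks the path. More precisely, I would track the node $W$ on $p$ closest to $X_i$ with $\pi_W < \pi_i$ (such a node exists since $X_m$ qualifies). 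Argue that $W$ cannot be a collider on $p$: a collider $W$ on $p$ has both path-neighbors pointing into it, and since $W \in \mathbf{Z}$ the collider would be ``activated,'' but then I'd instead chase toward a descendant of $W$ in $\mathbf{Z}$, or better, directly argue the segment of $p$ from $X_i$ to $W$ consists of a chain of edges that, combined with $\pi_W < \pi_i$, forces $W$ (or some intermediate node with smaller $\pi$ that lies on $p$ and in $\mathbf{Z}$) to be a non-collider — yielding a contradiction since $\mathbf{Z}$ then blocks $p$ at that node. The cleanest route: since the path enters $X_i$ via an incoming arrow and $\pi$ is topological, the sub-path from $X_i$ back to $W$ must contain a node that is a non-collider and lies in $\mathbf{Z}$; such a node d-separates along $p$, contradiction.

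The main obstacle I anticipate is the collider bookkeeping in step (ii): handling the case where the backdoor path contains colliders whose descendants might or might not be in $\mathbf{Z}$, and making the ``first node in $\mathbf{Z}$ along the path is a non-collider'' argument fully rigorous. I'd resolve this by the standard technique of choosing a \emph{minimal} unblocked path (no node repeated) and inducting on the node of $p$ adjacent to $X_i$: the edge at $X_i$ being into $X_i$ means that neighbor $X_m$ has $\pi_m < \pi_i$, so $X_m \in \mathbf{Z}$; if $X_m$ is a non-collider on $p$ we are done (path blocked, contradiction); if $X_m$ is a collider on $p$ then $X_i$ itself would be a descendant-side — but $X_i \to X_m$ is impossible since the edge is $X_m \to X_i$, so $X_m$ as a collider on $p$ requires its \emph{other} neighbor also pointing into it, and then following the path onward the $\pi$-values must keep decreasing at non-colliders until we exit, eventually hitting a non-collider in $\mathbf{Z}$. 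Since this is cited as a known result, I would keep the argument concise, citing Pearl's and Cinelli--Hazlett's treatment, and present it as the ``ancestors-of-treatment'' special case of the backdoor criterion.
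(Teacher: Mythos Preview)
Your approach is correct and matches the paper's: both verify the two backdoor conditions, and your treatment of condition~(i) is identical to the paper's. For condition~(ii), however, you make it harder than necessary. The paper simply observes that $pa(X_i)\subseteq\mathbf{Z}$ (every parent precedes $X_i$ in any topological order) and cites the standard fact that the parent set blocks every backdoor path. The reason your ``collider bookkeeping'' obstacle never actually arises is that the first node $X_m$ on any backdoor path is a parent of $X_i$, and since the path-edge $X_m\to X_i$ points \emph{out} of $X_m$, $X_m$ cannot be a collider on $p$; conditioning on $X_m\in\mathbf{Z}$ therefore blocks $p$ immediately, regardless of what else lies in $\mathbf{Z}$. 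Your sketch circles around this but gets tangled (you write ``$X_m$ as a collider on $p$ requires its other neighbor also pointing into it'' and then chase further along the path, when in fact $X_m$ being a collider is already impossible). Once you see that the immediate neighbor is always a non-collider in $\mathbf{Z}$, the contradiction is one line and no induction or minimal-path argument is needed.
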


Proofs of all propositions are provided in App. \S~\ref{sec proof}. Propn~\ref{propn validadjustment} states, in simple words, that all variables that precede the treatment variable in a topological order $\pi$ of $\mathcal{G}$ constitute a valid adjustment set. Note that the set $\mathbf{Z}$ may contain variables that are not necessary to adjust for (e.g., ancestors of only the treatment or target variables). For statistical efficiency purposes, ancestors of the target variable are helpful for precise effect estimation, whereas ancestors of treatment variable can be harmful~\citep{cinelli2022crash}. In practical scenarios, however,  it is recommended to adjust for all available adjustment variables since one cannot possibly rule out unknown confounding factors~\citep{sauer2013review, VanderWeele2011ANC}, which aligns with the set obtained using the causal order in  Propn~\ref{propn validadjustment} (see App.~\ref{appendix-discussion-causalorder} for a discussion). 



We now show that $D_{top}$ is an optimal metric to minimize for effect estimation. That is, $D_{top}$ being $0$ for a topological order is equivalent to obtaining the correct backdoor adjustment set using Propn.~\ref{propn validadjustment}. And if $D_{top} \neq 0$, there exists some treatment-target pair whose backdoor set is not correctly identified. 

\begin{restatable}[]{proposition}{propositiontwo}
    For an estimated topological order $\hat{\pi}$ and a true topological order $\pi$ of a causal DAG $\mathcal{G}$ with the corresponding adjacency matrix $A$, $D_{top} (\hat{\pi}, A)=0$ iff $\mathbf{Z}=\{X_k|\hat{\pi}_k<\hat{\pi}_i\}$ is a valid adjustment set relative to $(X_i,X_j), \ \forall \pi_i<\pi_j$. 
\end{restatable}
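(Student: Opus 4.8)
The plan is to prove both directions by reducing the question about valid adjustment sets to the combinatorial condition that $D_{top}(\hat\pi, A) = 0$, using Proposition~\ref{propn validadjustment} as the main tool. Recall $D_{top}(\hat\pi, A) = \sum_{i}\sum_{j:\hat\pi_i > \hat\pi_j} A_{ij}$ counts the edges $X_j \to X_i$ of $\mathcal{G}$ that are ``reversed'' by $\hat\pi$, i.e. edges where the child precedes the parent in $\hat\pi$. So $D_{top}(\hat\pi, A) = 0$ is exactly the statement that $\hat\pi$ is itself a genuine topological order of $\mathcal{G}$ (for every edge $X_a \to X_b \in \mathbf{E}$ we have $\hat\pi_a < \hat\pi_b$).

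For the forward direction ($\Rightarrow$), suppose $D_{top}(\hat\pi, A) = 0$. Then $\hat\pi$ is a valid topological order of $\mathcal{G}$, so by Proposition~\ref{propn validadjustment} applied with $\pi := \hat\pi$, the set $\mathbf{Z} = \{X_k \mid \hat\pi_k < \hat\pi_i\}$ is a valid adjustment set relative to $(X_i, X_j)$ for any pair with $\hat\pi_i < \hat\pi_j$; since $\hat\pi$ is a linear order and we only need this for pairs with $\pi_i < \pi_j$, and any true-order pair $\pi_i<\pi_j$ that also has $\hat\pi_i<\hat\pi_j$ is covered, I should check that the condition is really being asserted for the pairs indexed by the \emph{true} order $\pi$. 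I would argue that whenever $D_{top}(\hat\pi,A)=0$, $\hat\pi$ and $\pi$ are both topological orders of the same DAG, and for any pair $(X_i,X_j)$ with $\pi_i < \pi_j$ there is no directed path from $X_j$ to $X_i$, so $\{X_k \mid \hat\pi_k < \hat\pi_i\}$ contains no descendant of $X_i$ (none can precede $X_i$ in a valid topological order) and blocks all backdoor paths exactly as in Proposition~\ref{propn validadjustment}; hence it is a valid adjustment set.

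For the converse ($\Leftarrow$), I argue by contraposition: assume $D_{top}(\hat\pi, A) \neq 0$, so there is an edge $X_j \to X_i \in \mathbf{E}$ with $\hat\pi_i < \hat\pi_j$. I claim the pair $(X_i, X_j)$ — which satisfies $\pi_i < \pi_j$ in the true order since $X_j \to X_i$... wait, that gives $\pi_j < \pi_i$; the offending pair to use as treatment-target is $(X_j, X_i)$ with treatment $X_j$, target $X_i$, and indeed $\pi_j < \pi_i$. For this pair, $\mathbf{Z} = \{X_k \mid \hat\pi_k < \hat\pi_j\}$; since $\hat\pi_i < \hat\pi_j$ we have $X_i \in \mathbf{Z}$, but $X_i$ is a child (hence descendant) of the treatment $X_j$, so conditioning on $X_i$ either opens a collider path or, more directly, violates the backdoor criterion's requirement that no node in $\mathbf{Z}$ be a descendant of the treatment. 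Thus $\mathbf{Z}$ is not a valid adjustment set for $(X_j, X_i)$, contradicting the right-hand side. The main obstacle I anticipate is bookkeeping with the two orders $\pi$ and $\hat\pi$ and making the quantifier ``$\forall \pi_i < \pi_j$'' precise — in particular arguing cleanly that when $D_{top}=0$ the descendant-exclusion property of Proposition~\ref{propn validadjustment} transfers from $\hat\pi$ to all true-order pairs, and conversely exhibiting the witness pair correctly oriented; I will state a small lemma that $D_{top}(\hat\pi,A)=0 \iff \hat\pi$ is a topological order of $\mathcal{G}$ to keep this transparent.
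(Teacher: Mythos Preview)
Your proposal is correct and follows essentially the same approach as the paper: the forward direction invokes Proposition~\ref{propn validadjustment} after observing that $D_{top}(\hat\pi,A)=0$ means $\hat\pi$ is a valid topological order of $\mathcal{G}$, and the converse is by contraposition, exhibiting a descendant of the treatment inside $\mathbf{Z}$ to violate condition~(i) of the backdoor criterion. The only cosmetic difference is the witness in the converse: you take the reversed edge itself as the (treatment, target) pair so that the target (the child) lands in $\mathbf{Z}$, whereas the paper keeps the parent as treatment but chooses a third node $X_l\neq X_j$ as target so that the child is a non-target descendant in $\mathbf{Z}$; both choices work.
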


Empirically, the correlation of $D_{top}$ with effect estimation is shown in App.~\ref{shdvsdtop} for common BNLearn datasets. As long as $D_{top}$ is zero, changing the graph has no impact on effect estimation error.

\noindent \textbf{Topological order can improve accuracy of graph discovery algorithms.}
Constraints implied by the topological order can be used to reduce the search space for discovery algorithms. For instance, if $X_i \prec X_j$ in the order, then $X_i$ cannot be a descendant of $X_j$ in the corresponding causal  graph. 


\noindent \textit{Using causal order with Constraint-Based Discovery Methods:}
Constraint-based causal discovery algorithms usually return a Completed Partially Directed Acyclic Graph (CPDAG), from which a Markov equivalence class of graphs can be obtained. However, not all edges in a CPDAG are oriented. Given a CPDAG from a constraint-based algorithm like PC~\citep{pc}, we use the causal order $\hat\pi$ obtained from experts to orient the undirected edges, similar to the algorithm from~\cite{meek1995causal}. Iterating over undirected edges, we first check if the nodes of that edge occur in $\hat{\pi}$. If yes, we orient the edge according to $\hat\pi$. Since it is possible that the causal order obtained from querying experts may not include some nodes (Isolated Nodes (IN)), if either (or both) nodes of the undirected edge are not in $\hat{\pi}$, we query a superior expert (e.g. oracle) (see Sec ~\ref{sec querying llms}) to finalize a direction between the pair. Algorithm \ref{algo:constraint plus expert} (Appendix~\ref{appendix-sec-algos}) outlines the specific steps for this integration. 



        


\noindent \textit{Using causal order with score-based discovery methods:}
Score-based methods like CaMML~\citep{wallace1996causal} allow the specification of prior constraints which are respected while obtaining the complete graph. We hence utilize the causal order $\hat\pi$ obtained from experts as a level order prior (Defn \ref{def: level order}) to such methods. We handle any cycles in the expert's output by assigning all nodes in a cycle to the same level. The approach is similar to an LLM-prior approach by~\citet{ban2023query} where the output of LLM and a score-based method are combined using an ancestral constraint. 
This approach also allows us to provide a prior probability to control the influence of prior on the discovery method. Algorithm \ref{algo:score plus expert} (Appendix~\ref{appendix-sec-algos}) outlines the specific steps for this integration. 



\vspace{-12pt}
\section{Obtaining a Causal Order from Imperfect Experts}
\label{sec querying llms}
\vspace{-8pt}
If we assume a Perfect Expert, then aggregating edge responses from the standard pairwise prompt~\citep{kiciman2023causal} can yield an accurate order. However, in practice, LLMs are imperfect experts and their answers can contain unpredictable errors. As a result, aggregating responses from the pairwise prompt leads to many cycles in the final graph (see Sec. 5, Table~\ref{tab:combined_comparison}), which in turn implies that the causal order is undefined. In this section, we propose two ways to reduce the errors made by an imperfect expert such as an LLM, motivated by Prop.~\ref{propositionorder} that showed that adding additional context may help an expert avoid creating unnecessary edges. First, we consider strategies to  add auxiliary context in the pairwise prompt. Second, we propose a strategy that adds dynamic context to each variable pair by iterating over all triplets of variables.  

\vspace{-7pt}
\subsection{Enhancing accuracy of pairwise prompt}
\label{pairwise_opt}
\vspace{-5pt}
One way to avoid cycles is to make the pairwise prompt more robust. 
Beyond the standard pairwise prompt that asks the expert to identify the causal relationship between a pair of variables~\citep{kiciman2023causal}, we consider the following strategies to add contextual information (see Appendix~\ref{sec various prompts}). 
\vspace{-7pt}
\begin{itemize}[leftmargin=*]
\setlength \itemsep{-0.2em}
\item \textit{Iterative Context.} Here we provide the previously oriented pairs as context in the query while iteratively prompting for next pair.
\item \textit{One-hop Iterative Context.} Providing all previously oriented pairs can become prohibitive for large graphs. Therefore, in this setting, we limit the provided information to the already oriented edges connecting the node pair under inspection with their adjacent neighbors. Specifically, we only supply the orientation details for the current node pair and their neighboring nodes.

\item \textit{Chain-of-Thought (+In-context Learning).} Here we include names of all variables in the graph as additional context. Based on recent results on providing in-context examples in LLM prompts for various tasks~\citep{incontext}, we include examples of the ordering task (viz. node pairs and their correct causal ordering), before asking the question about the given nodes. 
\end{itemize}


\subsection{The triplet method for prompting LLMs}
Rather than providing a pair of variables, another way is to provide a larger set of nodes in a prompt and ask LLM to obey the acyclicity constraint while providing the edges among them. The number of total prompts used for a graph with size $|V|$ would be $O(|V|^k)$ where $k$ is the size of the subset included in each prompt. In addition, LLM's accuracy is known to reduce as the query prompt becomes more complex~\citep{levy2024tasktokensimpactinput}. Therefore, while the set of nodes can be of any size, we decide to go with triplet-based prompts as they allow for adding more context with minimal increase in prompt complexity and the total number of LLM calls. Moreover, empirically, we did not see a noticeable improvement in accuracy when moving from a triplet to quadruplet prompt (see Table~\ref{tab:quadvstrip_analysis}). 


In effect, we move from $O(|V|^2)$ calls to $O(|V|^3)$ LLM calls. A key benefit is that for each pair of nodes, we have $n-1$ responses from the LLM, each considering a different auxiliary node as context. For large graphs, we can use a variant that considers a constant $k$ responses for each pair of nodes, leading to $O(k|V|^2)$ complexity. To aggregate the final graph, we take a majority vote on the answers from each edge, further leading to robustness. 

\vspace{-6pt}
\begin{enumerate}[leftmargin=*]
 \setlength\itemsep{-0.15em}
    \item From a given set of graph nodes, we generate all possible triplets of nodes.
    \item We query the expert to orient nodes of each triplet group to form a DAG representing the causal relationship between the triplet's nodes. This results in multiple acyclic mini-graphs representing causal relationships for each triplet group.
    \item Once we have DAGs for each triplet, we focus on merging them. This is done in two steps: (i) We iterate over all node pairs, and for each combination we obtain a majority vote on the orientation between them across all triplets containing the node pair; (ii) In case of a conflict (or a tie in the majority vote) among the three possible edge orientations (A $\rightarrow$ B; B $\rightarrow$ A; No edge between A and B), we resort to a high-cost expert for tie-breaking. 
    \item Finally, a causal order is extracted from the merged graph. 
\end{enumerate}
\vspace{-6pt}
Our triplet prompt additionally use in-context examples and the chain-of-thought strategy from the pairwise setup. An example prompt is shown in 
Table~\ref{tab:triplet prompt structure}.

\noindent \textbf{Theoretical Analysis.}
Next, we analyze the triplet strategy for its impact on predicting incorrect edges. We begin by defining (imperfect) $\epsilon$-experts as in \cite{long2023causal}. For ease of exposition, we define the $\epsilon$-expert to have error probability \textit{exactly} equal to $\epsilon$; this could however be generalized to have error probability at most $\epsilon$. By enforcing the acyclicity constraint for each triplet, the triplet prompt avoids errors that a pairwise prompt may make. Below we assume that the $\epsilon$-expert's predictions satisfy acyclicity for subgraphs having 3 nodes.\footnote{For imperfect experts, it is possible to enforce acyclicity by removing any cycles from their triplet output. However, this step is not needed for GPT-3.5 and GPT-4 as they follow  acyclicity constraint with high accuracy.}


\begin{definition}[$\epsilon$-Experts]
\label{def:epsilon_expert}
Given two nodes $A$ and $B$ of a graph and three options of the causal relationship between them: (i) $A\rightarrow B$, (ii) $A\leftarrow B$, and (iii) no edge between $A$ and $B$ (denoted as $[c_1,c_2,c_3]$), an expert $\mathcal{E}$ queried for the causal relationship between $A$ and $B$ is said to be an \textit{$\epsilon$-expert} (denoted as $\mathcal{E}_{\epsilon}$) if the probability of making an error in the prediction of the causal relationship between $A$ and $B$ is $\epsilon$, where $\epsilon\in (0,1)$.  
\end{definition}


\begin{restatable}[]{proposition}{propositiontriplet}
\label{prop:triplet}
Given two nodes $A$ and $B$ of an underlying causal graph, access to an $\epsilon$-expert $\mathcal{E}_{\epsilon}$ that doesn't produce any cycles in the predicted causal graph (see Assm~\ref{assm:acyclicity_constraint} for formal statement) and and renormalizes the probability in case an option is not available (see Assm~\ref{assm:prob_renorm} for formal statement), let $C\neq A \neq B$ be any other node in the graph. If $\mathcal{E}_{\epsilon}$ predicts causal relationship between all pairs of nodes sequentially, the marginalized probability that $\mathcal{E}_{\epsilon}$ makes an error in predicting the causal relationship between $A$ and $B$, after it has already predicted the causal relationships between $(C,A)$ and $(C,B)$, is less than $\epsilon$, where marginalization is over all possible causal graphs that can be formed between $A,B$ and $C$, with each of such graphs being equally likely. 

\end{restatable}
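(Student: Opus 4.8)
\textbf{Proof proposal for Proposition~\ref{prop:triplet}.}

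The plan is to set up an explicit case analysis over the possible ground-truth relationships among the three nodes $A$, $B$, $C$ and compute the posterior error probability on the $(A,B)$ edge conditioned on the expert's outputs for $(C,A)$ and $(C,B)$, then average over the uniform prior on the triplet's DAG. First I would enumerate the relevant configurations: since we are marginalizing over all DAGs on $\{A,B,C\}$ with each equally likely, I would fix the true $(A,B)$ relationship (one of $c_1,c_2,c_3$) and then consider how the positions of $C$ relative to $A$ and $B$ interact with the acyclicity constraint. The key observation driving the whole argument is that once the expert has committed to orientations for $(C,A)$ and $(C,B)$, acyclicity of the triplet (Assm~\ref{assm:acyclicity_constraint}) may \emph{forbid} one of the three options for $(A,B)$ — for instance, if the expert has output $A \to C$ and $C \to B$, then $B \to A$ would close a cycle, so only $A \to B$ and ``no edge'' remain. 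In that event the expert renormalizes its probability mass over the two surviving options (Assm~\ref{assm:prob_renorm}). So the error probability on $(A,B)$ in such a conditioned branch is \emph{at most} the renormalized mass on the single wrong surviving option, which is strictly smaller than it would be if all three options (and hence the full $\epsilon$ of error mass, possibly split across two wrong options) were available.

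The main steps, in order: (1) Formalize the conditional distribution of the expert's $(A,B)$ answer given the $(C,A)$ and $(C,B)$ answers and the surviving option set $S \subseteq \{c_1,c_2,c_3\}$ dictated by acyclicity; by Assm~\ref{assm:prob_renorm} this is the original categorical distribution restricted to $S$ and renormalized. (2) Show that whenever $|S| < 3$ (a constraint is active) and the true relationship lies in $S$, the conditional error probability is $< \epsilon$ — this is the pure renormalization gain. (3) Handle the ``unconstrained'' branches where $|S| = 3$: these occur when the expert's outputs for $(C,A)$, $(C,B)$ place $C$ as a common source, common sink, or otherwise don't pin down $(A,B)$; here the conditional error is exactly $\epsilon$, contributing no gain but no loss. (4) Show that the constrained branches occur with strictly positive probability under the uniform prior over triplet DAGs combined with the expert's (possibly erroneous) answers — i.e., there is a positive-probability event on which a genuine constraint bites. (5) Combine: the marginalized error is a convex combination of terms each $\le \epsilon$, with at least one term $< \epsilon$ carrying positive weight, hence the total is $< \epsilon$. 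I would also need to verify the edge case where the true $(A,B)$ relationship is itself excluded from $S$ — this cannot happen, because the ground-truth triplet DAG is acyclic, so if the expert's $(C,A)$ and $(C,B)$ answers happen to coincide with the truth, the true $(A,B)$ option survives; and the branches where the expert erred on $(C,A)$ or $(C,B)$ need to be tracked carefully since there the ``error'' we care about is still only on $(A,B)$.

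The hard part will be step (4)/(5): making the counting rigorous. One must be careful that the marginalization is over ground-truth graphs (each equally likely) while the expert's answers are random given each ground truth, so the object of interest is a double expectation, and the renormalization denominators depend on the realized answers. The cleanest route is probably to condition first on the pair of expert outputs for $(C,A)$ and $(C,B)$, partition the event space by the induced surviving set $S$, verify that the strict inequality holds branch-by-branch whenever $|S|=2$, and then argue that $\Pr[|S|=2] > 0$ — which follows because for at least one ground-truth configuration (e.g.\ the chain $A\to C\to B$) the expert, with probability at least $(1-\epsilon)^2 > 0$, outputs both $(C,A)$ and $(C,B)$ correctly, yielding a two-element $S$. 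A secondary subtlety is ensuring the strict inequality is uniform enough to survive averaging, i.e.\ that the gain on the $|S|=2$ branches is bounded away from zero by a quantity depending only on $\epsilon$; since $\epsilon\in(0,1)$ is fixed and the number of branches is finite, this is automatic once each individual strict inequality is established. Finally I would double-check consistency with the stated intuition that the triplet error is also below the pairwise error, which here is immediate since the pairwise error is exactly $\epsilon$ by Defn~\ref{def:epsilon_expert}.
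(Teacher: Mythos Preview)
Your high-level plan is close to the paper's, but step~(5) contains a genuine gap that breaks the argument. You claim the marginalized error is ``a convex combination of terms each $\le \epsilon$, with at least one term $< \epsilon$,'' but this is false: when the expert's answers for $(C,A)$ and $(C,B)$ are \emph{wrong} in a way that induces a chain through $C$ (say the expert outputs $B\to C$ and $C\to A$ when in truth $C$ is isolated and $A\to B$ holds), acyclicity excludes the \emph{true} $(A,B)$ option from $S$, and the conditional error on $(A,B)$ is exactly~$1$, not $\le\epsilon$. You half-acknowledge this (``the branches where the expert erred on $(C,A)$ or $(C,B)$ need to be tracked carefully'') but then your combination step ignores it. The assertion ``this cannot happen, because the ground-truth triplet DAG is acyclic'' only covers the sub-event where the expert's first two answers match truth; it says nothing about the complementary event, which has positive probability and contributes error-$1$ terms to the mixture.

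The paper confronts this trade-off head-on: it enumerates all $25$ DAGs on $\{A,B,C\}$, tabulates $P(F,S\mid G)$ and $P(T=0\mid F,S,G)$ for every branch (including the ones with $P(T=0\mid\cdot)=1$), and computes the marginal $P(T=0)=\frac{1}{25}\cdot\frac{\epsilon(3\epsilon^{2}-30\epsilon+52)}{4-2\epsilon}$ explicitly. Showing this is $<\epsilon$ reduces to the quadratic inequality $3\epsilon^{2}+20\epsilon-48<0$ on $(0,1)$. In other words, the result holds not because every branch is $\le\epsilon$, but because the \emph{gain} on the $|S|=2$, truth-survives branches (where error drops to $\epsilon'=\epsilon/(2-\epsilon)$) quantitatively dominates the \emph{loss} on the truth-excluded branches; establishing that dominance requires the explicit calculation, or at minimum a careful bound comparing the two contributions, which your proposal does not supply.
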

\vspace{-2pt}
Thus, given two nodes $A$ and $B$,  a querying strategy using triplets will have error probability $< \epsilon$  on determining the causal relationship between $A$ and $B$ than a pairwise strategy (proof in  Appendix~\ref{sec proof}). Still, some cycles may be produced in the aggregated global graph, hence we  use a cycle removal algorithm from \cite{zheng2018dags} in the third step of our triplet method. For every edge, we leverage the votes from the triplet prompts to establish a probability distribution over edge orientations. We use this to compute entropy for each edge, removing those with higher entropy (lower confidence). To minimize $D_{top}$, we prune edges with entropy below the mean of all entropies. 


\section{Experiments and Results}
\label{sec expts}
\vspace{-6pt}
\noindent \textbf{Datasets.} We evaluate the triplet method using benchmark datasets from the BNLearn repository~\citep{bnlearn}: Earthquake, Cancer, Survey, Asia, Asia modified (Asia-M), and Child. Asia-M is derived from Asia by removing the node \textit{either} since it is not a node with a semantic meaning (see App.\S~\ref{sec graphs used} for details). To address memorization concerns with the BNLearn datasets, we also use recently proposed datasets that require nuanced medical domain understanding: (i) \textbf{Neuropathic}: A medium-sized subset graph  from a relatively less popular Neuropathic dataset~\citep{tu2019neuropathic} (see Appendix Fig~\ref{fig:Neuropathic graph}). (ii) \textbf{Alzheimers:} This graph (refer Figure \ref{fig:Alzheimer's graph}) provides features (such as ventricular volume, brain volume, APOE4, etc) to study the clinical and phenotype of Alzheimer's disease ~\citep{abdulaal2024causal}. It was created by a consensus of human experts. (iii) \textbf{Covid-19:} This graph, curated by medical experts,  models the pathophysiological process of SARS-CoV-2 in the respiratory system which involves outlining the various pathways from viral infection to key complications (refer Figure \ref{fig:Covid graph}). Orienting this graph requires understanding of how nodes like Pulmonary capillary leakage, systemic inflammatory response, Virema and more influence each other \citep{Mascaro2022.02.14.22270925}. All graphs are real-world graphs constructed by human experts. 
We provide more details on the datasets in Table~\ref{tab:datasets_tab}.

\noindent \textbf{Imperfect Experts.} We consider two types of imperfect experts: \textbf{\textit{LLMs}} and \textbf{\textit{human annotators}}. 

\noindent \underline{\textit{Human Annotation.}} We considered 15 human annotators, each with undergrad-level training in STEM but no formal experience in causality. Each annotator was randomly assigned graphs for pairwise and triplet query strategies while ensuring no annotator got the same graph to query with both strategies. To get an estimate of the upper bound of human performance, for resolving tie-breaking conflicts in the triplet method, we used a ground truth-based oracle (proxy for a human domain expert). For each dataset, three human annotators were asked to annotate the final graph and the aggregate of that was reported. For feasibility reasons, human annotations were done only for the BNLearn graphs.


\noindent \underline{\textit{LLM-Based.}} 
We consider two main LLM-based experts, GPT-3.5-turbo and GPT-4. For the triplet method, GPT-4 was used for tie-breaking. To understand the effect of model size, we also evaluate the pairwise and triplet methods on Phi-3 (3.8B parameters) \citep{abdin2024phi3technicalreporthighly} and Llama3 (8B parameters) \citep{dubey2024llama3herdmodels} which are significantly smaller models than GPT-3.5-turbo and GPT-4.

\noindent \textbf{Baselines.} In addition to the pairwise prompt and its extensions from Sec.~\ref{pairwise_opt}, we consider two methods based on breadth-first search from \cite{jiralerspong2024efficientcausalgraphdiscovery}. The first method, BFS, iterates over nodes and uses an LLM to query children of each node. The second method, BFS+Stats, uses correlation coefficient between nodes as additional context in the LLM's prompt. 
\vspace{-6pt}
\subsection{Accuracy of Causal Order with triplet vs. pairwise methods}
We first present the accuracy of obtaining causal order  using our triplet method over other pairwise query strategies. Subsequently, we present the results of using the causal order obtained from imperfect experts to downstream tasks such as causal discovery and effect inference.

\begin{table}[t]
    \centering
    \begin{minipage}[t]{0.48\textwidth} 
    \centering
        
        \vspace{-5pt}
        \scalebox{0.8}{
        \begin{tabular}{l|c|c|c}
        \toprule
        \textbf{Dataset} & \textbf{Metric} & \textbf{Pairwise} & \textbf{Triplet} \\
        \midrule
        \multicolumn{4}{c}{Using Human Annotators} \\
        \midrule
        \multirow{4}{*}{Earthquake} & $D_{top}$ & \cellcolor{gray!25}\textbf{0} & \cellcolor{gray!25}\textbf{0} \\
        & SHD & 4.67 & \cellcolor{gray!25}\textbf{1.67} \\
        & Cycles & \cellcolor{gray!25}\textbf{0} & \cellcolor{gray!25}\textbf{0} \\
        & IN & \cellcolor{gray!25}\textbf{0} & 0.33 \\ \hline
        \multirow{4}{*}{Survey} & $D_{top}$ & - & \cellcolor{gray!25}\textbf{0} \\
        & SHD & 6.33 & \cellcolor{gray!25}\textbf{3.67} \\
        & Cycles & 0.67 & \cellcolor{gray!25}\textbf{0} \\
        & IN & 0.67 & \cellcolor{gray!25}\textbf{0} \\ \hline
        \multirow{4}{*}{Cancer} & $D_{top}$ & \cellcolor{gray!25}\textbf{0} & \cellcolor{gray!25}\textbf{0} \\
        & SHD & 4.33 & \cellcolor{gray!25}\textbf{3.67} \\
        & Cycles & \cellcolor{gray!25}\textbf{0} & \cellcolor{gray!25}\textbf{0} \\
        & IN & 0.67 & \cellcolor{gray!25}\textbf{0} \\ \hline
        \multirow{4}{*}{Asia-M} & $D_{top}$ & - & \cellcolor{gray!25}\textbf{1.33} \\
        & SHD & 11.67 & \cellcolor{gray!25}\textbf{11.33} \\
        & Cycles & 3 & \cellcolor{gray!25}\textbf{0} \\
        & IN & \cellcolor{gray!25}\textbf{0} & \cellcolor{gray!25}\textbf{0} \\
        \bottomrule
        
    \end{tabular}
    }
    \caption{\footnotesize Experiments with non-expert human annotators show that the triplet method consistently produces lower SHD and $D_{top}$ values.} 
    \label{tab:human_comparison}

        \vspace{0.3cm} 

\vspace{-5pt}
        \scalebox{0.7}{
        \begin{tabular}{l|c|c|c|c}
        \toprule
        \textbf{Dataset} & \textbf{Metric} & \textbf{\makecell{Pairwise \\ GPT-4}} & \textbf{\makecell{Triplet \\ Phi-3}} & \textbf{\makecell{Triplet \\ Llama3}} \\
        \midrule
        \multirow{4}{*}{Asia} & $D_{top}$ & 1 & \cellcolor{gray!25}\textbf{0} & 2\\
        & SHD & 18 & \cellcolor{gray!25}\textbf{13} & 17 \\
        & Cycles & \cellcolor{gray!25}\textbf{0} & \cellcolor{gray!25}\textbf{0} & \cellcolor{gray!25}\textbf{0} \\
        & IN/TN & \cellcolor{gray!25}\textbf{0/5} & 1/5 & \cellcolor{gray!25}\textbf{0/5} \\
        \midrule
        \multirow{4}{*}{Alzheimers} & $D_{top}$ & - & 7 & \cellcolor{gray!25}\textbf{5} \\
        & Cycles & 1 & \cellcolor{gray!25}\textbf{0} & \cellcolor{gray!25}\textbf{0} \\
        & IN/TN & \cellcolor{gray!25}\textbf{0/11} & \cellcolor{gray!25}\textbf{0/11} & 1/11 \\
        \midrule
        \multirow{4}{*}{Child} & $D_{top}$ & - & 17 & \cellcolor{gray!25}\textbf{12} \\
        & SHD & 148 & \cellcolor{gray!25}\textbf{69} & 129 \\
        & Cycles & >>10k & \cellcolor{gray!25}\textbf{0} & \cellcolor{gray!25}\textbf{0} \\
        & IN/TN & \cellcolor{gray!25}\textbf{0/20} & \cellcolor{gray!25}\textbf{0/20} & \cellcolor{gray!25}\textbf{0/20} \\
        \bottomrule
    \end{tabular}
    }
    \caption{\footnotesize Comparison of triplet method using Phi-3/Llama3 against pairwise (base) using GPT-4. Triplet method with significantly smaller models obtains lower SHD and $D_{top}$ values while avoiding cycles.} 
    \label{tab:combined_comparison}

    \end{minipage}%
    \hfill
    \begin{minipage}[t]{0.48\textwidth} 
    \centering
            \caption{\footnotesize Results using GPT-3.5-Turbo. Performance of triplet method, best performing pairwise query strategy (Chain of Thought), standard pairwise technique (Base) on multiple benchmark datasets across diff metrics: $D_{top}$, SHD, (Num of) Cycles, IN, TN. When number of cycles$>$0, $\hat{\pi}$ cannot be computed, hence $D_{top}$ is given by `-'. While CoT method shows improvement over base pairwise, triplet method outperforms pairwise methods across all datasets and metrics, with significant improvements on larger graphs such as  \textit{Child} and \textit{Neuropathic}.}
        \resizebox{\textwidth}{!}{
            \begin{tabular}{l|c|c|c|c}
            \toprule
            \textbf{Dataset}&\textbf{Metric}&\textbf{Pairwise (Base)}&\textbf{Pairwise (CoT)}&\textbf{Triplet}\\
            \midrule
            \multicolumn{5}{c}{Using LLM}\\
            \midrule
            \multirow{4}{*}{Earthquake} & $D_{top}$ & \cellcolor{gray!25}\textbf{0} & \cellcolor{gray!25}\textbf{0} & \cellcolor{gray!25}\textbf{0}\\
            & SHD & 7 & \cellcolor{gray!25}\textbf{4} & \cellcolor{gray!25}\textbf{4} \\
            & Cycles & \cellcolor{gray!25}\textbf{0} & \cellcolor{gray!25}\textbf{0} & \cellcolor{gray!25}\textbf{0}\\
            & IN/TN & \cellcolor{gray!25}\textbf{0/5} & \cellcolor{gray!25}\textbf{0/5}& \cellcolor{gray!25}\textbf{0/5}\\ \hline
            \multirow{4}{*}{Survey} & $D_{top}$ & 3 & 1 & \cellcolor{gray!25}\textbf{0}\\
            & SHD & 12 & \cellcolor{gray!25}\textbf{9} & \cellcolor{gray!25}\textbf{9}\\
            & Cycles & \cellcolor{gray!25}\textbf{0} & \cellcolor{gray!25}\textbf{0} & \cellcolor{gray!25}\textbf{0}\\
            & IN/TN & \cellcolor{gray!25}\textbf{0/6} & 2/6 & \cellcolor{gray!25}\textbf{0/6}\\ \hline
            \multirow{4}{*}{Cancer} & $D_{top}$ & \cellcolor{gray!25}\textbf{0} & - & 1\\
            & SHD & 6 & - & \cellcolor{gray!25}\textbf{6}\\
            & Cycles & 0 & - & \cellcolor{gray!25}\textbf{0}\\
            & IN/TN & \cellcolor{gray!25}\textbf{0/5} & - & \cellcolor{gray!25}\textbf{0/5}\\ \hline
            \multirow{4}{*}{Asia-M} & $D_{top}$ & - & - & \cellcolor{gray!25}\textbf{1}\\
            & SHD & 15 & 13 & \cellcolor{gray!25}\textbf{11}\\
            & Cycles & 7 & 1 & \cellcolor{gray!25}\textbf{0}\\
            & IN/TN & \cellcolor{gray!25}\textbf{0/7} & \cellcolor{gray!25}\textbf{0/7} & \cellcolor{gray!25}\textbf{0/7}\\ \hline
            \multirow{4}{*}{Child} & $D_{top}$ & - & - & \cellcolor{gray!25}\textbf{1}\\
            & SHD & 177 & 138 & \cellcolor{gray!25}\textbf{28}\\
            & Cycles & >>3k & >>500 & \cellcolor{gray!25}\textbf{0}\\
            & IN/TN & \cellcolor{gray!25}\textbf{0/20} & \cellcolor{gray!25}\textbf{0/20} & \cellcolor{gray!25}\textbf{0/20} \\
            \midrule
            \multirow{4}{*}{Covid} & $D_{top}$ & - & \cellcolor{gray!25}\textbf{0} & \cellcolor{gray!25}\textbf{0}\\
            & SHD & 41 & \cellcolor{gray!25}\textbf{27} & 30\\
            & Cycles & >>1000 & \cellcolor{gray!25}\textbf{0} & \cellcolor{gray!25}\textbf{0}\\
            & IN/TN & \cellcolor{gray!25}\textbf{0/20} & \cellcolor{gray!25}\textbf{0/20} & \cellcolor{gray!25}\textbf{0/20}\\
            \midrule
            \multirow{4}{*}{Alzheimers} & $D_{top}$ & - & 6 & \cellcolor{gray!25}\textbf{4}\\
            & SHD & 42 & \cellcolor{gray!25}\textbf{26} & 28\\
            & Cycles & 684 & \cellcolor{gray!25}\textbf{0} & \cellcolor{gray!25}\textbf{0}\\
            & IN/TN & \cellcolor{gray!25}\textbf{0/20} & \cellcolor{gray!25}\textbf{0/20} & \cellcolor{gray!25}\textbf{0/20}\\
            \midrule
            \multirow{4}{*}{Neuropathic} & $D_{top}$ & - & - & \cellcolor{gray!25}\textbf{3}\\
            & SHD & 212 & 64 & \cellcolor{gray!25}\textbf{24}\\
            & Cycles & >>5k & 5 & \cellcolor{gray!25}\textbf{0}\\
            & IN/TN & \cellcolor{gray!25}\textbf{0/22} & \cellcolor{gray!25}\textbf{0/22} & 13/22\\
            \bottomrule
        \end{tabular}
        }

\label{tab:combined_comparison2}
     
    \end{minipage}
     \vspace{-10pt}  
\end{table}

\noindent 
\textbf{\textit{Human Experts.}}
 With human annotators, Table~\ref{tab:human_comparison} shows that graphs like \textit{Survey} and \textit{Asia-M} result in cycles when queried pairwise. However, no cycle formations were observed across annotators when they were queried to orient causal graphs using the triplet method. Moreover, the triplet method shows consistently low $D_{top}$ and $SHD$ across all human outputs, highlighting its effectiveness. 


\noindent \textbf{\textit{GPT-3.5-turbo as Expert.}}
Tables \ref{tab:prompt comparison} and \ref{tab: triplet prompt comparison} present the performance of various pairwise optimization strategies from Sec. \ref{pairwise_opt}. While strategies like CoT offer some gains over the base pairwise method, they often produce cycles, especially in larger graphs like Child. These findings show that our pairwise variations improve graph discovery but still fall short. 
Table \ref{tab:combined_comparison2} compares the base pairwise method,  best pairwise variation (CoT), and the triplet method  across benchmark datasets using metrics like $D_{top}$, SHD, Cycles, IN, and Total Nodes (TN). 
Triplet method consistently outperforms the best pairwise CoT approach, showing a significant performance gap over the base pairwise method. 
For larger graphs like Child, the pairwise base approach shows a more pronounced difference, with higher cycle counts and SHD. 
Results on the Neuropathic dataset further confirm that the triplet method yields low $D_{top}$ and significantly lower SHD than pairwise methods.

\noindent \textbf{\textit{GPT-4 as Expert.}}
Table~\ref{tab:pairwise_gpt4} shows the impact of using a more advanced model like GPT-4 for the pairwise method.  Despite superior  model capabilities, we observe a consistently high number of cycles in bigger, complex graphs such as Child, Neuropathic, Covid-19 and Alzheimers, indicating that simply upgrading the model is not sufficient. 
In comparison, upgrading to GPT-4 for orienting subgraphs for the triplet method leads to further performance improvements as shown in Table~\ref{tab:triplet_gpt4}.

\noindent
\textbf{\textit{Results with Small LMs.}}
To assess the robustness of the triplet method, we use it with small LMs such as Phi-3 and LLama3-8B as experts and GPT-4 for tie-breaker. Remarkably, as shown in Table \ref{tab:combined_comparison}, the triplet method using smaller LMs outperforms the base pairwise method using GPT-4, particularly for complex networks. 
Results with small LMs are shown in Table~\ref{tab:llm_phi3_res}. The triplet method outperforms the pairwise method consistently, yielding low $D_{top}$ values for both small and large graphs. 

\noindent \textbf{\textit{Comparison with BFS and BFS+Stats.}} We also compare the triplet method to recently proposed BFS-based methods on a subset of the datasets in Table~\ref{tab:bfs-comparison}.  BFS and BFS+Stats methods obtain lower accuracy than the triplet method. Across datasets, SHD and $D_{top}$ for BFS and BFS+Stats methods (especially with GPT-3.5-turbo) are higher than the triplet method. Among the Child and Covid-19 datasets, all configurations lead to cycles in atleast one of them,  except BFS with GPT-4. 

\noindent \textbf{\textit{Cost Estimation Analysis: Pairwise vs Triplet for LLMs.}}
The triplet method optimizes cost by using smaller models efficiently, reserving larger models for clash resolution, reducing inference costs while improving accuracy over pairwise methods. See Appendix \ref{sec appendix query strategies} for a detailed cost comparison. 

\begin{table*}
    \centering
    \resizebox{\textwidth}{!}{
    \begin{tabular}{cc|cccccc|cccc}
         \toprule
         &\textbf{Dataset} & \textbf{PC}& \textbf{SCORE} & \textbf{ICA} & \textbf{Direct} &\textbf{NOTEARS}&\textbf{CaMML}&\textbf{Ours}&\textbf{Ours}&\textbf{Ours}&\textbf{Ours}\\
         &&&&\textbf{LiNGAM}&\textbf{LiNGAM}&&&\textbf{\small{(LLM+PC)}}&\textbf{\small{(LLM+CamML)}}&\textbf{\small{(Human+PC)}}&\textbf{\small{(Human+CaMML)}}\\
         \midrule
        &Earthquake &0.16$\pm$0.28&4.00$\pm$0.00&1.00$\pm$0.00&1.00$\pm$0.00&1.00$\pm$0.00&2.00$\pm$0.00&\cellcolor{gray!25}\textbf{0.00$\pm$0.00}&\cellcolor{gray!25}\textbf{0.00$\pm$0.00}&\cellcolor{gray!25}\textbf{0.00$\pm$0.00}&1.00$\pm$0.00\\
        &Cancer &\cellcolor{gray!25}\textbf{0.00$\pm$0.00}&3.00$\pm$0.00&2.00$\pm$0.00&2.00$\pm$0.00&2.00$\pm$0.00&2.00$\pm$0.00&\cellcolor{gray!25}\textbf{0.00$\pm$0.00}&\cellcolor{gray!25}\textbf{0.00$\pm$0.00}&\cellcolor{gray!25}\textbf{0.00$\pm$0.00}&\cellcolor{gray!25}\textbf{0.00$\pm$0.00}\\
        &Survey &0.50$\pm$0.00&4.00$\pm$0.00&2.00$\pm$0.00&4.00$\pm$0.00&4.00$\pm$0.00&3.33$\pm$0.94&\cellcolor{gray!25}\textbf{0.00$\pm$0.00}&3.33$\pm$0.94&\cellcolor{gray!25}\textbf{0.00$\pm$0.00}&\cellcolor{gray!25}\textbf{0.00$\pm$0.00}\\
        &Asia &2.00$\pm$0.59&7.00$\pm$0.00&3.33$\pm$0.47&1.00$\pm$0.00&3.00$\pm$0.00&1.85$\pm$0.58&1.00$\pm$0.00&\cellcolor{gray!25}\textbf{0.97$\pm$0.62}&N/A&N/A\\
        &Asia-M &1.50$\pm$0.00&6.00$\pm$0.00&1.00$\pm$0.00&3.00$\pm$0.00&3.00$\pm$0.00&\cellcolor{gray!25}\textbf{1.00$\pm$0.00}&\cellcolor{gray!25}\textbf{1.00$\pm$0.00}&1.71$\pm$0.45&\cellcolor{gray!25}\textbf{1.00$\pm$0.00}&2.00$\pm$0.00\\
         \parbox[t]{2mm}{\multirow{-5}{*}{\rotatebox[origin=c]{90}{$N=250$}}}&Child &5.75$\pm$0.00&12.0$\pm$0.00&14.33$\pm$0.47&16.0$\pm$0.00&14.0$\pm$0.00&\cellcolor{gray!25}\textbf{3.00$\pm$0.00}&4.00$\pm$0.00&3.53$\pm$0.45&N/A&N/A\\
         &Neuropathic&4.00$\pm$0.00&6.00$\pm$0.00&13.0$\pm$6.16&10.0$\pm$0.00&9.00$\pm$0.00&10.4$\pm$1.95&\cellcolor{gray!25}\textbf{3.00$\pm$0.00}&5.00$\pm$0.00&N/A&N/A\\
         \midrule
         &Earthquake&\cellcolor{gray!25}\textbf{0.00$\pm$0.00}&4.00$\pm$0.00&3.00$\pm$0.00&3.00$\pm$0.00&1.00$\pm$0.00&0.40$\pm$0.48&\cellcolor{gray!25}\textbf{0.00$\pm$0.00}&\cellcolor{gray!25}\textbf{0.00$\pm$0.00}&\cellcolor{gray!25}\textbf{0.00$\pm$0.00}&\cellcolor{gray!25}\textbf{0.00$\pm$0.00}\\
        &Cancer&2.00$\pm$0.00&3.00$\pm$0.00&3.00$\pm$0.00&3.00$\pm$0.00&2.00$\pm$0.00&0.60$\pm$0.80&2.00$\pm$0.00&\cellcolor{gray!25}\textbf{0.00$\pm$0.00}&2.00$\pm$0.00&\cellcolor{gray!25}\textbf{0.00$\pm$0.00}\\
        &Survey&2.00$\pm$0.00&4.00$\pm$0.00&5.00$\pm$0.00&5.00$\pm$0.00&3.00$\pm$0.00&3.60$\pm$1.35&2.00$\pm$0.00&1.83$\pm$0.00&2.00$\pm$0.00&\cellcolor{gray!25}\textbf{0.00$\pm$0.00}\\
        &Asia &1.5$\pm$0.00&4.00$\pm$0.00&6.00$\pm$0.00&4.40$\pm$1.35&3.00$\pm$0.00&1.40$\pm$0.48&\cellcolor{gray!25}\textbf{0.00$\pm$0.00}&0.34$\pm$0.47&N/A&N/A\\
        &Asia-M &1.00$\pm$0.00&4.00$\pm$0.00&8.00$\pm$0.00&4.80$\pm$0.39&3.00$\pm$0.00&2.00$\pm$0.00&\cellcolor{gray!25}\textbf{0.00$\pm$0.00}&\cellcolor{gray!25}\textbf{0.00$\pm$0.00}&\cellcolor{gray!25}\textbf{0.00$\pm$0.00}&3.00$\pm$0.00\\
        \parbox[t]{2mm}{\multirow{-5}{*}{\rotatebox[origin=c]{90}{$N=10000$}}}&Child&6.00$\pm$3.04&3.00$\pm$0.00&12.2$\pm$1.46&11.6$\pm$0.48&14.4$\pm$0.48&2.80$\pm$0.84&5.00$\pm$2.64&\cellcolor{gray!25}\textbf{1.00$\pm$0.00}&N/A&N/A\\
        &Neuropathic&10.00$\pm$0.00&6.00$\pm$0.00&\cellcolor{gray!25}\textbf{1.00$\pm$0.00}&10.0$\pm$0.00&10.0$\pm$0.00&3.00$\pm$0.00&10.00$\pm$0.00&\cellcolor{gray!25}\textbf{1.00$\pm$0.00}&N/A&N/A\\
        \bottomrule
    \end{tabular}
    }
    \vspace{-4pt}
    \caption{\footnotesize Comparison with causal discovery methods, showing mean and std dev of $D_{top}$ over 3 runs. (For the Neuropathic subgraph (1k samples), PC Algorithm returns cyclic graphs in the MEC). Human experiments not conducted for Neuropathic, Child (due to feasibility issues) and Asia; hence rows marked as N/A.}
    \label{tab maintable}
    \vspace{-10pt}
\end{table*}

\subsection{Using Causal Order for downstream applications}
\noindent \textit{\textbf{Causal Discovery.}}
Table \ref{tab maintable} presents the  $D_{top}$ results of using the causal order obtained from the triplet method (both using LLMs and humans) to assist causal discovery methods. We compare our approach using the triplet method with well-known causal discovery methods: PC~\citep{pc}, SCORE~\citep{score}, ICA-LiNGAM~\citep{shimizu2006linear}, Direct-LiNGAM~\citep{shimizu2011directlingam}, NOTEARS~\citep{zheng2018dags}, and CaMML~\citep{wallace1996causal} across five different sample sizes: $250, 500, 1000, 5000, 10000$ (complete results in Table~\ref{tab maintable appendix}). Among the discovery algorithms, we find that PC and CaMML perform the best, with the lowest $D_{top}$ across all datasets. We hence studied 4 variants of using the causal order with discovery algorithms: 
PC+Human, CaMML+Human, PC+LLM, and CaMML+LLM. 
The results show that using expert-provided causal order improves $D_{top}$ across our experiments consistently. 
Specifically, the improvement (reduction) in $D_{top}$ when using our approach is larger at lower sample sizes. 
This indicates that obtaining causal order from imperfect experts like humans and LLMs can help with causal discovery in limited sample settings. 
 While the results on the BNLearn datasets may be impacted by memorization, we obtain consistent results on the less popular Neuropathic dataset that requires nuanced medical knowledge. 

\textit{\noindent \textbf{Causal Effect Inference.}}
Table \ref{tab_causal_effect} presents the results of using  the causal order obtained from the triplet method to compute average causal effect (ACE). We report the error in ACE $\epsilon_{ACE}$ across the same set of methods and datasets as above. The obtained causal order shows unanimous improvement in performance across the studies, especially when using the causal order from CaMML+LLM. Following Proposition \ref{propn validadjustment}, we use all variables that precede the treatment variable in estimated topological order as the adjustment set. 
Once the adjustment set is identified, the causal effect is estimated using the DoWhy library~\citep{sharma2020dowhy} and linear regression as the estimator. Table~\ref{tab:backdoorvstopological} compares the causal effects estimated using this approach versus minimal backdoor set adjustment in the Asia dataset, showing minimal differences.
\begin{table}[H]
\centering
\resizebox{\textwidth}{!}{
\begin{tabular}{cc|cccccc|cc}
         \toprule
         \textbf{Dataset} & \textbf{Metric: $\epsilon_{ACE}$}&\textbf{PC}&\textbf{SCORE}& \textbf{ICA} & \textbf{Direct}&\textbf{NOTEARS}&\textbf{CaMML}&\textbf{Ours}&\textbf{Ours}\\
         &(Treatment, Target)&&&\textbf{LiNGAM}&\textbf{LiNGAM}&&&\textbf{(LLM+PC)}&\textbf{(LLM+CaMML)}\\
         \midrule
        Earthquake&(JohnCalls,alarm)&\cellcolor{gray!25}$\mathbf{0.00 \pm 0.00}$&$0.85 \pm 0.02$&$0.63 \pm 0.10$&$0.63 \pm 0.10$&$0.21 \pm 0.12$&$0.08 \pm 0.03$&\cellcolor{gray!25}$\mathbf{0.00 \pm 0.00}$&\cellcolor{gray!25}$\mathbf{0.00 \pm 0.00}$\\
        Cancer&(dyspnoea,cancer)&$0.20 \pm 0.01$&$0.30 \pm 0.00$&$0.30 \pm 0.01$&$0.30 \pm 0.01$&$0.18 \pm 0.02$&$0.06 \pm 0.00$&$0.30 \pm 0.00$&\cellcolor{gray!25}$\mathbf{0.00 \pm 0.00}$\\
        Survey&(T,E)&$0.02 \pm 0.00$&$0.04 \pm 0.00$&$0.05 \pm 0.01$&$0.05 \pm 0.01$&$0.03 \pm 0.00$&$0.03 \pm 0.00$&$0.02 \pm 0.01$&\cellcolor{gray!25}$\mathbf{0.01 \pm 0.01}$\\
        Asia&(smoke,dyspnoea)&$0.10 \pm 0.00$&$0.09 \pm 0.00$&$0.27 \pm 0.03$&$0.27 \pm 0.04$&$0.14 \pm 0.01$&$0.05 \pm 0.00$&$0.02 \pm 0.00$&\cellcolor{gray!25}$\mathbf{0.00 \pm 0.00}$\\
        Child&(Lung Parench,&$0.22 \pm 0.01$&$0.02 \pm 0.00$&$0.52 \pm 0.00$&$0.52 \pm 0.00$&$0.52 \pm 0.07$&$0.01 \pm 0.00$&$0.22 \pm 0.00$&\cellcolor{gray!25}$\mathbf{0.00 \pm 0.00}$\\  
        &Lowerbody O2)&&&&&&&&\\ 
        \bottomrule
    \end{tabular}
    }
\vspace{-6pt}
\caption{\footnotesize Comparison of causal effect inference with existing methods, showing mean and std dev of error in Average Causal Effect ($\epsilon_{ACE}$) of a variable on another, over 3 runs.} 
\vspace{-16pt}
\label{tab_causal_effect}
\end{table}

\begin{table}[H]
    \centering
    \resizebox{\textwidth}{!}{
    \begin{tabular}{c|cccccc|cc}
         \toprule
         &\textbf{PC}& \textbf{SCORE} & \textbf{ICA} & \textbf{Direct} &\textbf{NOTEARS}&\textbf{CaMML}&\textbf{Ours}&\textbf{Ours}\\
         &&&\textbf{LiNGAM}&\textbf{LiNGAM}&&&\textbf{(LLM+PC)}&\textbf{(LLM+CaMML)}\\
         \midrule
        $N=250$&4.00$\pm$0.00&6.00$\pm$0.00&13.0$\pm$6.16&10.0$\pm$0.00&9.00$\pm$0.00&10.4$\pm$1.95&\cellcolor{gray!25}\textbf{3.00$\pm$0.00}&5.00$\pm$0.00\\
        $N=10000$&10.00$\pm$0.00&6.00$\pm$0.00&\cellcolor{gray!25}\textbf{1.00$\pm$0.00}&10.0$\pm$0.00&10.0$\pm$0.00&3.00$\pm$0.00&10.00$\pm$0.00&\cellcolor{gray!25}\textbf{1.00$\pm$0.00}\\
        \bottomrule
    \end{tabular}
    }
    \vspace{-6pt}
    \caption{\footnotesize Performance on causal discovery for the \textit{Neuropathic} dataset subgraph (1k samples), showing mean and std dev of $D_{top}$ over 3 runs.}
    \label{tab_neuropathic}
    \vspace{-10pt}
\end{table}

\vspace{-10pt}
\section{Concluding Discussion}
\vspace{-8pt}
Obtaining reliable knowledge from imperfect experts is challenging. We presented causal order as a suitable output interface to elicit causal knowledge from imperfect experts like LLMs and human annotators. 
Compared to the full graph, we showed that causal order is a more stable quantity to elicit from imperfect experts since it avoids making a distinction between direct and indirect effects. We also proposed a novel triplet-based method to query experts for obtaining the causal order. 


\textit{Limitations.} While LLMs can provide causal order for relationships over known variables, we do not expect them to be useful for completely novel nodes and causal relationships.  Separately, causal order may not be sufficient for tasks such as  counterfactual estimation~\citep{janzing2019causalstructurebasedroot} that require the graph structure for estimating functional equations. 
For both these cases, a viable method may be to obtain causal order over known relationships from LLMs and use it as a prior /constraint for existing discovery methods to obtain the full causal graph.


\newpage
\section*{Ethical Impact and Reproducibility}
\textit{Ethical Statement.} All datasets used in our work are publicly available and are accurate to the best of our knowledge. We made best efforts to compare against contemporary benchmarks in a fair manner. There may be no direct harmful impact, especially considering our causal order is only a pre-processing steps for downstream algorithms. However, since LLMs may be used in our approach, suitable prudence may be necessary to avoid ill-effects in applications.

\textit{Reproducibility.} Our methods are fairly straightforward, and implementation details are already included in our paper descriptions. Our code is publicly available at \url{https://github.com/AniketVashishtha/Causal_Order_Imperfect_Experts}

\bibliography{iclr2025_conference}
\bibliographystyle{iclr2025_conference}
\appendix


\setcounter{table}{0}
\renewcommand{\thetable}{A\arabic{table}}

\setcounter{figure}{0}
\renewcommand{\thefigure}{A\arabic{figure}}

\section*{Appendix}
\vspace{-9pt}
In this appendix, we include the following additional information, which we could not include in the main paper due to space constraints:
\vspace{-5pt}
\begin{itemize}[leftmargin=*]
 \setlength\itemsep{-0.1em}
    \item Appendix \ref{sec triplet illustration}: Illustration of our triplet query strategy
    \item Appendix \ref{sec proof}: Proofs of propositions
    \item Appendix \ref{appendix-discussion-causalorder}: Practicality of the adjustment set obtained using the Causal Order
    \item Appendix \ref{appendix-sec-algos}: Algorithms to integrate causal order into existing discovery methods
    \item Appendix \ref{sec experimental setup}: Additional results, including LLMs used in post-processing for graph discovery and a discussion of triplet vs pairwise query strategies
    \item Appendix \ref{sec appendix query strategies}: More details and examples of our query strategies
     \item Appendix \ref{sec graphs used}: Causal graphs used in our experiments of the datasets    
\end{itemize}

\vspace{-7pt}
\section{Illustration of our Triplet Query Strategy}
\label{sec triplet illustration}
\vspace{-5pt}
We present an intuitive illustration of our overall triplet querying framework to obtain causal order from imperfect experts in Fig \ref{fig:overall_framework} below.
\begin{figure}[h]
  \centering
    \includegraphics[scale=0.4]{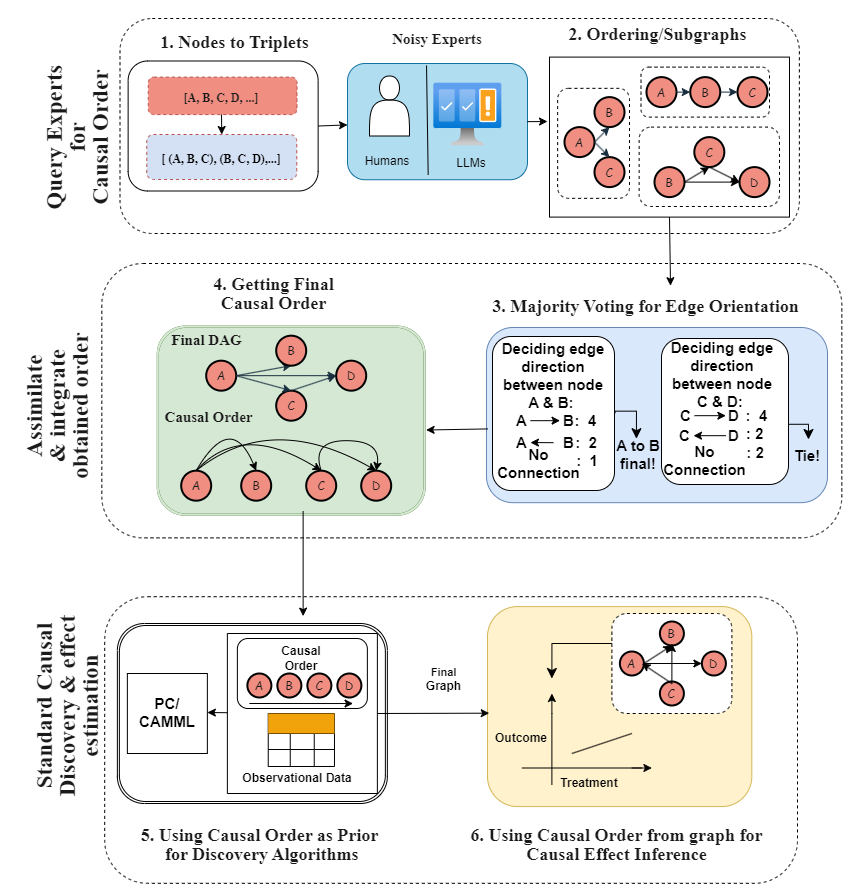}
    \vspace{-6pt}
    \caption{\footnotesize \textbf{Leveraging Causal Order from Imperfect Experts.}  Our triplet-based querying method infers all three-variable subgraphs from imperfect experts and aggregates them (using majority voting) to produce a causal order. 
    Ties in causal order are broken using a high-cost expert. Expert-generated causal order is integrated with discovery algorithms, before estimating causal effect.}
    \vspace{-10pt}
    \label{fig:overall_framework}
\end{figure}

\vspace{-7pt}
\section{Proofs of Propositions}
\label{sec proof}
\vspace{-5pt}
To estimate $\mathbb{E}[X_j|do(X_i=x_i)]$ from observational data, the \textit{backdoor adjustment} formula is used.
\begin{definition}
\label{def:backdooradjustment}
    \textbf{Backdoor Adjustment}~\citep{pearl2009causality}. Given a DAG $\mathcal{G}$, a set of variables $\mathbf{Z}$ satisfies the backdoor criterion relative to a pair of treatment and target variables ($X_i, X_j$) if (i) no variable in $\mathbf{Z}$ is a descendant of $X_i$; and (ii) $\mathbf{Z}$ blocks every path between $X_i$ and $X_j$ that contains an arrow into $X_i$.
\end{definition}
\propositionorder*
\vspace{-5pt}
\begin{proof}
\label{proof:propositionorder}
\underline{First claim ($D_{top}(\pi(\mathcal{\hat{G}}),A)=0$)}: By definition, the Perfect Expert adds new edges that are not present in the true G, but cannot miss predicting a ground truth edge. This implies that all edges between any two level $i, j$ where $i <j$ that are present in the ground truth graph $\mathcal{G}$ are also present in the estimated graph $\mathcal{\hat{G}}$. 
Given any two nodes $X^{l}_1$ and $X^{l}_2$ with the same level order ``$l$" in the true causal graph.
Since there is no directed path between $X^{l}_1$ and $X^{l}_2$, the perfect expert will never predict any edge between them (using  Def.~\ref{def:perfectexpert}).
Combining these two observations, the level order of both the graphs $\mathcal{\hat{G}}$ and $\mathcal{G}$ remains the same. 
Next, we will use the following lemma that states that if the level order of two graphs remains the same then the topological order remains the same thus completing the proof of the first claim. 
\begin{lemma}
    Given two DAG $\mathcal{G}_{1}$ and $\mathcal{G}_{2}$ have same level order (see Def.~\ref{def: level order}) then there exist two topological order $\pi(\mathcal{G}_{1})$ and $\pi(\mathcal{G}_{2})$ corresponding to the two DAG s.t. the ordered set $\pi(\mathcal{G}_{1}) = \pi(\mathcal{G}_{2})$. 
\end{lemma}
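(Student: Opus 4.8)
The plan is to show that two DAGs with identical level-order assignments admit a common topological order, by constructing one explicit total order that is simultaneously valid for both graphs. The key observation is that the level order already partitions the (shared) vertex set $\mathbf{X}$ into disjoint blocks $L_0, L_1, \dots, L_m$, where $L_k$ is the set of vertices at level $k$; since the two graphs have the same level order, these blocks are the same set-theoretically. So the first step is to fix this common partition and declare a candidate order $\pi$ by listing all vertices of $L_0$ first (in some arbitrary but fixed order), then all vertices of $L_1$, and so on, breaking ties within a block arbitrarily but using the \emph{same} tie-breaking rule for both graphs. This produces a single ordered permutation of $\mathbf{X}$.

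Next I would verify that $\pi$ is a topological order for $\mathcal{G}_1$ (the argument for $\mathcal{G}_2$ is identical). Take any edge $X_i \to X_j \in \mathbf{E}_1$. I need $\pi_i < \pi_j$, and since vertices in earlier level-blocks come strictly before vertices in later level-blocks under $\pi$, it suffices to show $\mathrm{level}(X_i) < \mathrm{level}(X_j)$ in $\mathcal{G}_1$. This is exactly the monotonicity property of level order along edges: by Definition~\ref{def: level order}, the level of a vertex is the length of the longest directed path to it from a source, so an edge $X_i \to X_j$ forces the longest path into $X_j$ to be at least one longer than the longest path into $X_i$, hence $\mathrm{level}(X_j) \ge \mathrm{level}(X_i) + 1 > \mathrm{level}(X_i)$. (One should note that $X_i$ and $X_j$ cannot be in the same block, since within a block there are no directed paths between vertices — this is implicitly what makes the tie-breaking within a block harmless.) Therefore every edge of $\mathcal{G}_1$ respects $\pi$, and by the same reasoning every edge of $\mathcal{G}_2$ respects $\pi$; so $\pi$ serves as $\pi(\mathcal{G}_1)$ and $\pi(\mathcal{G}_2)$ simultaneously, giving the claimed equality of ordered sets.

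The only subtle point — and the step I would be most careful about — is justifying that an edge endpoint's level strictly increases, which relies on the precise definition of level order via \emph{longest} directed paths from level-$0$ sources; if one instead used a shortest-path or a "one-more-than-max-parent" definition the same conclusion holds, but the bookkeeping differs slightly, so I would state explicitly which definition is in force and cite Definition~\ref{def: level order}. Everything else is routine: the block structure is inherited directly from the hypothesis that the level orders coincide, and the within-block arbitrariness is freely available because no edges (indeed no directed paths) exist between same-level vertices in either graph.
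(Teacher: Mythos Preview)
Your proposal is correct and follows essentially the same approach as the paper: both proofs construct a common topological order by listing vertices level by level (using the shared level partition) and appeal to the fact that no edges exist between vertices at the same level. Your version is more carefully spelled out—in particular you explicitly justify the strict level increase along any edge via the longest-path characterization—whereas the paper simply asserts that same-level nodes have no edges and concatenates the levels, but the underlying argument is identical.
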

\begin{proof}
Since the level order is the same for both the graphs, all the nodes on a given level $``l"$ for both graphs $\mathcal{G}_{1}$ and $\mathcal{G}_{2}$ are the same. Now, any two nodes on the same level don't have any edges between them. Thus add all the nodes on the level in the same order to both $\pi(\mathcal{G}_{1}) = \pi(\mathcal{G}_{2})$. Thus when we are done adding the nodes from all the levels in the topological order we get the  $\pi(\mathcal{G}_{1}) = \pi(\mathcal{G}_{2})$.
\end{proof}

\underline{Second claim ($\text{SHD}>0$)}: Recall that SHD counts the number of missing, falsely detected, and falsely directed edges in the estimated causal graph as compared to the ground truth graph. 
Since the perfect expert correctly predicts all the ground truth edges, there are no \emph{falsely directed} or \emph{missing} edges in the predicted graph. From Def.~\ref{def:perfectexpert}, when queried over all $\Mycomb[|\mathbf{X}|]{2}$ pairs of nodes the perfect expert will add additional (falsely directed) edges between a node and all its descendants. Thus total number of falsely directed edges = $\sum_{i=1}^{|\mathbf{X}|}|de(X_i)|-|ch(X_i)|=$ SHD.

\end{proof}

\propositionone*
\vspace{-3pt}
\begin{proof}
Before starting the proof, we define a confounding variable. A confounder is a variable that should be casually associated with both the treatment and the target variables and is not on the causal pathway between treatment and target. An unmeasured common cause can also be a source of confounding the treatment $\rightarrow$ target relationship. Coming to the proof, we need to show that the set $\mathbf{Z}=\{X_k|\pi_k<\pi_i\}$ satisfies the conditions (i) and (ii) in Defn~\ref{def:backdooradjustment}. For any variable $X_k$ such that $\pi_k<\pi_i$, we have $X_k\not \in de(X_i)$ and hence the condition (i) is satisfied. Additionally, for each $X_k\in pa(X_i)$ we have $\pi_k<\pi_i$ and hence $pa(X_i)\subseteq \mathbf{Z}$. Since $pa(X_i)$ blocks all paths from $X_i$ to $X_j$ that contains an arrow into $X_i$~\citep{sidscore}, $\mathbf{Z}$ satisfies condition (ii). 
\end{proof}

\propositiontwo*
\vspace{-3pt}
\begin{proof}
    The statement of proposition is of the form $A\iff B$ with $A$ being ``$D_{top} (\hat{\pi}, A)=0$'' and $B$ being ``$\mathbf{Z}=\{X_k|\hat{\pi}_k<\hat{\pi}_i\}$ is a valid adjustment set relative to $(X_i,X_j), \ \forall i, j$''. We prove $A\iff B$ by proving (i) $A\implies B$ and (ii) $B\implies A$.

    (i) Proof of $A\implies B$: If $D_{top} (\hat{\pi}, A)=0$, for all pairs of nodes $(X_i, X_j)$, we have $\hat{\pi}_i<\hat{\pi}_j$ whenever $\pi_i<\pi_j$. That is, causal order in estimated graph is same that of the causal order in true graph. Hence, from Propn~\ref{propn validadjustment}, $\mathbf{Z}=\{X_k|\hat{\pi}_k<\hat{\pi}_i\}$ is a valid adjustment set relative to $(X_i,X_j), \ \forall i, j$.

    (ii) Proof of $B\implies A$: we prove the logical equivalent form of $B\implies A$ i.e., $\neg A\implies \neg B$, the \textit{contrapositive} of $B\implies A$. To this end, assume $D_{top} (\hat{\pi}, A)\neq 0$, then there will be at least one edge $X_i\rightarrow X_j$ that cannot be oriented correctly due to the estimated topological order $\hat{\pi}$. i.e., $\hat{\pi}_j<\hat{\pi}_i$ but $\pi_j>\pi_i$. Hence, to find the causal effect of $X_i$ on $X_l;\ l\neq j$, $X_j$ is included in the back-door adjustment set $\mathbf{Z}$ relative to $(X_i, X_l)$. Adding $X_j$ to $\mathbf{Z}$ renders $\mathbf{Z}$ an invalid adjustment set because it violates the condition (i) of Defn~\ref{def:backdooradjustment}.
\end{proof}

\begin{assumption}[DAG Acyclicity]
\label{assm:acyclicity_constraint}
Given that $\epsilon$-expert $\mathcal{E}_{\epsilon}$ is used to predict a causal graph between a set of nodes, the predicted causal graph is acyclic. 
\end{assumption}

\begin{assumption}[Error Distribution and Probability Renormalization]
\label{assm:prob_renorm}
Let $[c_1,c_2,c_3]$ be the three choices for a causal relationship between node $A$ and $B$ (see Def~\ref{def:epsilon_expert}). 
Let $P(c_1), P(c_2)$ and $P(c_3)$ be the probability of selecting the corresponding three choices by the $\epsilon$-expert $\mathcal{E}_{\epsilon}$.
We assume that the probability for the two wrong options are equally likely, i.e., equal to $\epsilon/2$.
If any constraint $\mathcal{T}$ renders some of the choices as not possible i.e. $P(c_{j}|\mathcal{T})=0$ for some $j\in\{1,2,3\}$, then $\mathcal{E}_{\epsilon}$  renormalizes the posterior probability over the other choices,i.e., $P(c_{i}|\mathcal{T}) = \frac{P(c_{i})}{\sum_{j, P(c_j|\mathcal{T})\neq 0}P(c_j)}$ where the denominator is summed over $j$ s.t. $P(c_j|\mathcal{T})\neq 0$. 
\end{assumption}

\propositiontriplet*
\begin{proof}
\label{proof:prop_triplet}
Without any additional constraint, $\epsilon$-expert ($\mathcal{E}_{\epsilon}$) has ``$\epsilon$" probability of making incorrect prediction. But in presence of additional constraint, e.g. DAG constraint (see Assm~\ref{assm:acyclicity_constraint}), the probability of error changes and is given by the following lemma:

\begin{lemma}
\label{lemma:prob_renorm}
Suppose we have two nodes $A$ and $B$ and three possible choices $[c_1,c_2,c_3]$ for causal relationship between them i.e $A\rightarrow B$, $B\rightarrow A$ or no edge between them (not in any particular order). Without loss of generality, let $c_3$ be the ground truth causal relationship between node $A$ and $B$. Thus, without any additional constraint, let the probability assigned to each of the three choices by $\epsilon$-expert ($\mathcal{E}_{\epsilon}$) is $P(c_1)=\epsilon_{1},P(c_2)=\epsilon_{2}$ and $P(c_3)=1-\epsilon_{1}-\epsilon_{2}$ respectively where $\epsilon=\epsilon_{1}+\epsilon_{2}$. If due to additional constraint (e.g. acyclicity Assm~\ref{assm:acyclicity_constraint}), one of the incorrect choice gets discarded, say $c_1$, then the new probability of selecting the wrong choice ($c_2$ given by $\epsilon^{'}$) is always less than $\epsilon$. However if the correct/ground truth choice is discarded due to this additional constraint the new probability of selecting the wrong choice ($c_1$ or $c_2$) is $1$. In case, no options are discarded the new probability of choosing the wrong choice remains same i.e $\epsilon$ as before.  
\end{lemma}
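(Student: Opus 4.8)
I would split the argument into the three mutually exclusive cases determined by which choice (if any) the acyclicity constraint $\mathcal{T}$ removes from $\{c_1,c_2,c_3\}$: handle the two degenerate cases by inspection, and reduce the remaining case to a single algebraic inequality.

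First, the two easy cases. If $\mathcal{T}$ removes nothing, Assumption~\ref{assm:prob_renorm} leaves the posterior equal to the prior, so the error probability stays $P(c_1)+P(c_2)=\epsilon_1+\epsilon_2=\epsilon$. If $\mathcal{T}$ removes the ground-truth choice $c_3$, then only the incorrect choices $c_1,c_2$ survive and, whatever the renormalized weights, the expert is wrong with probability $1$. Neither case needs any computation.

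The real content is the case where $\mathcal{T}$ removes exactly one incorrect choice; say $c_1$ (the argument for $c_2$ is identical by relabeling). Here I would apply the renormalization formula of Assumption~\ref{assm:prob_renorm} to the surviving pair $\{c_2,c_3\}$ to obtain the new error probability
\begin{equation*}
\epsilon' = P(c_2\mid\mathcal{T}) = \frac{P(c_2)}{P(c_2)+P(c_3)} = \frac{\epsilon_2}{\epsilon_2+(1-\epsilon_1-\epsilon_2)} = \frac{\epsilon_2}{1-\epsilon_1},
\end{equation*}
and then verify $\epsilon' < \epsilon$. Since $1-\epsilon_1>0$, clearing denominators shows this is equivalent to $\epsilon_2 < (\epsilon_1+\epsilon_2)(1-\epsilon_1)$, i.e. to $\epsilon_1(1-\epsilon_1-\epsilon_2) > 0$, i.e. to $\epsilon_1\,P(c_3) > 0$. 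This is exactly where the hypotheses enter: Assumption~\ref{assm:prob_renorm} makes the two wrong options equally likely, so $\epsilon_1=\epsilon/2>0$ because $\epsilon\in(0,1)$ (Definition~\ref{def:epsilon_expert}), and likewise $P(c_3)=1-\epsilon>0$; hence the product is strictly positive and $\epsilon'<\epsilon$.

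I do not expect a genuine obstacle here — the lemma is essentially a one-line computation. The one thing to be careful about is that the \emph{strict} inequality really requires $\epsilon_1>0$ and $\epsilon<1$ (dropping the equal-likelihood convention would permit $\epsilon_1=0$ and yield only $\epsilon'=\epsilon$), so the write-up should explicitly invoke Assumption~\ref{assm:prob_renorm} and Definition~\ref{def:epsilon_expert} at that step. With the lemma in hand, the intended use in Proposition~\ref{prop:triplet} is to average $\epsilon'$ over the equally likely DAGs on $\{A,B,C\}$ compatible with the already-predicted edges $(C,A),(C,B)$: in each such configuration the acyclicity constraint either removes none of the three $A$–$B$ options or removes exactly one incorrect option, so every term in the average is $\le\epsilon$ and at least one is $<\epsilon$, giving a marginalized error strictly below $\epsilon$.
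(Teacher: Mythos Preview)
Your proof of the lemma is correct and essentially identical to the paper's: both split into the three cases, compute $\epsilon'=\epsilon_2/(1-\epsilon_1)$ via Assumption~\ref{assm:prob_renorm}, and reduce $\epsilon'<\epsilon$ to $\epsilon_1(1-\epsilon_1-\epsilon_2)>0$ (the paper writes it as $\epsilon_1(\epsilon_1+\epsilon_2-1)<0$, the same thing).

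One caution about your final paragraph on the intended use in Proposition~\ref{prop:triplet}: it is \emph{not} true that the acyclicity constraint on $(A,B)$ ``either removes none of the three options or removes exactly one incorrect option.'' When the expert's predictions for $(C,A)$ and $(C,B)$ are themselves wrong, the induced constraint can forbid the \emph{correct} $A$--$B$ relation (e.g.\ if the true graph has $A\to B$ but the expert has already predicted $B\to C\to A$, then $A\to B$ is ruled out and the conditional error is $1$). The paper's Table~\ref{tab:error_prob_summary} records several such $P(T=0\mid F,S,G)=1$ entries, and the proof of Proposition~\ref{prop:triplet} only gets the marginal below $\epsilon$ after a nontrivial weighted average (Eq.~\ref{eq:triplet_error_sub}) in which the $\epsilon'$ terms outweigh the probability-$1$ terms. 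So the lemma is fine, but the sketch of how it feeds into the proposition needs revision.
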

\begin{proof}
For the case when the correct/ground truth choice i.e $c_3$ is discarded due to some constraint, the only left out choices are wrong choices i.e. $c_1$ and $c2$. Thus the probability of making error in selecting the correct choice is $1$. Next, for the case when one of the incorrect choice (here $c_1$ w.l.o.g) is discarded, we are left with one incorrect ($c_2$) and one correct choice ($c_3$). From Assm~\ref{assm:prob_renorm} once a particular option is discarded, the $\epsilon$-expert renormalizes the probability proportional to their initial probability. Thus the new probability ($\Tilde{P}(c_2)$) of choosing wrong option $c_2$ is:
\begin{equation}
\label{eqn:renorm_prob}
    \begin{aligned}
        \Tilde{P}(c_2) &= \frac{\epsilon_2}{1-\epsilon_1-\epsilon_2+\epsilon_2}=\frac{\epsilon_2}{1-\epsilon_1}= \frac{\epsilon/2}{1-\epsilon/2}=\frac{\epsilon}{2-\epsilon}\\
    \end{aligned}
\end{equation}
where $\epsilon_1=\epsilon_2=\epsilon/2$ from Assm~\ref{assm:prob_renorm}.
Next, we can show that $\Tilde{P}(c_2)<\epsilon$ completing our proof. To have $\Tilde{P}(c_2)<\epsilon$ we need:
\begin{equation}
\begin{aligned}
    \Tilde{P}(c_2) =& \frac{\epsilon_2}{1-\epsilon_1}  <\epsilon = \epsilon_1+\epsilon_2\\
    \implies& \epsilon_2  < \epsilon_1 + \epsilon_2 -\epsilon_1^{2}-\epsilon_1\epsilon_2\\
    \implies& \epsilon_1(\epsilon_1+\epsilon_2-1)<0\\
\end{aligned}
\end{equation}
which is always true since from Assm~\ref{assm:prob_renorm} we have $\epsilon_1>0$, $\epsilon_2>0$ and $1-\epsilon_1-\epsilon_2>0$.  
\end{proof}

Now, give any three nodes $A,B$ and $C$, Table~\ref{tab:all_triplet_graph} summarizes all possible \emph{partially completed} graph (henceforth partial graph) possible between those nodes. 
Each partially-completed DAG in Table~\ref{tab:all_triplet_graph} generated more DAG based on the orientation of the node $A$ and $B$. Specifically, each of the partial graph $1,2,3,4,5,7$ and 9 generated three  graphs ($A\rightarrow B$, $B\rightarrow A$ or no edge between $A$ and $B$) and partial graph 6 and 8 will give two DAG (one option is not possible to maintain acyclicity constraint). Thus overall we have 25 possible graphs.  
Our next goal is to show that the  marginal probability of choosing the wrong causal relationship for node $(A,B)$ when oriented last among is less than $\epsilon$, where marginalization is over all the causal graph depicted in Table~\ref{tab:all_triplet_graph} (assuming all graphs are equally likely). 
The expert $\mathcal{E}_{\epsilon}$ finds the causal relationship sequentially for all the pairs in $\{(C,A),(C,B),(A,B)\}$. We are interested in the case when $\mathcal{E}_{\epsilon}$ finds the causal relationship for pair $(A,B)$ in the end. 
Let $F,S,T$ (called first,second and third) be three binary random variable and the value $0$ represent whether the causal relationship discovered by $\mathcal{E}_{\epsilon}$ for first, second or last/third pair respectively is incorrect and $1$ represent it is correct. 
So the probability of error when finding the causal relationship between node $A$ and $B$ when oriented last/third (denoted by $P(T)$) is given by:
\vspace{-6pt}
\begin{equation}
\label{eq:third_error_master}
\begin{aligned}
    P(T=0) &= \sum_{G\in \mathcal{G}} \sum_{S,T\in \{0,1\} \times \{0,1\}} P(G) P(F,S|G) P(T=0|F,S,G)\\
    &=\frac{1}{25}\cdot \sum_{G\in \mathcal{G}} \sum_{S,T\in \{0,1\} \times \{0,1\}} P(F,S|G) P(T=0|F,S,G)
\vspace{-6pt}
\end{aligned}
\end{equation}
where $\mathcal{G}$ denotes the set of graphs generated by orienting the causal relationship between $A$ and $B$ for all \emph{partial} graphs in Table~\ref{tab:all_triplet_graph}, $|\mathcal{G}|=25$ and all the graphs are equally likely, different configuration of $(F,S)$ shows whether the causal relationship between first two pairs ($C,A$) and $(C,B)$ are correct or not. When orienting the first two pair of nodes i.e $(C,A)$ and $(C,B)$ there is no DAG constraint thus we have:
\begin{equation}
    P(F,S) = \begin{cases}
        \epsilon^{2} \quad \quad \quad\quad  \text{when}\>\> S=0,T=0\\
        \epsilon(1-\epsilon) \quad\>\> \text{when}\>\> S=0,T=1 \\
         \epsilon(1-\epsilon) \quad\>\> \text{when}\>\> S=1,T=0 \\
          (1-\epsilon)^{2} \quad\>\> \text{when}\>\> S=1,T=1 \\
    \end{cases}
\end{equation}
Now based on the graph $G\in \mathcal{G}$ and the setting of $S,T$, $P(T=0|F,S,G)$ takes different values.
Suppose that the causal relationship between the first two pairs  $(C,A)$ and $(C,B)$ are already predicted by the expert. We observe that the DAG acyclcity constraint (Assm~\ref{assm:prob_renorm}) will only change the probability of error for orienting nodes $(A,B)$ ($P(T=0|F,S,G)$ given by Lemma~\ref{lemma:prob_renorm}) when the predicted causal graphs is either $B\rightarrow C\rightarrow A$ or $A\rightarrow C \rightarrow B$ after orienting $(C,A)$ and $(C,B)$. For all the other predictions of $(C,A)$ and $(C,B)$, they don't enforce any acyclicity constant for finding the causal relationship between  $(A,B)$, thus, $P(T=0|F,S,G) = \epsilon$ (from Lemma~\ref{lemma:prob_renorm}). 
Table~\ref{tab:error_prob_summary} summarizes of error probability for all the partial graphs in Table~\ref{tab:all_triplet_graph} ($P(F,S|G)$ and $P(T|F,S,G)$). The first column shows different partial graphs from Table~\ref{tab:all_triplet_graph}. The second column then shows different causal relationships that are possible between the nodes $A$ and $B$ for a particular partial graph. Given one \emph{true orientation} between node $A$ and $B$ we get a final ground truth graph. Thus the third column shows the probability of prediction of structure $A\leftarrow C \rightarrow B$ for a particular true graph and the fourth column shows the probability of making an error in predicting the third causal relationship i.e between $(A,B)$ given the first and second pair $(C,A)$ and ($(C,B)$) is already predicted. Similarly, the fifth and sixth columns show the same thing for the predicted structure $A\rightarrow C \rightarrow B$ for each of the ground truth graphs. The partial-graph number $4,7,8$ is not depicted in the table but the entries for $4^{th}$ graph is the same as $2^{nd}$, $7^{th}$ is the same as $3^{rd}$ and $8^{th}$ is same as $6^{th}$ due to symmetry in the partial-structure. The value of $\epsilon^{'}=\frac{\epsilon}{2-\epsilon}$ in $4^{th}$ and $6^{th}$ column is given by renormalized probability Eq.~\ref{eqn:renorm_prob} in Lemma~\ref{lemma:prob_renorm}.
Substituting the values from Table~\ref{tab:error_prob_summary} in Eq.~\ref{eq:third_error_master} and using the value $P(T|F,S,G)=\epsilon$ for the rest of the predicted structure not mentioned in the Table~\ref{tab:error_prob_summary} we get:
\begin{equation}
\label{eq:triplet_error_sub}
\begin{aligned}
    P(T=0) = \frac{1}{25}\cdot \Bigg{\{}& 2*\frac{\epsilon^{2}}{4}\Big{[}2\epsilon^{'}+1\Big{]} + \Big{[}1-2*\frac{\epsilon^{2}}{4}\Big{]}\epsilon \\
    +&\Bigg{(}\Big{[}\frac{\epsilon^{2}}{4} + \frac{\epsilon(1-\epsilon)}{2}\Big{]} \Big{[}2\epsilon^{'}+1\Big{]}+\Big{[}1-\frac{\epsilon^{2}}{4}-\frac{\epsilon(1-\epsilon)}{2}\Big{]}\epsilon\Bigg{)}*4\\
    +&\Bigg{(}2*\frac{\epsilon(1-\epsilon)}{2}\Big{[}2\epsilon^{'}+1\Big{]}+\Big{[}1-2*\frac{\epsilon(1-\epsilon)}{2}\Big{]}\epsilon\Bigg{)}*2\\
    +&\Bigg{(}(1-\epsilon)^{2}\Big{[}2\epsilon^{'}\Big{]} + \frac{\epsilon^{2}}{4}\Big{[}\epsilon^{'}+1\Big{]}+\Big{[}1-(1-\epsilon)^{2}-\frac{\epsilon^{2}}{4}\Big{]}\epsilon\Bigg{)}*2 \Bigg{\}}\\
    =\frac{1}{25}\cdot\Bigg{\{}& \frac{\epsilon(3\epsilon^{2}-30\epsilon+52)}{4-2\epsilon}\Bigg{\}}\\
\end{aligned}
\end{equation}
Now we want to show that the error probability for the third pair $(A,B)$ given by the above equation is less than $\epsilon$. For that, we need:
\begin{equation}
\begin{aligned}
\frac{1}{25}\cdot\Bigg{\{} \frac{\epsilon(3\epsilon^{2}-30\epsilon+52)}{4-2\epsilon}\Bigg{\}} &< \epsilon\\
3\epsilon^{2}+20\epsilon-48 &<0\\
\end{aligned}
\end{equation}
The above inequality is always satisfied since $\epsilon\in (0,1)$ and $3\epsilon^{2}+20\epsilon-48$ is always less than $0$ in the allowed range of $\epsilon$ since the roots of the quadratic equation are $-10/3-2\sqrt{61}/3=-8.5$ and $-10/3+2\sqrt{61}/3=1.87$. Thus $P(T=0)<\epsilon$ for all values of $\epsilon \in (0,1)$ completing our proof.

\end{proof}

\begin{table}
    \centering
    \resizebox{0.5\textwidth}{!}{  
    \begin{tabular}{c|c|c}
            \begin{tikzpicture}
                \node[draw, circle] (A) at (0,0) {A};
                \node[draw, circle] (C) at (0.75,0.75) {C};
                \node[draw, circle] (B) at (1.5,0) {B};

                \node[anchor=north west] at (current bounding box.north west) {1.};
                
                \draw[dashed,-] (A) -- (B) node[midway, below]{};
            \end{tikzpicture}
            &
            \begin{tikzpicture}
                \node[draw, circle] (A) at (0,0) {A};
                \node[draw, circle] (C) at (0.75,0.75) {C};
                \node[draw, circle] (B) at (1.5,0) {B};

                \node[anchor=north west] at (current bounding box.north west) {2.};
                
                \draw[->] (C) -- (B) node[midway, above]{};
                \draw[dashed,-] (A) -- (B) node[midway, below]{};
            \end{tikzpicture}
            &
            \begin{tikzpicture}
                \node[draw, circle] (A) at (0,0) {A};
                \node[draw, circle] (C) at (0.75,0.75) {C};
                \node[draw, circle] (B) at (1.5,0) {B};

                \node[anchor=north west] at (current bounding box.north west) {3.};
                \draw[<-] (C) -- (B) node[midway, above]{};
                \draw[dashed,-] (A) -- (B) node[midway, below]{};
            \end{tikzpicture}
            \\
        \hline 
        \rule{0pt}{50pt}
            \begin{tikzpicture}
                \node[draw, circle] (A) at (0,0) {A};
                \node[draw, circle] (C) at (0.75,0.75) {C};
                \node[draw, circle] (B) at (1.5,0) {B};

                \node[anchor=north west] at (current bounding box.north west) {4.};
                
                \draw[->] (C) -- (A) node[midway, above]{};
                \draw[dashed,-] (A) -- (B) node[midway, below]{};
            \end{tikzpicture}
            &
            \begin{tikzpicture}
                \node[draw, circle] (A) at (0,0) {A};
                \node[draw, circle] (C) at (0.75,0.75) {C};
                \node[draw, circle] (B) at (1.5,0) {B};

                \node[anchor=north west] at (current bounding box.north west) {5.};
                
                \draw[->] (C) -- (A) node[midway, above]{};
                \draw[->] (C) -- (B) node[midway, above]{};
                \draw[dashed,-] (A) -- (B) node[midway, below]{};
            \end{tikzpicture}
            &
            \begin{tikzpicture}
                \node[draw, circle] (A) at (0,0) {A};
                \node[draw, circle] (C) at (0.75,0.75) {C};
                \node[draw, circle] (B) at (1.5,0) {B};

                \node[anchor=north west] at (current bounding box.north west) {6.};
                
                \draw[->] (C) -- (A) node[midway, above]{};
                \draw[<-] (C) -- (B) node[midway, above]{};
                \draw[dashed,-] (A) -- (B) node[midway, below]{};
            \end{tikzpicture}
            \\
        \hline 
        \rule{0pt}{50pt}
            \begin{tikzpicture}
                \node[draw, circle] (A) at (0,0) {A};
                \node[draw, circle] (C) at (0.75,0.75) {C};
                \node[draw, circle] (B) at (1.5,0) {B};

                \node[anchor=north west] at (current bounding box.north west) {7.};
                
                \draw[<-] (C) -- (A) node[midway, above]{};
                \draw[dashed,-] (A) -- (B) node[midway, below]{};
            \end{tikzpicture}
            &
            \begin{tikzpicture}
                \node[draw, circle] (A) at (0,0) {A};
                \node[draw, circle] (C) at (0.75,0.75) {C};
                \node[draw, circle] (B) at (1.5,0) {B};

                \node[anchor=north west] at (current bounding box.north west) {8.};
                
                \draw[<-] (C) -- (A) node[midway, above]{};
                \draw[->] (C) -- (B) node[midway, above]{};
                \draw[dashed,-] (A) -- (B) node[midway, below]{};
            \end{tikzpicture}
            &
            \begin{tikzpicture}
                \node[draw, circle] (A) at (0,0) {A};
                \node[draw, circle] (C) at (0.75,0.75) {C};
                \node[draw, circle] (B) at (1.5,0) {B};

                \node[anchor=north west] at (current bounding box.north west) {9.};
                
                \draw[<-] (C) -- (A) node[midway, above]{};
                \draw[<-] (C) -- (B) node[midway, above]{};
                \draw[dashed,-] (A) -- (B) node[midway, below]{};
            \end{tikzpicture}
            \\
    \end{tabular}
    }
\caption{\footnotesize All possible causal graph between three variables $A,B$ and $C$. The dashed arrow represented undecided causal relationship between node $A$ and $B$. So, the dashed arrow can take one of three choices $A\rightarrow B$, $A\leftarrow B$ or no edge between $A$ and $B$. To ensure that the graph is acyclic, some of the graphs above might not allow all three choice for causal relationship between node $A$ and $B$. Hence the causal-graph $1,2,3,4$ and $7$ each have three possible graphs and $5,6,8$ and $9$ each have two possible graphs based on the valid choice of causal relationship between $A$ and $B$ that preserves acyclicity constraint. So overall there are 25 possible different causal graph between three variables $A$, $B$ and $C$.}
\label{tab:all_triplet_graph}
\end{table}

\begin{table}[t]
\centering
\begin{tabularx}{0.93\textwidth}{c *{6}{Y}}
\toprule
 &  & \multicolumn{4}{c}{Predicted Orientation in first two steps $(F,S)$} \\
 & True Orientation &  \multicolumn{2}{c}{$A\leftarrow C \leftarrow B$} & \multicolumn{2}{c}{$A\rightarrow C \rightarrow B$} \\
Partial True Graph  & $(A,B)$  &  $P(F,S|G)$ & $P(T|F,S,G)$ & $P(F,S|G)$ & $P(T|F,S,G)$ \\
\midrule 
\multirow{3}{*}{
\begin{tikzpicture}
                \node[draw, circle] (A) at (0,0) {A};
                \node[draw, circle] (C) at (0.75,0.75) {C};
                \node[draw, circle] (B) at (1.5,0) {B};

                \node[anchor=north west] at (current bounding box.north west) {1.};
                
                \draw[dashed,-] (A) -- (B) node[midway, below]{};
            \end{tikzpicture}
} & no edge & \multirow{3}{*}{$\Big{(}\frac{\epsilon}{2}\Big{)}^{2}$} & $\epsilon^{'}$ & \multirow{3}{*}{$\Big{(}\frac{\epsilon}{2}\Big{)}^{2}$} & $\epsilon^{'}$ \\
& $A\rightarrow B$ & & 1 & & $\epsilon^{'}$\\
& $A\leftarrow B$ & & $\epsilon^{'}$  & & 1\\
\addlinespace
\midrule 
\multirow{3}{*}{
\begin{tikzpicture}
                \node[draw, circle] (A) at (0,0) {A};
                \node[draw, circle] (C) at (0.75,0.75) {C};
                \node[draw, circle] (B) at (1.5,0) {B};

                \node[anchor=north west] at (current bounding box.north west) {2.};
                
                \draw[->] (C) -- (B) node[midway, above]{};
                \draw[dashed,-] (A) -- (B) node[midway, below]{};
            \end{tikzpicture}
} & no edge & \multirow{3}{*}{$\Big{(}\frac{\epsilon}{2}\Big{)}^{2}$} & $\epsilon^{'}$ & \multirow{3}{*}{$\Big{(}\frac{\epsilon}{2}\Big{)} (1-\epsilon)$} & $\epsilon^{'}$ \\
& $A\rightarrow B$ & & 1 & & $\epsilon^{'}$\\
& $A\leftarrow B$ & & $\epsilon^{'}$  & & 1\\
\addlinespace
\midrule 
\multirow{3}{*}{
\begin{tikzpicture}
                \node[draw, circle] (A) at (0,0) {A};
                \node[draw, circle] (C) at (0.75,0.75) {C};
                \node[draw, circle] (B) at (1.5,0) {B};

                \node[anchor=north west] at (current bounding box.north west) {3.};
                \draw[<-] (C) -- (B) node[midway, above]{};
                \draw[dashed,-] (A) -- (B) node[midway, below]{};
            \end{tikzpicture}
} & no edge & \multirow{3}{*}{$\Big{(}\frac{\epsilon}{2}\Big{)} (1-\epsilon)$} & $\epsilon^{'}$ & \multirow{3}{*}{$\Big{(}\frac{\epsilon}{2}\Big{)}^{2}$} & $\epsilon^{'}$ \\
& $A\rightarrow B$ & & 1 & & $\epsilon^{'}$\\
& $A\leftarrow B$ & & $\epsilon^{'}$  & & 1\\
\addlinespace
\midrule 
\multirow{3}{*}{
\begin{tikzpicture}
                \node[draw, circle] (A) at (0,0) {A};
                \node[draw, circle] (C) at (0.75,0.75) {C};
                \node[draw, circle] (B) at (1.5,0) {B};

                \node[anchor=north west] at (current bounding box.north west) {5.};
                
                \draw[->] (C) -- (A) node[midway, above]{};
                \draw[->] (C) -- (B) node[midway, above]{};
                \draw[dashed,-] (A) -- (B) node[midway, below]{};
            \end{tikzpicture}
} & no edge & \multirow{3}{*}{$\Big{(}\frac{\epsilon}{2}\Big{)} (1-\epsilon)$} & $\epsilon^{'}$ & \multirow{3}{*}{$\Big{(}\frac{\epsilon}{2}\Big{)} (1-\epsilon)$} & $\epsilon^{'}$ \\
& $A\rightarrow B$ & & 1 & & $\epsilon^{'}$\\
& $A\leftarrow B$ & & $\epsilon^{'}$  & & 1\\
\addlinespace
\midrule 
\multirow{3}{*}{
\begin{tikzpicture}
                \node[draw, circle] (A) at (0,0) {A};
                \node[draw, circle] (C) at (0.75,0.75) {C};
                \node[draw, circle] (B) at (1.5,0) {B};

                \node[anchor=north west] at (current bounding box.north west) {6.};
                
                \draw[->] (C) -- (A) node[midway, above]{};
                \draw[<-] (C) -- (B) node[midway, above]{};
                \draw[dashed,-] (A) -- (B) node[midway, below]{};
            \end{tikzpicture}
} & no edge & \multirow{3}{*}{$(1-\epsilon)^{2}$} & $\epsilon^{'}$ & \multirow{3}{*}{$\Big{(}\frac{\epsilon}{2}\Big{)}^{2} $} & $\epsilon^{'}$ \\
& $A\leftarrow B$ & & $\epsilon^{'}$  & & 1\\
 & & & & & \\
\addlinespace
\midrule 
\multirow{3}{*}{
\begin{tikzpicture}
                \node[draw, circle] (A) at (0,0) {A};
                \node[draw, circle] (C) at (0.75,0.75) {C};
                \node[draw, circle] (B) at (1.5,0) {B};

                \node[anchor=north west] at (current bounding box.north west) {9.};
                
                \draw[<-] (C) -- (A) node[midway, above]{};
                \draw[<-] (C) -- (B) node[midway, above]{};
                \draw[dashed,-] (A) -- (B) node[midway, below]{};
            \end{tikzpicture}
} & no edge & \multirow{3}{*}{$\Big{(}\frac{\epsilon}{2}\Big{)} (1-\epsilon)$} & $\epsilon^{'}$ & \multirow{3}{*}{$\Big{(}\frac{\epsilon}{2}\Big{)} (1-\epsilon)$} & $\epsilon^{'}$ \\
& $A\rightarrow B$ & & 1 & & $\epsilon^{'}$\\
& $A\leftarrow B$ & & $\epsilon^{'}$  & & 1\\
\addlinespace
\bottomrule
\end{tabularx}
\caption{\footnotesize Summary of Error Probability for all the partial graphs in Table~\ref{tab:all_triplet_graph} ($P(F,S|G)$ and $P(T|F,S,G)$): The first column shows different partial graphs from Table~\ref{tab:all_triplet_graph}. The second column then shows different causal relationships that are possible between the nodes $A$ and $B$ for a particular partial graph. Given one \emph{true orientation} between node $A$ and $B$ we get a final ground truth graph. Now we observed in the proof of Proposition~\ref{prop:triplet} (see Proof~\ref{proof:prop_triplet}), that the error probability for the prediction of causal relationship for the pair $(A,B)$ will only change when the $\epsilon$-expert predicts the the structure $A\leftarrow C \rightarrow B$ or $A\rightarrow C \rightarrow B$ for the pair of nodes $(C,A)$ and $(C,B)$ for any ground truth graph. For the rest of the possible predictions of a pair of nodes $(C,A)$ and $(C,B)$ in any ground truth graph, the error probability for $(A,B)$ remains $\epsilon$ ( see Lemma~\ref{lemma:prob_renorm}). Thus the third column shows the probability of prediction of structure $A\leftarrow C \rightarrow B$ for a particular true graph and the fourth column shows the probability of making an error in predicting the third causal relationship i.e between $(A,B)$ given the first and second pair $(C,A)$ and ($(C,B)$) is already predicted. Similarly, the fifth and sixth columns show the same thing for the predicted structure $A\rightarrow C \rightarrow B$ for each of the ground truth graphs. The partial-graph number $4,7,8$ is not depicted in the table but the entries for $4^{th}$ graph is the same as $2^{nd}$, $7^{th}$ is the same as $3^{rd}$ and $8^{th}$ is same as $6^{th}$ due to symmetry in the partial-structure. The value of $\epsilon^{'}=\frac{\epsilon}{2-\epsilon}$ in $4^{th}$ and $6^{th}$ column is given by renormalized probability Eq.~\ref{eqn:renorm_prob} in Lemma~\ref{lemma:prob_renorm}.}
\label{tab:error_prob_summary}
\end{table}

\newpage

\section{Practicality of the adjustment set obtained using the Causal Order}
\label{appendix-discussion-causalorder}

Including variables appearing before treatment (in the causal order) is actually a widespread practice in biomedical and social science empirical studies. In these studies, such variables are called "pre-treatment variables" and a common practice is to condition on all of them. For this reason, we do not think that our proposal is impractical. The importance of Prop. \ref{propn validadjustment} is to show the utility of the causal order to identify such a commonly used adjustment set.

For example, refer to the Covariate selection chapter \citep{sauer2013review} by Sauer, Brookhart, Roy and Vanderwheele in a User Guide ("Developing a Protocol for Observational Comparative Effectiveness Research"). In the section on \textit{"Adjustment for all observed pre-treatment covariates"}, they mention the widely used propensity score adjustment and write, "The greatest importance is often placed on balancing all pretreatment covariates." They also add that while theoretically colliders can bias the result, "in practice, pretreatment colliders are likely rarer than ordinary confounding variables.".

Further, when unobserved confounding cannot be ruled out (as is the case with most observational studies), evidence is not clear on whether we should include all pre-treatment covariates or select a few, especially because the true graph may be unknown. ``Strong arguments exist for error on the side of overadjustment (adjusting for instruments and colliders) rather than failing to adjust for measured confounders (underadjustment). Nevertheless, adjustments for instrumental variables have been found to amplify bias in practice". As the last sentence suggests, note that we are not claiming that adjusting for all pre-treatment variables (variables before treatment in causal order) is always the correct approach; but rather showing that it can be practical in many situations.

Theoretically, of course, improvements to this causal order criterion are possible. Vanderweele and Shpitser (2011) \citep{VanderWeele2011ANC} cite the popular practice of using "all pre-treatment variables" and propose the Disjunctive Cause criterion as an improvement. This criterion states that if a pre-treatment variable causes the treatment, outcome, or both; then it should be included in the adjustment set. Note that this criterion---effectively including all pre-treatment ancestors of treatment and/or outcome---is quite close to the causal order-based criterion in our paper. Except for possibly conditioning on a collider in cases where there are unobserved variables in the graph (see Fig. 1 from \citep{VanderWeele2011ANC}), additional variables in the causal order adjustment superset will not have a significant effect on the estimate.

\section{Algorithms for Integrating Causal Order in Existing Discovery Methods}
\label{appendix-sec-algos}
\vspace{-9pt}
In continuation to the discussion in Sec \ref{sec methodology}, the algorithms for integrating causal order into existing constraint-based and score-based discovery methods are summarized in Algorithms \ref{algo:constraint plus expert} and \ref{algo:score plus expert} respectively.
\begin{figure} [H]
\begin{minipage}{0.4\textwidth}
       \begin{algorithm}[H]
\caption{\footnotesize Integrating $\hat{\pi}$ in constraint-based methods}
   \label{algo:constraint plus expert}
\begin{algorithmic}[1]
    \footnotesize
   \STATE {\bfseries Input:} 
   Noisy expert topological ordering $\hat{\pi}$, Expert $\mathcal{E}$, CPDAG $\hat{\mathcal{G}}$
    \STATE {\bfseries Output:} Estimated topological order $\hat{\pi}_{\text{final}}$ of $\{X_1,\dots,X_n\}$.

    \FOR{$(i-j)\in \text{undirected-edges} (\hat{\mathcal{G}})$}
        \STATE If both nodes $i$ and $j$ are in $\hat{\pi}$ and if $\hat{\pi}_i<\hat{\pi}_j$, orient $(i-j)$ as $(i\rightarrow j)$ in $\hat{\mathcal{G}}$.
        
        \STATE Otherwise, use expert $\mathcal{E}$ to orient the edge. 
    \ENDFOR
  \STATE{$\hat{\pi}_{\text{final}}=$ topological ordering of $\hat{\mathcal{G}}$}
 \STATE{return $\hat{\pi}_{\text{final}}$}
\end{algorithmic}
\end{algorithm}
    \end{minipage}
    \hfill
    \begin{minipage}{0.5\textwidth}
    \begin{center}       
    \begin{algorithm}[H]
\caption{\footnotesize Integrating $\hat{\pi}$ in score-based methods}
\label{algo:score plus expert}
\begin{algorithmic}[1]
\footnotesize
   \STATE {\bfseries Input:} Dataset $\mathcal{D}$, Variables $\{X_1,\dots,X_n\}$, Expert $\mathcal{E}$, Score-based method $\mathcal{S}$, \textit{Prior} probability $p$. 
    \STATE {\bfseries Output:} Estimated topological order $\hat{\pi}_{\text{final}}$ of $\{X_1,\dots,X_n\}$.
    \STATE{$\hat{\mathcal{G}} = \mathcal{E}(X_1,\dots,X_n)$}
    \STATE{\textit{L} = level order of $\hat{\mathcal{G}}$}
    \FOR{$\text{cycle } C \in \hat{\mathcal{G}}$}
            \FOR{$\text{node } \in C$}
                \STATE{$\text{L(node)} = \text{min(level(c) } \forall c \in C)$}
            \ENDFOR
    \ENDFOR
    \STATE{$\hat{\mathcal{G}} = \mathcal{S} (\mathcal{D}, L, p)$} \texttt{//L is provided as prior}
    \STATE{$\hat{\pi}_{\text{final}}=$ topological ordering of $\hat{\mathcal{G}}$}
 \STATE{return $\hat{\pi}_{\text{final}}$}
\end{algorithmic}
\end{algorithm}
\end{center}
    \end{minipage}
\end{figure}

\vspace{-9pt}
\section{Additional Results}
\label{sec experimental setup}
\subsection{Study on Downstream Tasks: Causal Discovery}
\label{causaldiscovery-completeresults}
In continuation to the results presented in Sec \ref{sec expts} of the main paper, we present the performance on the causal discovery task across all sample sizes in Table \ref{tab maintable appendix}. Evidently, as stated in the main paper, the results show that using expert-provided causal order improves $D_{top}$ across our experiments consistently. CaMML+Human/LLM yields benefits even at higher sample sizes. At a sample size of 10000, CaMML's $D_{top}$ for Child and Asia surpasses CaMML+LLM by three and fourfold respectively. In specific datasets like \textit{Survey} where the variables are better understood by humans, incorporating human priors to CaMML leads to consistently zero $D_{top}$, outperforming LLM output. 
\begin{table*}
    \centering
    \resizebox{\textwidth}{!}{
    \begin{tabular}{cc|cccccc|cccc}
         \toprule
         &\textbf{Dataset} & \textbf{PC}& \textbf{SCORE} & \textbf{ICA} & \textbf{Direct} &\textbf{NOTEARS}&\textbf{CaMML}&\textbf{Ours}&\textbf{Ours}&\textbf{Ours}&\textbf{Ours}\\
         &&&&\textbf{LiNGAM}&\textbf{LiNGAM}&&&\textbf{\small{(PC+LLM)}}&\textbf{\small{(CaMML+LLM)}}&\textbf{\small{(PC+Human)}}&\textbf{\small{(CaMML+Human)}}\\
         \midrule
        &Earthquake &0.16$\pm$0.28&4.00$\pm$0.00&1.00$\pm$0.00&1.00$\pm$0.00&1.00$\pm$0.00&2.00$\pm$0.00&\cellcolor{gray!25}\textbf{0.00$\pm$0.00}&\cellcolor{gray!25}\textbf{0.00$\pm$0.00}&\cellcolor{gray!25}\textbf{0.00$\pm$0.00}&1.00$\pm$0.00\\
        &Cancer &\cellcolor{gray!25}\textbf{0.00$\pm$0.00}&3.00$\pm$0.00&2.00$\pm$0.00&2.00$\pm$0.00&2.00$\pm$0.00&2.00$\pm$0.00&\cellcolor{gray!25}\textbf{0.00$\pm$0.00}&\cellcolor{gray!25}\textbf{0.00$\pm$0.00}&\cellcolor{gray!25}\textbf{0.00$\pm$0.00}&\cellcolor{gray!25}\textbf{0.00$\pm$0.00}\\
        &Survey &0.50$\pm$0.00&4.00$\pm$0.00&2.00$\pm$0.00&4.00$\pm$0.00&4.00$\pm$0.00&3.33$\pm$0.94&\cellcolor{gray!25}\textbf{0.00$\pm$0.00}&3.33$\pm$0.94&\cellcolor{gray!25}\textbf{0.00$\pm$0.00}&\cellcolor{gray!25}\textbf{0.00$\pm$0.00}\\
        &Asia &2.00$\pm$0.59&7.00$\pm$0.00&3.33$\pm$0.47&1.00$\pm$0.00&3.00$\pm$0.00&1.85$\pm$0.58&1.00$\pm$0.00&\cellcolor{gray!25}\textbf{0.97$\pm$0.62}&N/A&N/A\\
        &Asia-M &1.50$\pm$0.00&6.00$\pm$0.00&1.00$\pm$0.00&3.00$\pm$0.00&3.00$\pm$0.00&\cellcolor{gray!25}\textbf{1.00$\pm$0.00}&\cellcolor{gray!25}\textbf{1.00$\pm$0.00}&1.71$\pm$0.45&\cellcolor{gray!25}\textbf{1.00$\pm$0.00}&2.00$\pm$0.00\\
         \parbox[t]{2mm}{\multirow{-5}{*}{\rotatebox[origin=c]{90}{$N=250$}}}&Child &5.75$\pm$0.00&12.0$\pm$0.00&14.33$\pm$0.47&16.0$\pm$0.00&14.0$\pm$0.00&\cellcolor{gray!25}\textbf{3.00$\pm$0.00}&4.00$\pm$0.00&3.53$\pm$0.45&N/A&N/A\\
         &Neuropathic&4.00$\pm$0.00&6.00$\pm$0.00&13.0$\pm$6.16&10.0$\pm$0.00&9.00$\pm$0.00&10.4$\pm$1.95&\cellcolor{gray!25}\textbf{3.00$\pm$0.00}&5.00$\pm$0.00&N/A&N/A\\
         \midrule
        &Earthquake &0.75$\pm$0.25&4.0$\pm$0.0&1.0$\pm$0.0&1.0$\pm$0.0&1.0$\pm$0.0&\cellcolor{gray!25}\textbf{0.00$\pm$0.00}&\cellcolor{gray!25}\textbf{0.00$\pm$0.00}&\cellcolor{gray!25}\textbf{0.00$\pm$0.00}&\cellcolor{gray!25}\textbf{0.00$\pm$0.00}&\cellcolor{gray!25}\textbf{0.00$\pm$0.00}\\
        &Cancer &0.16$\pm$0.28&3.00$\pm$0.00&3.40$\pm$0.48&3.00$\pm$0.00&2.00$\pm$0.00&1.00$\pm$0.00&0.33$\pm$0.57&1.00$\pm$0.00&\cellcolor{gray!25}\textbf{0.00$\pm$0.00}&\cellcolor{gray!25}\textbf{0.00$\pm$0.00}\\
        &Survey &1.25$\pm$0.00&4.00$\pm$0.00&6.00$\pm$0.0&6.00$\pm$0.00&3.40$\pm$0.48&3.39$\pm$0.08&1.00$\pm$0.00&3.33$\pm$0.94&1.00$\pm$0.00&\cellcolor{gray!25}\textbf{0.00$\pm$0.00}\\
        &Asia &3.06$\pm$0.00&5.00$\pm$0.00&5.60$\pm$0.48&7.00$\pm$0.00&3.20$\pm$0.39&3.81$\pm$0.39&1.00$\pm$0.00&\cellcolor{gray!25}\textbf{0.97$\pm$0.62}&N/A&N/A\\
        &Asia-M &2.00$\pm$0.00&6.00$\pm$0.00&7.60$\pm$0.48&5.00$\pm$0.00&3.80$\pm$0.39&2.00$\pm$0.00&1.00$\pm$0.00&\cellcolor{gray!25}\textbf{0.17$\pm$0.45}&1.33$\pm$0.57&3.00$\pm$0.00\\
        \parbox[t]{2mm}{\multirow{-5}{*}{\rotatebox[origin=c]{90}{$N=500$}}}&Child &8.09$\pm$0.00&6.20$\pm$1.32&12.2$\pm$0.74&10.6$\pm$1.35&15.4$\pm$0.48&\cellcolor{gray!25}\textbf{2.00$\pm$0.00}&5.00$\pm$1.73&\cellcolor{gray!25}\textbf{2.00$\pm$0.00}&N/A&N/A\\
        &Neuropathic&7.50$\pm$0.00&6.00$\pm$0.00&9.00$\pm$1.41&13.0$\pm$0.00&11.0$\pm$0.00&\cellcolor{gray!25}\textbf{5.32$\pm$0.57}&8.00$\pm$0.00&7.49$\pm$0.64&N/A&N/A\\
        \midrule
        &Earthquake &0.50$\pm$0.86&4.00$\pm$0.00&2.80$\pm$0.39&3.00$\pm$0.00&1.00$\pm$0.00&0.80$\pm$0.97&\cellcolor{gray!25}\textbf{0.00$\pm$0.00}&\cellcolor{gray!25}\textbf{0.00$\pm$0.00}&\cellcolor{gray!25}\textbf{0.00$\pm$0.00}&\cellcolor{gray!25}\textbf{0.00$\pm$0.00}\\
        &Cancer &1.33$\pm$0.57&3.00$\pm$0.00&3.00$\pm$0.00&3.00$\pm$0.00&2.00$\pm$0.00&2.00$\pm$0.00&1.33$\pm$0.57&\cellcolor{gray!25}\textbf{0.00$\pm$0.00}&1.33$\pm$0.57&\cellcolor{gray!25}\textbf{0.00$\pm$0.00}\\
        &Survey &2.00$\pm$0.00&4.00$\pm$0.00&5.00$\pm$0.00&5.00$\pm$0.00&3.00$\pm$0.00&3.33$\pm$0.69&2.00$\pm$0.00&2.60$\pm$0.00&2.00$\pm$0.00&\cellcolor{gray!25}\textbf{0.00$\pm$0.00}\\
        &Asia &1.00$\pm$0.00&4.00$\pm$0.00&6.60$\pm$0.79&4.40$\pm$1.35&3.40$\pm$0.48&1.75$\pm$0.43&\cellcolor{gray!25}\textbf{0.00$\pm$0.00}&0.97$\pm$0.62&N/A&N/A\\
        &Asia-M &2.00$\pm$0.00&4.00$\pm$0.00&7.60$\pm$0.48&4.60$\pm$0.48&3.20$\pm$0.39&1.68$\pm$0.46&2.00$\pm$0.00&\cellcolor{gray!25}\textbf{0.00$\pm$0.00}&2.00$\pm$0.00&2.00$\pm$0.00\\
        \parbox[t]{2mm}{\multirow{-5}{*}{\rotatebox[origin=c]{90}{$N=5000$}}}&Child &8.25$\pm$0.00&3.00$\pm$0.00&12.6$\pm$0.79&10.8$\pm$1.72&14.2$\pm$0.40&\cellcolor{gray!25}\textbf{3.00$\pm$0.00}&7.00$\pm$0.00&\cellcolor{gray!25}\textbf{3.00$\pm$0.00}&N/A&N/A\\
        &Neuropathic&8.62$\pm$0.00&6.00$\pm$0.00&9.33$\pm$0.94&10.0$\pm$0.00&10.0$\pm$0.00&4.20$\pm$0.96&9.00$\pm$0.00&\cellcolor{gray!25}\textbf{1.23$\pm$0.42}&N/A&N/A\\
        \midrule
         &Earthquake&\cellcolor{gray!25}\textbf{0.00$\pm$0.00}&4.00$\pm$0.00&3.00$\pm$0.00&3.00$\pm$0.00&1.00$\pm$0.00&0.40$\pm$0.48&\cellcolor{gray!25}\textbf{0.00$\pm$0.00}&\cellcolor{gray!25}\textbf{0.00$\pm$0.00}&\cellcolor{gray!25}\textbf{0.00$\pm$0.00}&\cellcolor{gray!25}\textbf{0.00$\pm$0.00}\\
        &Cancer&2.00$\pm$0.00&3.00$\pm$0.00&3.00$\pm$0.00&3.00$\pm$0.00&2.00$\pm$0.00&0.60$\pm$0.80&2.00$\pm$0.00&\cellcolor{gray!25}\textbf{0.00$\pm$0.00}&2.00$\pm$0.00&\cellcolor{gray!25}\textbf{0.00$\pm$0.00}\\
        &Survey&2.00$\pm$0.00&4.00$\pm$0.00&5.00$\pm$0.00&5.00$\pm$0.00&3.00$\pm$0.00&3.60$\pm$1.35&2.00$\pm$0.00&1.83$\pm$0.00&2.00$\pm$0.00&\cellcolor{gray!25}\textbf{0.00$\pm$0.00}\\
        &Asia &1.5$\pm$0.00&4.00$\pm$0.00&6.00$\pm$0.00&4.40$\pm$1.35&3.00$\pm$0.00&1.40$\pm$0.48&\cellcolor{gray!25}\textbf{0.00$\pm$0.00}&0.34$\pm$0.47&N/A&N/A\\
        &Asia-M &1.00$\pm$0.00&4.00$\pm$0.00&8.00$\pm$0.00&4.80$\pm$0.39&3.00$\pm$0.00&2.00$\pm$0.00&\cellcolor{gray!25}\textbf{0.00$\pm$0.00}&\cellcolor{gray!25}\textbf{0.00$\pm$0.00}&\cellcolor{gray!25}\textbf{0.00$\pm$0.00}&3.00$\pm$0.00\\
        \parbox[t]{2mm}{\multirow{-5}{*}{\rotatebox[origin=c]{90}{$N=10000$}}}&Child&6.00$\pm$3.04&3.00$\pm$0.00&12.2$\pm$1.46&11.6$\pm$0.48&14.4$\pm$0.48&2.80$\pm$0.84&5.00$\pm$2.64&\cellcolor{gray!25}\textbf{1.00$\pm$0.00}&N/A&N/A\\
        &Neuropathic&10.00$\pm$0.00&6.00$\pm$0.00&\cellcolor{gray!25}\textbf{1.00$\pm$0.00}&10.0$\pm$0.00&10.0$\pm$0.00&3.00$\pm$0.00&10.00$\pm$0.00&\cellcolor{gray!25}\textbf{1.00$\pm$0.00}&N/A&N/A\\
        \bottomrule
    \end{tabular}
    }
    \caption{\footnotesize Comparison with causal discovery methods, showing mean and std dev of $D_{top}$ over 3 runs. (For the Neuropathic subgraph (1k samples), PC Algorithm returns cyclic graphs in the MEC). Human experiments not conducted for Neuropathic, Child (due to feasibility issues) and Asia; hence rows marked as N/A.}
    \label{tab maintable appendix}
    \vspace{-10pt}
\end{table*}

\subsection{$D_{top}$ vs SHD: Better Measure of Effect Estimation Error}
\label{shdvsdtop}
As discussed in Sec \ref{sec methodology} of the main paper, we show herein that $D_{top}$ has a strong correlation with effect estimation error and hence is a valid metric for effect inference. 

\begin{figure}[H]
  \centering
  \includegraphics[width=0.4\textwidth,height=3.25cm]{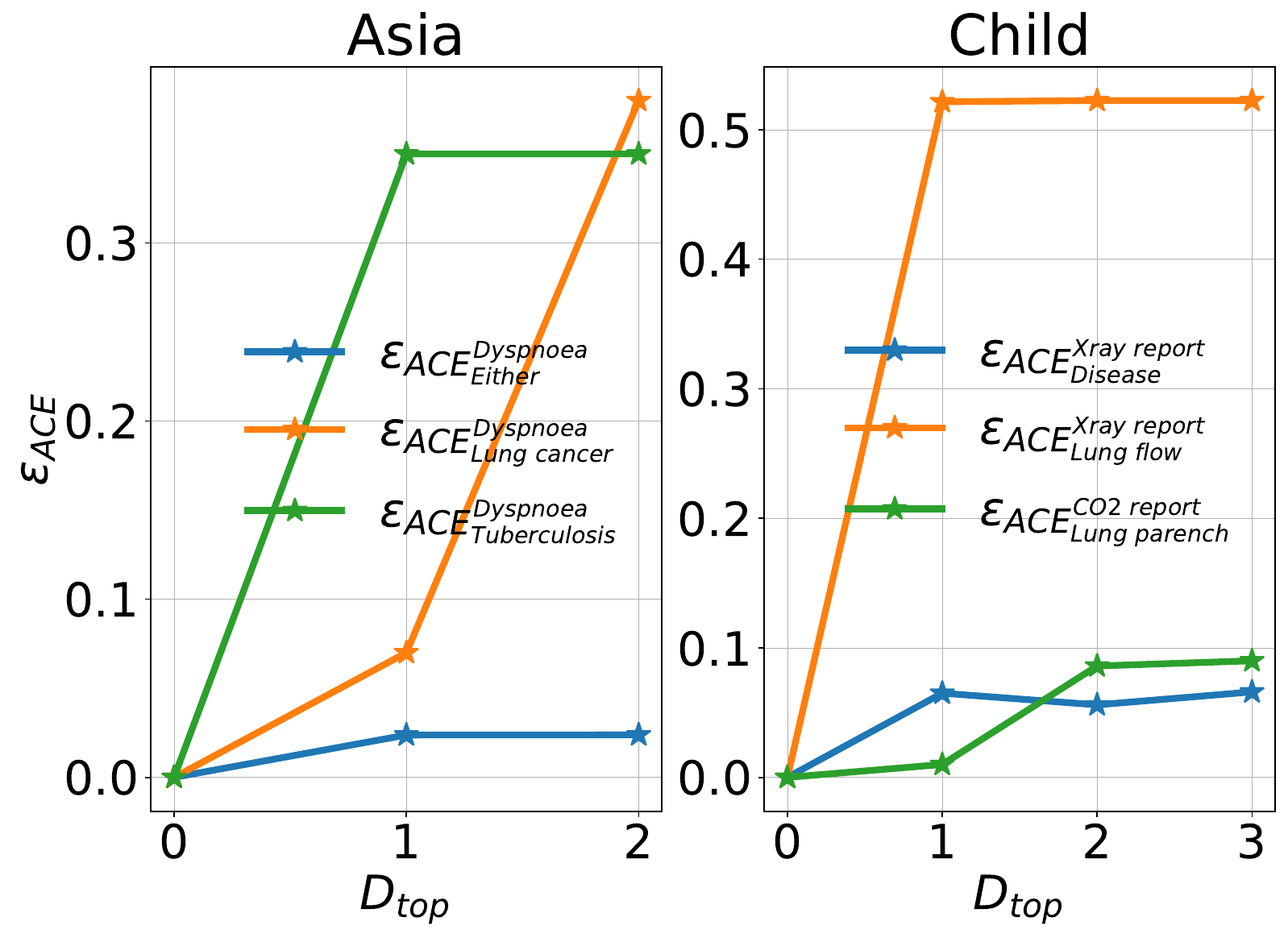}
  \vspace{-7pt}
  \caption{\footnotesize $D_{top}$ vs. $\epsilon_{ACE}$. $\epsilon_{ACE}$ increases as $D_{top}$ increases, aligning with theoretical observations.}
  \label{fig:ace vs shd and td}
\end{figure}

\begin{table}[h]
\vspace{-12pt}
\scriptsize
\centering
\begin{tabular}{c|c|c|c}
\midrule
\multicolumn{4}{c}{\textbf{Cancer}} \\
\midrule
$D_{top}=0$ &  & $SHD=2$ &  \\
\hline
$SHD$ & $\epsilon_{ACE}$ & $D_{top}$ & $\epsilon_{ACE}$ \\
\midrule
0 & 0.00 & 0 & 0.00 \\
2 & 0.00 & 1 & 0.25 \\
4 & 0.00 & 2 & 0.50 \\
\midrule
\multicolumn{4}{c}{\textbf{Asia}} \\
\midrule
$D_{top}=0$ &  & $SHD=3$ &  \\
\midrule
$SHD$ & $\epsilon_{ACE}$ & $D_{top}$ & $\epsilon_{ACE}$ \\
\midrule
0 & 0.00 & 1 & 0.14 \\
6 & 0.00 & 2 & 0.22 \\
10 & 0.00 & 3 & 0.57 \\
\midrule
\multicolumn{4}{c}{\textbf{Survey}} \\
\midrule
\multicolumn{2}{c|}{$D_{top}=0$} & \multicolumn{2}{c}{$SHD=2$}   \\
\midrule
$SHD$ & $\epsilon_{ACE}$ & $D_{top}$ & $\epsilon_{ACE}$ \\
\midrule
0 & 0.00 & 0 & 0.00 \\
2 & 0.00 & 1 & 0.25 \\
4 & 0.03 & 2 & 0.50 \\
\midrule
\end{tabular}
\vspace{-6pt}
\caption{\scriptsize $\epsilon_{ACE}$ vs $SHD$ given $D_{top}$ (\& $D_{top}$ given $SHD$)}
\label{tab:shd-dtop-ace}
\end{table}


\subsection{LLMs used in post processing for graph discovery}
We conducted some experiments where we utilised discovery algorithms like PC for creating skeletons of the graph and employed LLMs for orienting the undirected edges. The idea was to utilise LLMs ability to correctly estimate the causal direction while leveraging PC algorithm's ability to give a skeleton which could be oriented in a post processing setup. We saw that LLM ended up giving improved results as compared to PC alone.

\subsection{Triplet vs Pairwise Query Strategies}
\label{app:pairwise-triplet}
In continuation to the discussion in Sec \ref{sec expts} of the main paper, we include Tables \ref{tab:prompt comparison} for more details. The pairwise strategy also shows flaws when LLMs are used as noisy experts. In many cases, pairwise querying yields cycles due to which $D_{top}$ cannot be computed. In particular, for the Child dataset with 20 nodes, pairwise querying of LLMs yields an extremely high number of cycles (see Table \ref{tab:prompt comparison}). LLM output tends to overconnect, resulting in high SHD. Overall, among the prompting strategies, the chain of thought prompt performs the best: it has the lowest number of cycles for Child and Neuropathic datasets. This indicates that in-context examples and chain-of-thought reasoning help to increase the accuracy of causal order output, but other contextual cues do not matter.
\begin{table}[h]
\centering
\footnotesize

\begin{tabular}{c|cccc}
\toprule
\textbf{Dataset}&$\mathbf{D_{top}}$&\textbf{SHD}&\textbf{IN/TN}&\textbf{Cycles}\\
\midrule
\multicolumn{5}{c}{Base Prompt}\\
\midrule
 Earthquake&0&7&0/5&0\\
 Cancer&0&6&0/5&0\\
 Survey&3&12&0/6&0\\
 Asia&-&21&0/8&1\\
 Asia-M&-&15&0/7&7\\
 Child&-&177&0/20&$>>$3k\\
 Neuropathic&-&212&0/22&$>>$5k\\
\midrule
\multicolumn{5}{c}{All Directed Edges}\\
\midrule
Earthquake&1&9&0/5&0\\
Cancer&1&7&0/5&0\\
 Survey&2&11&0/6&0\\
 Asia&-&21&0/8&6\\
 Asia-M&0&13&0/7&0\\
 Child&-&139&0/20&$>>$300\\
 Neuropathic&-&194&0/22&$>>$1k\\
\midrule
\multicolumn{5}{c}{One Hop Iteration}\\
\midrule
Earthquake&0&8&0/5&0\\
Cancer&0&6&0/5&0\\
 Survey&3&12&0/6&0\\
 Asia&-&21&0/8&1\\
 Asia-M&0&14&0/7&0\\
 Child&-&167&0/20&$>>$400\\
 Neuropathic&-&204&0/22&$>>$4k\\
\bottomrule
\end{tabular}

\captionof{table}{Comparison of various querying strategies for only LLM-based setups, providing different contextual cues in each setup about the graph.  IN: Isolated Nodes, TN:Total  Nodes.}
\label{tab:prompt comparison}
\end{table}

\centering
\footnotesize

\begin{tabular}{c|cccc}
\toprule
\textbf{Dataset}&$\mathbf{D_{top}}$&\textbf{SHD}&\textbf{IN/TN}&\textbf{Cycles}\\
\midrule
\multicolumn{5}{c}{Chain of Thought}\\
\midrule
Earthquake&0&4&0/5&0\\
 Survey&1&9&2/6&0\\
 Asia&-&18&0/8&1\\
 Asia-M&-&13&0/7&1\\
 Child&-&138&0/20&$>>$500\\
 Neuropathic&-&64&0/22&5\\
\midrule
\multicolumn{5}{c}{Triplet Query}\\
\midrule
Earthquake&0&4&0/5&0\\
 Cancer&1&6&0/5&0\\
 Survey&0&9&0/6&0\\
 Asia&1&14&0/8&0\\
 Asia-M&1&11&0/7&0\\
 Child&-&138&0/20&391\\
 Child (+ Cycle Remover)&1&28&10/20&0\\
 Neuropathic&-&151&0/22&772\\
 Neuropathic(+ Cycle remover)&3&24&13/20&0\\
\bottomrule
\end{tabular}

\captionof{table}{Triplet query output using variable names with their descriptions (Cancer not included since CoT prompt has examples from this graph). IN: Isolated Nodes, TN:Total Nodes. Since calculating total number of cycles in a DAG is computationally challenging (NP Hard), we find a lower bound of cycles present in each graph based on total k lenght cycles in each setting, where k=5. If k is scaled up, the number of such unique cycles in the LLM output will also scale significantly. Lower bound helps us make a comparison with number of cycles in outputs like in Triplet strategy, where numbers are comparatively smaller and can be calculated easily.}
\label{tab: triplet prompt comparison}

\begin{table}[h]
\centering
\footnotesize

\begin{tabular}{c|cccc}
\toprule
\textbf{Dataset}&\textbf{SHD}&$\mathbf{D_{top}}$&\textbf{Cycles}&\textbf{IN/TN}\\
\midrule
\multicolumn{5}{c}{Base Prompt}\\
\midrule
Asia&18&1&0&0\\
Child&148&-&$>>$10k&0\\
Earthquake&4&0&0&0\\
Survey&7&-&1&0\\
Neuropathic&178&-&$>>$10k&0\\
Covid&33&-&15&0\\
Alzheimers&30&-&1&0\\
\bottomrule

\end{tabular}

\captionof{table}{Final result of using performing base pairwise querying strategy with GPT-4. These results show how using a superior model in pairwise querying does not lead to complete removal of cycles, further highlighting the impact of triplet strategy.}
\label{tab:pairwise_gpt4}
\end{table}

\vspace{1cm}

\begin{tabular}{l|c|c|c|c}
            \toprule
             \midrule
            \textbf{Dataset}&\textbf{Metric}&\textbf{Pairwise (Base)}&\textbf{Pairwise (CoT)}&\textbf{Triplet}\\
            \midrule
            \multicolumn{4}{c}{Using Phi-3}\\
            \midrule
            \multirow{4}{*}{Asia} & $D_{top}$ & - & 4 & \cellcolor{gray!25}\textbf{0}\\
            & SHD & 17 & 11 & \cellcolor{gray!25}\textbf{13} \\
            & Cycles & 1 & \cellcolor{gray!25}\textbf{0} & \cellcolor{gray!25}\textbf{0}\\
            & IN/TN & \cellcolor{gray!25}\textbf{1/8}& \cellcolor{gray!25}\textbf{0/8}& 1/8\\ \hline
            \multirow{4}{*}{Alzheimers} & $D_{top}$ & -& - & \cellcolor{gray!25}\textbf{7}\\
            & SHD & 28 & 28 & \cellcolor{gray!25}\textbf{25}\\
            & Cycles & 11 & 11 & \cellcolor{gray!25}\textbf{0}\\
            & IN/TN & \cellcolor{gray!25}\textbf{0/11} &\cellcolor{gray!25}\textbf{0/11}&\cellcolor{gray!25}\textbf{0/11}\\ \hline
            \multirow{4}{*}{Child} & $D_{top}$ & - & - & \cellcolor{gray!25}\textbf{17}\\
            & SHD & 142 & 80 & \cellcolor{gray!25}\textbf{69}\\
            & Cycles & >>10k & 59 & \cellcolor{gray!25}\textbf{0}\\
            & IN/TN & \cellcolor{gray!25}\textbf{0/20} & \cellcolor{gray!25}\textbf{0/20} & \cellcolor{gray!25}\textbf{0/20}\\ \hline
            \toprule
            \midrule
            \multicolumn{4}{c}{Using Llama3}\\
            \midrule
            \multirow{4}{*}{Asia} & $D_{top}$ & - & - & \cellcolor{gray!25}\textbf{2}\\
            & SHD & 22 & 23 & \cellcolor{gray!25}\textbf{17} \\
            & Cycles & 71 & 20 & \cellcolor{gray!25}\textbf{0}\\
            & IN/TN & \cellcolor{gray!25}\textbf{0/8}& \cellcolor{gray!25}\textbf{0/8}& \cellcolor{gray!25}\textbf{0/8}\\ \hline
            \multirow{4}{*}{Alzheimers} & $D_{top}$ & - & - & \cellcolor{gray!25}\textbf{5}\\
            & SHD & 41 & 29 & \cellcolor{gray!25}\textbf{24}\\
            & Cycles & 1144 & 7 & \cellcolor{gray!25}\textbf{0}\\
            & IN/TN & 1/11 & \cellcolor{gray!25}\textbf{0/11} & 1/11\\ \hline
            \multirow{4}{*}{Child} & $D_{top}$ & - & - & \cellcolor{gray!25}\textbf{12}\\
            & SHD & 167 & 151 & \cellcolor{gray!25}\textbf{129}\\
            & Cycles & >>10k & 71 & \cellcolor{gray!25}\textbf{0}\\
            & IN/TN & \cellcolor{gray!25}\textbf{0/20}& \cellcolor{gray!25}\textbf{0/20} & \cellcolor{gray!25}\textbf{0/20}\\ \hline
            \toprule 
\end{tabular}
\caption{{\scriptsize \underline{\textit{(Top)}} Results using Phi-3} \underline{\textit{(Bottom)}} Performance of triplet method using Llama3 (8b) models vs CoT pairwise vs base pairwise query strategy on multiple benchmark datasets across diff metrics: $D_{top}$, SHD, (Num of) Cycles, IN (Isolated Nodes), TN (Total Nodes). When num of cycles$>$0, $\hat{\pi}$ cannot be computed, hence $D_{top}$ is given by `-'. Triplet consistently outperforms the pairwise (base as well as CoT) strategy across metrics \& datasets, especially by significant amounts on larger graphs like \textit{Child}.}
\label{tab:llm_phi3_res}
\vspace{15pt}

\begin{table}[ht]
    \centering
    \scalebox{0.85}{
    \begin{tabular}{l|c|c|c|c|c|c|c}
        \hline
        \textbf{Graphs} & \textbf{Dtop} & \textbf{SHD} & \textbf{Cycles} & \textbf{Isolated Nodes} & \textbf{LLM Calls} & \textbf{Number of Nodes} & \textbf{Complexity} \\
        \hline
        \textbf{Quadruplet} & & & & & & & \\
        Asia        & 1 & 6  & 0 & 0 & 70  & 8  & $O(n^3)$ \\
        Covid       & 1 & 19 & 0 & 0 & 330 & 11 & $O(n^3)$ \\
        Alzheimers  & 5 & 14 & 0 & 0 & 330 & 11 & $O(n^3)$ \\
        \hline
        \textbf{Triplet} & & & & & & & \\
        Asia        & 1 & 14 & 0 & 0 & 286 & 8  & $O(n^4)$ \\
        Covid       & 0 & 30 & 0 & 0 & 165 & 11 & $O(n^4)$ \\
        Alzheimers  & 4 & 28 & 0 & 0 & 165 & 11 & $O(n^4)$ \\
        \hline
    \end{tabular}
    }
    \caption{Analyzing the performance differences between using triplets and quadruplets, we found no significant difference in the quality of the final graph output. However, the number of LLM API calls more than doubles when shifting from triplets to quadruplets, leading to a substantial increase in cost.}
    \label{tab:quadvstrip_analysis}
\end{table}

\begin{tabular}{llcc}
\hline
Dataset & Metric & Triplet GPT-4 & Triplet GPT-3.5-Turbo \\
\hline
Asia & $D_{top}$ & 0 & 1 \\
 & SHD & 10 & 14 \\
 & Cycles & 0 & 0 \\
 & IN/TN & 0/8 & 0/8 \\
\hline
Alzheimers & $D_{top}$ & 4 & 4 \\
 & SHD & 23 & 28 \\
 & Cycles & 0 & 0 \\
 & IN/TN & 0/11 & 0/11 \\
\hline
Child & $D_{top}$ & 1 & 1 \\
 & SHD & 24 & 28 \\
 & Cycles & 0 & 0 \\
 & IN/TN & 6/20 & 10/20 \\
\hline
\end{tabular}
\caption{Results of running GPT-4 for orienting triplet subgraphs, and then re-using GPT-4 for resolving clashes during merging phase. These results cover graph discovery on Asia, Alzheimers and Child graphs. Upgrading to a superior model (GPT-4) leads to better results for all three graphs on triplet strategy.}
\label{tab:triplet_gpt4}

\begin{table}[h]
    \centering
    \scalebox{0.85}{
    \begin{tabular}{l c c c}
        \hline
        \textbf{Graph} & \textbf{Sample size} & \textbf{Before LLM prior} & \textbf{After LLM Prior} \\
        \hline
        \multirow{5}{*}{\textbf{Child}} & 250 & 18 & 16 \\
        & 500 & 16 & 15 \\
        & 1000 & 14 & 13 \\
        & 5000 & 13.5 & 12 \\
        & 10000 & 9.66 & 6 \\
        \hline
        \multirow{5}{*}{\textbf{Earthquake}} & 250 & 3.83 & 3 \\
        & 500 & 3.6 & 3 \\
        & 1000 & 3.6 & 3 \\
        & 5000 & 1.16 & 0.66 \\
        & 10000 & 0 & 0 \\
        \hline
        \multirow{5}{*}{\textbf{Cancer}} & 250 & 1 & 0 \\
        & 500 & 3.83 & 3.83 \\
        & 1000 & 2.6 & 2.6 \\
        & 5000 & 2.3 & 2.3 \\
        & 10000 & 2 & 2 \\
        \hline
        \multirow{5}{*}{\textbf{Asia}} & 250 & 7.5 & 7 \\
        & 500 & 6 & 5 \\
        & 1000 & 7 & 7 \\
        & 5000 & 2 & 1 \\
        & 10000 & 2 & 1 \\
        \hline
        \multirow{5}{*}{\textbf{Asia-M}} & 250 & 4.5 & 4\\
        & 500 & 4 & 4 \\
        & 1000 & 5.5 & 5 \\
        & 5000 & 4 & 4 \\
        & 10000 & 4 & 4 \\
        \hline
        \multirow{5}{*}{\textbf{Neuropathic}} & 250 & 27 & 26 \\
        & 500 & 31 & 29 \\
        & 1000 & 41 & 40 \\
        & 5000 & 55 & 53 \\
        \hline
    \end{tabular}
    }
    \caption{Comparison of SHD Values Before and After Incorporating LLM Priors Using the PC Algorithm Across Various Graphs}
    \label{tab:shd_comparison_pc}
\end{table}

\begin{table}[ht]
    \centering
    \begin{tabular}{lcccc}
    \toprule
    \textbf{Dataset} & \textbf{Dtop} & \textbf{SHD} & \textbf{IN} & \textbf{Cycles} \\
    \midrule
    Alzheimers & 5 & 14 & 0 & 0 \\
    Covid & - & 36 & 0 & 1 \\
    \bottomrule
    \end{tabular}
    \caption{Results of a hybrid approach where the PC algorithm integrates an LLM-derived prior (GPT-4) obtained via BFS for Alzheimer's and COVID graphs. The prior directly provides edge orientations, which guide the initial graph structure, while PC subsequently orients remaining edges. Unlike triplet that used only causal order, this approach incorporates the full graph as a prior. The PC algorithm is further supported by a large observational dataset of 10,000 samples. The results show that PC + BFS (GPT-4) is also outperformed by Triplet method (GPT-3.5). Specifically, PC+BFS yields 1 cycle and higher SHD on Covid dataset. On the Alzheimers dataset, PC+BFS is comparable: it yields higher Dtop but a lower SHD.}
    \label{tab:PC+BFS}
\end{table}


Finally, the triplet prompt provides the most accurate causal order. For small-scale graphs, it produces no cycles and consistently produces minimal $D_{top}$ (ranging from 0 to 1) while also producing no isolated nodes. Even for medium-size graphs like Child and Neuropathic, the LLM output includes significantly fewer cycles than the pairwise strategy, which were removed leading to a significant and accurate causal order used further as prior. That said, we do see that isolated nodes in the output increase after cycles are removed for medium graphs (all graphs are connected, so outputting an isolated node is an error). Considering LLMs as virtual experts, this indicates that there are some nodes on which the LLM expert cannot determine the causal order. This is still a better tradeoff than providing the wrong causal order, which can confuse downstream algorithms. Overall, we conclude that the triplet query strategy provides the most robust causal order predictions. Additional results showing the error introduced by the LLM with respect to a ground truth order are shown in two different settings in Tables \ref{tab:oraclevsllm} and \ref{tab:backdoorvstopological}.

\vspace{1cm}

\begin{table}[H]
    \centering
    \scriptsize
    \begin{tabular}{l*{4}{S[table-format=2.2]}}
        \toprule
        \multicolumn{5}{c}{1000 samples}\\
        \midrule
        {Context} & {Base prompt} & {Past iteration} & {Markov Blanket} & {PC} \\
         & & {orientations} & & {(Avg. over MEC)} \\
        \midrule
        $D_{top}$ & 8.0 & 5.3 & 6.6 & 9.61 \\
        SHD & 14.33 & 12.66 & 14.0 & 17.0 \\
        \midrule
        \multicolumn{5}{c}{10000 samples}\\
        \midrule
        $D_{top}$ & 6.33 & 9.66 & 6.0 & 7.67 \\
        SHD & 9.0 & 13.33 & 8.33 & 12.0 \\
        \bottomrule
    \end{tabular}
    \caption{\footnotesize PC + LLM results where LLM is used to orient the undirected edges of the skeleton PC returns over different data sample sizes. We show how LLMs can be used in a post processing setup for edge orientation besides having the capability of acting as a strong prior for different discovery algorithms.}
    \label{tab:pcllm}
\end{table}

\begin{table*}[H]
    \centering
    \resizebox{\textwidth}{!}{
    \begin{tabular}{cc|cccccc|cccc}
         \toprule
         &\textbf{Dataset} & \textbf{PC}& \textbf{SCORE} & \textbf{ICA} & \textbf{Direct} &\textbf{NOTEARS}&\textbf{CaMML}&\textbf{Ours}&\textbf{Ours}&\textbf{Ours}&\textbf{Ours}\\
         &&&&\textbf{LiNGAM}&\textbf{LiNGAM}&&&\textbf{\small{(PC+LLM)}}&\textbf{\small{(CaMML+LLM)}}&\textbf{\small{(PC+Human)}}&\textbf{\small{(CaMML+Human)}}\\
         \midrule
        &Earthquake &0.16$\pm$0.28&4.00$\pm$0.00&3.20$\pm$0.39&3.00$\pm$0.00&1.80$\pm$0.74&2.00$\pm$0.00&\cellcolor{gray!25}\textbf{0.00$\pm$0.00}&\cellcolor{gray!25}\textbf{0.00$\pm$0.00}&\cellcolor{gray!25}\textbf{0.00$\pm$0.00}&1.00$\pm$0.00\\
        &Cancer &\cellcolor{gray!25}\textbf{0.00$\pm$0.00}&3.00$\pm$0.00&4.00$\pm$0.00&3.60$\pm$0.48&2.00$\pm$0.00&2.00$\pm$0.00&\cellcolor{gray!25}\textbf{0.00$\pm$0.00}&\cellcolor{gray!25}\textbf{0.00$\pm$0.00}&\cellcolor{gray!25}\textbf{0.00$\pm$0.00}&\cellcolor{gray!25}\textbf{0.00$\pm$0.00}\\
        &Survey &0.50$\pm$0.00&3.00$\pm$0.00&6.00$\pm$0.00&6.00$\pm$0.00&3.20$\pm$0.39&3.33$\pm$0.94&\cellcolor{gray!25}\textbf{0.00$\pm$0.00}&3.33$\pm$0.94&\cellcolor{gray!25}\textbf{0.00$\pm$0.00}&\cellcolor{gray!25}\textbf{0.00$\pm$0.00}\\
        &Asia &2.00$\pm$0.59&5.00$\pm$0.00&6.20$\pm$0.74&7.00$\pm$0.00&4.00$\pm$0.00&1.85$\pm$0.58&1.00$\pm$0.00&\cellcolor{gray!25}\textbf{0.97$\pm$0.62}&N/A&N/A\\
        &Asia-M &1.50$\pm$0.00&5.00$\pm$0.00&7.60$\pm$0.48&6.20$\pm$1.16&3.40$\pm$0.48&\cellcolor{gray!25}\textbf{1.00$\pm$0.00}&\cellcolor{gray!25}\textbf{1.00$\pm$0.00}&1.71$\pm$0.45&\cellcolor{gray!25}\textbf{1.00$\pm$0.00}&2.00$\pm$0.00\\
         \parbox[t]{2mm}{\multirow{-5}{*}{\rotatebox[origin=c]{90}{$N=250$}}}&Child &5.75$\pm$0.00&8.80$\pm$2.70&12.8$\pm$0.97&13.0$\pm$0.63&15.0$\pm$1.09&\cellcolor{gray!25}\textbf{3.00$\pm$0.00}&4.00$\pm$0.00&3.53$\pm$0.45&N/A&N/A\\
         &Neuropathic&4.00$\pm$0.00&6.00$\pm$0.00&13.0$\pm$6.16&10.0$\pm$0.00&9.00$\pm$0.00&10.4$\pm$1.95&\cellcolor{gray!25}\textbf{3.00$\pm$0.00}&5.00$\pm$0.00&N/A&N/A\\
        \midrule
         &Earthquake&\cellcolor{gray!25}\textbf{0.00$\pm$0.00}&4.00$\pm$0.00&3.00$\pm$0.00&3.00$\pm$0.00&1.00$\pm$0.00&0.40$\pm$0.48&\cellcolor{gray!25}\textbf{0.00$\pm$0.00}&\cellcolor{gray!25}\textbf{0.00$\pm$0.00}&\cellcolor{gray!25}\textbf{0.00$\pm$0.00}&\cellcolor{gray!25}\textbf{0.00$\pm$0.00}\\
        &Cancer&2.00$\pm$0.00&3.00$\pm$0.00&3.00$\pm$0.00&3.00$\pm$0.00&2.00$\pm$0.00&0.60$\pm$0.80&2.00$\pm$0.00&\cellcolor{gray!25}\textbf{0.00$\pm$0.00}&2.00$\pm$0.00&\cellcolor{gray!25}\textbf{0.00$\pm$0.00}\\
        &Survey&2.00$\pm$0.00&4.00$\pm$0.00&5.00$\pm$0.00&5.00$\pm$0.00&3.00$\pm$0.00&3.60$\pm$1.35&2.00$\pm$0.00&1.83$\pm$0.00&2.00$\pm$0.00&\cellcolor{gray!25}\textbf{0.00$\pm$0.00}\\
        &Asia &1.5$\pm$0.00&4.00$\pm$0.00&6.00$\pm$0.00&4.40$\pm$1.35&3.00$\pm$0.00&1.40$\pm$0.48&\cellcolor{gray!25}\textbf{0.00$\pm$0.00}&0.34$\pm$0.47&N/A&N/A\\
        &Asia-M &1.00$\pm$0.00&4.00$\pm$0.00&8.00$\pm$0.00&4.80$\pm$0.39&3.00$\pm$0.00&2.00$\pm$0.00&\cellcolor{gray!25}\textbf{0.00$\pm$0.00}&\cellcolor{gray!25}\textbf{0.00$\pm$0.00}&\cellcolor{gray!25}\textbf{0.00$\pm$0.00}&3.00$\pm$0.00\\
        \parbox[t]{2mm}{\multirow{-5}{*}{\rotatebox[origin=c]{90}{$N=10000$}}}&Child&6.00$\pm$3.04&3.00$\pm$0.00&12.2$\pm$1.46&11.6$\pm$0.48&14.4$\pm$0.48&2.80$\pm$0.84&5.00$\pm$2.64&\cellcolor{gray!25}\textbf{1.00$\pm$0.00}&N/A&N/A\\
        &Neuropathic&10.00$\pm$0.00&6.00$\pm$0.00&\cellcolor{gray!25}\textbf{1.00$\pm$0.00}&10.0$\pm$0.00&10.0$\pm$0.00&3.00$\pm$0.00&10.00$\pm$0.00&\cellcolor{gray!25}\textbf{1.00$\pm$0.00}&N/A&N/A\\
        \bottomrule
    \end{tabular}
    }
    \caption{\footnotesize Comprehensive expanded version of Tables \ref{tab maintable} and \ref{tab_neuropathic} in main paper: Comparison with existing discovery methods. Mean and std dev of $D_{top}$ over 3 runs. (For the Neuropathic subgraph (1k samples), PC Algorithm returns cyclic graphs and hence marked N/A). Human experiments were only conducted for Earthquake, Cancer, Survey, and Asia-M (due to feasibility issues), remaining rows are marked as N/A.}
    \label{tab maintable appendix}
    \vspace{-10pt}
\end{table*}

\begin{table}
    \centering
    \footnotesize
    \begin{tabular}{c|cccc}
    \toprule
      Dataset   & Samples & LLM &  Ground Truth & PC (Average over MEC)\\ 
      \midrule
         & 250 &1.00$\pm$0.00&0.00$\pm$0.00&2.00$\pm$0.00\\

        Asia &1000 &3.00$\pm$0.00&2.00$\pm$0.00&3.00$\pm$0.00\\
         &10000 &3.00$\pm$0.00&3.00$\pm$0.00&3.00$\pm$0.00\\
        \midrule
         & 250 &5.00$\pm$0.00&5.00$\pm$0.00&6.50$\pm$0.00\\
         
        Child &1000 &6.00$\pm$0.00&6.00$\pm$0.00&8.43$\pm$0.00\\
         
         &10000 &9.00$\pm$0.00&9.00$\pm$0.00&9.75$\pm$0.00\\
    \bottomrule
    \end{tabular}
    \caption{\footnotesize Comparing $D_{top}$ of final graph using LLM order vs Ground truth order as prior to PC algorithm for Child and Asia graph, averaged over 4 runs}
    \label{tab:oraclevsllm}
\end{table}

\begin{table}
    \centering
    \footnotesize
    \begin{tabular}{c|cccc|cc}
    \toprule
      Dataset   & Samples & $\epsilon_{ATE} (S_1)$ &  $\epsilon_{ATE} (S_2)$&$\epsilon_{ATE}(S_3)$&$\Delta_{12}$&$\Delta_{13}$\\
      \midrule
         & 250 &0.70$\pm$0.40&0.70$\pm$0.39&0.69$\pm$0.39&0.00$\pm$0.00&0.00$\pm$0.00\\
         & 500&0.64$\pm$0.39&0.64$\pm$0.39&0.64$\pm$0.38&0.00$\pm$0.00&0.00$\pm$0.00\\
        Asia &1000 &0.59$\pm$0.32&0.59$\pm$0.32&0.59$\pm$0.32&0.00$\pm$0.00&0.00$\pm$0.00\\
         & 5000 &0.59$\pm$0.30&0.59$\pm$0.30&0.59$\pm$0.29&0.00$\pm$0.00&0.00$\pm$0.00\\
         &10000 &0.49$\pm$0.00&0.49$\pm$0.00&0.49$\pm$0.00&0.00$\pm$0.00&0.00$\pm$0.00\\
    \bottomrule
    \end{tabular}
    \caption{\footnotesize Results on Asia dataset. Here we test the difference in the estimated causal effect of \textit{lung} on \textit{dyspnoea}  when the causal effect is estimated using the backdoor set $S_1$ = \textit{\{smoke\}} vs. the causal effect estimated when all variables that precede treatment variable in two possible topological orders as backdoor sets: $S_2$ = \{\textit{asia, smoke}\}, $S_2$= \{\textit{asia, tub, smoke}\}. $\Delta_{12}, \Delta_{13}$ refers to the absolute difference between the pairs $\epsilon_{ATE}(S_1), \epsilon_{ATE}(S_2)$ and $\epsilon_{ATE} (S_1), \epsilon_{ATE} (S_3)$ respectively. From the last two columns, we observe that using the variables that come before the treatment node in a topological order as a backdoor set does not result in the deviation of causal effects from the ground truth effects.}
    \label{tab:backdoorvstopological}
\end{table}

\begin{table}[htbp]
\centering
\footnotesize
\setlength{\tabcolsep}{3.5pt}
\begin{tabular}{l|cccc|cccc|cccc|cccc}
\hline
\multirow{2}{*}{Dataset} & \multicolumn{8}{c|}{BFS} & \multicolumn{8}{c}{BFS + Statistics} \\
\cline{2-17}
 & \multicolumn{4}{c|}{GPT-3.5} & \multicolumn{4}{c|}{GPT-4} & \multicolumn{4}{c|}{GPT-3.5} & \multicolumn{4}{c}{GPT-4} \\
\cline{2-17}
 & $D_{top}$ & SHD & IN & Cyc & $D_{top}$ & SHD & IN & Cyc & $D_{top}$ & SHD & IN & Cyc & $D_{top}$ & SHD & IN & Cyc \\
\hline
Asia & 2 & 7 & \textbf{0} & \textbf{0} & \textbf{0} & \textbf{1} & \textbf{0} & \textbf{0} & - & 23 & \textbf{0} & 33 & \textbf{0} & 3 & \textbf{0} & \textbf{0} \\
Alzh. & 5 & 17 & 2 & \textbf{0} & \textbf{0} & 34 & \textbf{0} & \textbf{0} & - & 27 & 1 & 17 & - & \textbf{14} & \textbf{0} & 1 \\
Child & - & 40 & \textbf{0} & 6 & 11 & 30 & \textbf{0} & \textbf{0} & - & 52 & 2 & 21 & \textbf{2} & \textbf{27} & 4 & \textbf{0} \\
Covid & - & 28 & \textbf{0} & 4 & 5 & \textbf{20} & \textbf{0} & \textbf{0} & - & 30 & \textbf{0} & 15 & - & 32 & 1 & 10 \\
\hline
\end{tabular}
\caption{\footnotesize Comparison of BFS and BFS+Statistics approaches using GPT-3.5-turbo and GPT-4. Datasets used: Asia, Alzheimers, Child, Covid. Metrics: topological distance ($D_{top}$), structural hamming distance (SHD), Isolated Nodes (IN), and cycle count (Cyc).}
\label{tab:bfs-comparison}
\end{table}

\vspace{-9pt}
\section{Query Strategies: More Details and Examples}
\label{sec appendix query strategies}
As stated in Sec. \ref{app:pairwise-triplet}, we follow earlier efforts in studying pairwise query strategies in our experiments. Beyond the basic query strategy, we also study its augmentation with additional contextual information. 
In summary, we study four types of pairwise queries, which we describe below.
\vspace{-7pt}

\begin{itemize}[leftmargin=*]
\setlength \itemsep{-0.2em}
\item \textbf{Basic prompt.} This is the simplest technique. We directly ask the expert to find the causal direction between a given pair of variables \citep{kiciman2023causal}.

\item \textbf{Chain-of-Thought (+ In-context Learning).} Based on encouraging results of providing in-context examples in prompts for various LLM tasks~\citep{incontext}, we include 3 examples of the ordering task that we expect the expert to perform on. Effectively, we provide example node pairs with their correct causal ordering before asking the question about the given nodes. Each example answer also contains an explanation of the answer, generated using a high-cost expert (GPT-4, in our experiments). Adding the explanation provides the expert with additional reasoning information when deciding the causal order~\citep{cot}. To avoid overfitting, we select node pairs from graphs that are not evaluated in our study, as additional input. Node pairs with and without direct edges were equally chosen for this purpose. Examples of an expert's (LLM's in this case) answers (and their explanations) using this query strategy are shown in tables below.

\item \textbf{Iterative Context.} Here, we provide previously oriented pairs as context in the prompt. Since the expert has access to its previous decisions, we expect that it may avoid creating cycles through its predictions. 

\item \textbf{One hop iterative Context.} Providing previously oriented pairs may become prohibitive for large graphs. Here we provide the information of connections with neighbouring nodes of the node pair being inspected as additional context in the query.

\end{itemize}

\justifying

\underline{\textbf{Cost Estimation Analysis: Pairwise vs. Triplet for LLMs}}

Triplet method ensures scalability by optimizing most calls to a cheaper and smaller model (like GPT-3.5-Turbo) while improving performance. The triplet pipeline boosts accuracy through multiple context switches (varying the third node) for better pairwise orientation. Strategic use of GPT-4 for conflict resolution enhances effectiveness and controls costs. For a 100-node graph, pairwise orientation using GPT-4 costs an estimated \$574, while our triplet strategy, leveraging both GPT-4 and GPT-3.5-Turbo, reduces costs to \$55. Although our triplet method involves more calls, it optimally uses GPT-4 for error correction, significantly improving performance while keeping costs low.\\

\underline{\textbf{Tradeoff Between Increased Nodes: Gains vs. Complexity-Driven Errors}}
As we increase the number of nodes in the prompt, there is a tradeoff: Adding more nodes provides more context and thus is beneficial, but more nodes in the LLM's prompt can also lead to higher error and higher computational cost. Therefore, we tackled this question empirically by comparing pairwise, triplet, and quadruplet-based prompts. As Table \ref{tab:quadvstrip_analysis} shows, using a quadruplet prompt slightly increases accuracy but leads to a significant increase in the number of LLM calls. In contrast, the increase in accuracy (especially cycle avoidance) is substantial when moving from pairwise to the triplet method. Given these considerations, we decided to go with the Triplet prompt, as it allows for adding more context with minimal increase in prompt complexity and total number of LLM calls. Note that future iterations of language models might be able to handle longer context better with more improvements, therefore the  $\epsilon'$  will vary with model size, architecture and data the model is trained on. Since we don't have the information about this, it will be difficult to model $\epsilon'$ 
 accurately. However, with the LLMs that we have tried (GPT-4, GPT-3.5, Phi-3 and LLama3), we do not see an increased error when using the triplet prompt compared to the pairwise prompt.



\section{Causal Graphs used in Experiments}

\begin{table}[htbp]
\centering
\footnotesize
\begin{tabular}{p{3.5cm} p{5.5cm} p{4cm}}
\hline
\textbf{Dataset} & \textbf{Graph} & \textbf{Data for Variables} \\[6pt]
\hline
BN Learn Datasets (Asia, Cancer, Earthquake, Survey, Child) & Real-world graphs from scientific studies & Synthetic data generation based on bnlearn library \\[6pt]
\hline
Neuropathic Pain & Real-world graph constructed with consensus from medical experts \cite{tu2019neuropathic}. Includes domain-specific variables as Right L1 Radiculopathy, Topical Dysfunction, DLS L5-S1, etc. (see Fig. \ref{fig:Neuropathic graph}) & Synthetic data generation based on \cite{tu2019neuropathic} \\[6pt]
\hline
Alzheimers Dataset & Real-world graph constructed with consensus from medical experts \cite{abdulaal2023causal}. Constructed in 2023, after the training cutoff date of GPT-3.5 and GPT-4 models used. & No data is available \\[6pt]
\hline
Covid-19 Dataset & Real-world graph constructed by experts to understand effect of Covid-19 on respiratory system \cite{covid19ds}. Constructed in 2022, after the training cutoff date of GPT-3.5 and GPT-4 models used. & No data is available. \\[6pt]
\hline
\end{tabular}
\caption{Details about the datasets used for evaluation.}
\label{tab:datasets_tab}
\end{table}

\label{sec graphs used}
\begin{table}
    \centering
    \footnotesize 
    \begin{tabular}{cccl}
        \toprule
        \textbf{Dataset} & \textbf{Number of} & \textbf{Number of} & \textbf{Description} \\
        &\textbf{Nodes}&\textbf{Edges}&\textbf{(used as a context)}\\
        \midrule
        Asia & 8 & 8 & Model the possible respiratory problems\\
        &&&someone can have who has recently visited\\
        &&&Asia and is experiencing shortness of breath \\
        \midrule
        Cancer & 5 & 4 & Model the relation between various variables\\
        &&&responsible for causing Cancer and its possible\\
        &&&outcomes \\
        \midrule
        Earthquake & 5 & 5 & Model factors influencing the probability of a burglary\\
        \midrule
        Survey&6&6& Model a hypothetical survey whose aim is to investigate\\
        &&&the usage patterns of different means of transport\\
        \midrule
        Child & 20 & 25 & Model congenital heart disease in babies \\
        \midrule
        Neuropathic Pain\\ Diagnosis (subgraph) & 22 & 25 & For neuropathic pain diagnosis  \\
        \bottomrule
    \end{tabular}
    \caption{\footnotesize Overview of datasets used}
    \label{tab:my_label}
\end{table}
Figures~\ref{fig:earthquake}-\ref{fig:child} show the causal graphs and details we considered from BNLearn repository~\citep{bnlearn}. 

\begin{figure}[H]
    \centering
    \includegraphics[width=0.4\textwidth]{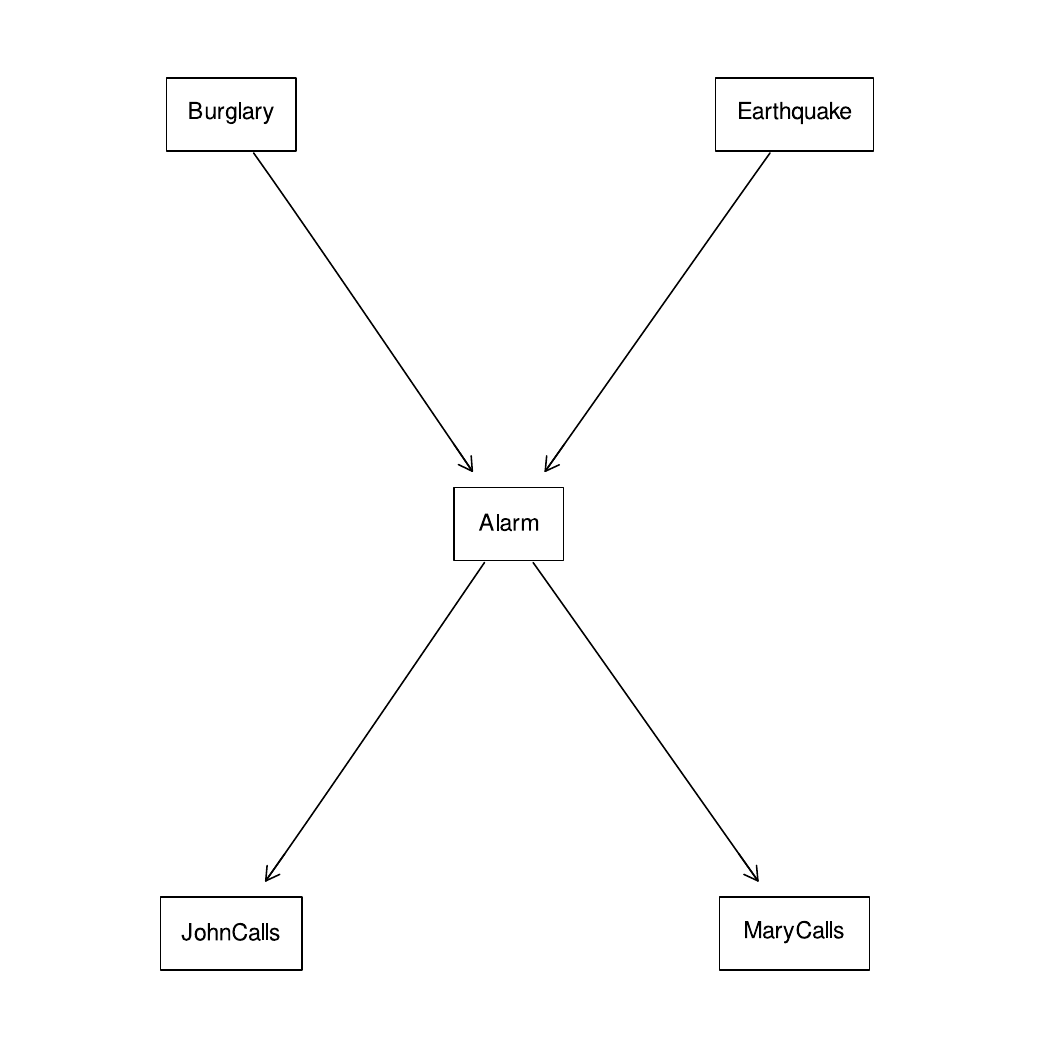}
    \caption{\footnotesize Earthquake Bayesian network. Abbreviations/Descriptions: Burglary: \textit{burglar entering}, Earthquake: \textit{earthquake hitting}, Alarm: \textit{home alarm going off in a house}, JohnCalls: \textit{first neighbor to call to inform the alarm sound}, Marycalls: \textit{second neighbor to call to inform the alarm sound}.}
    \label{fig:earthquake}
\end{figure}
\begin{figure}[H]
 \centering
    \includegraphics[width=0.5\textwidth]{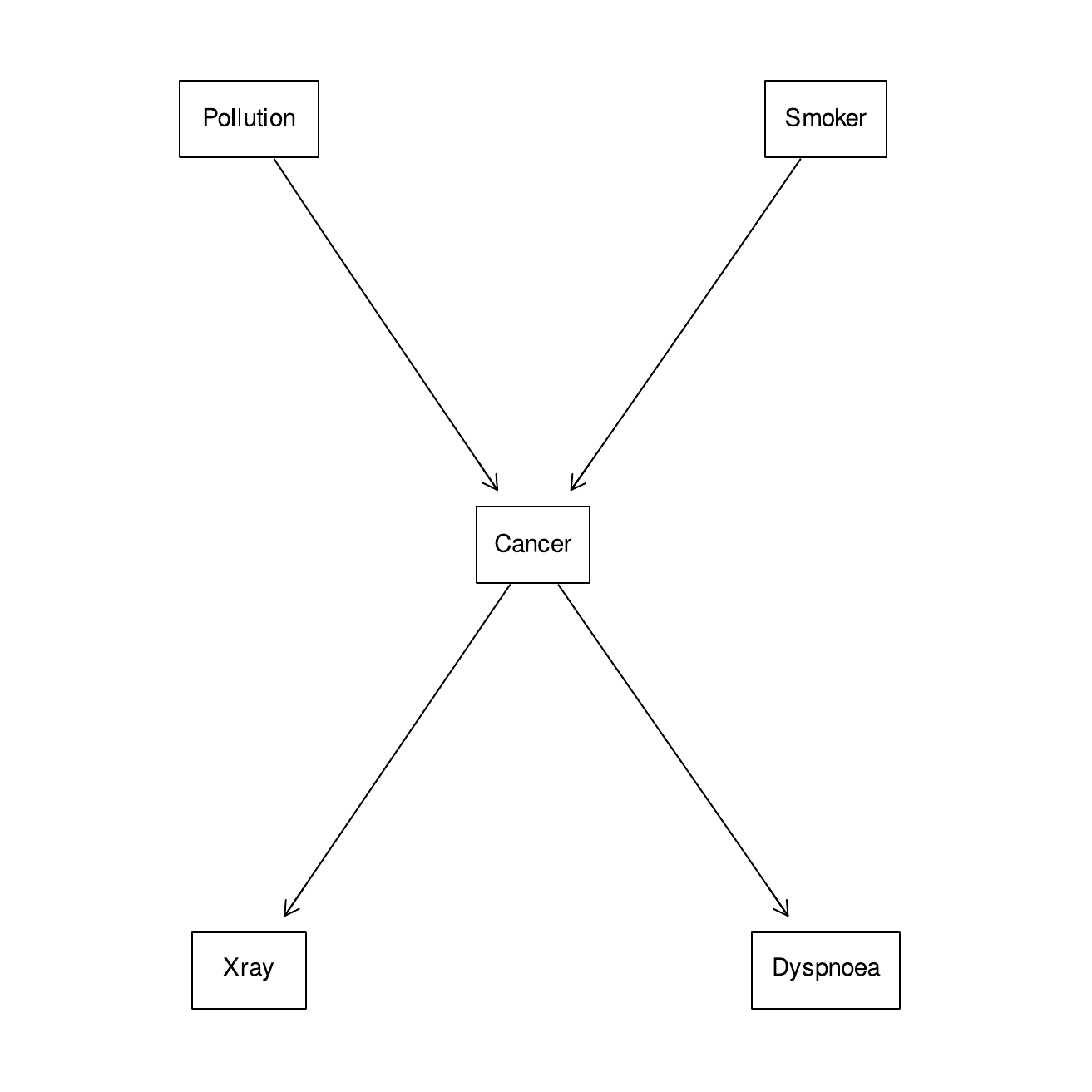}
    \caption{\footnotesize Cancer Bayesian network. Abbreviations/Descriptions: Pollution: \textit{exposure to pollutants}, Smoker: \textit{smoking habit}, Cancer: \textit{Cancer}, Dyspnoea: \textit{Dyspnoea}, Xray: \textit{getting positive xray result}.}
    \label{fig:cancer}
\end{figure}
\begin{figure}[H]
    \centering
    \includegraphics[width=0.4\textwidth]{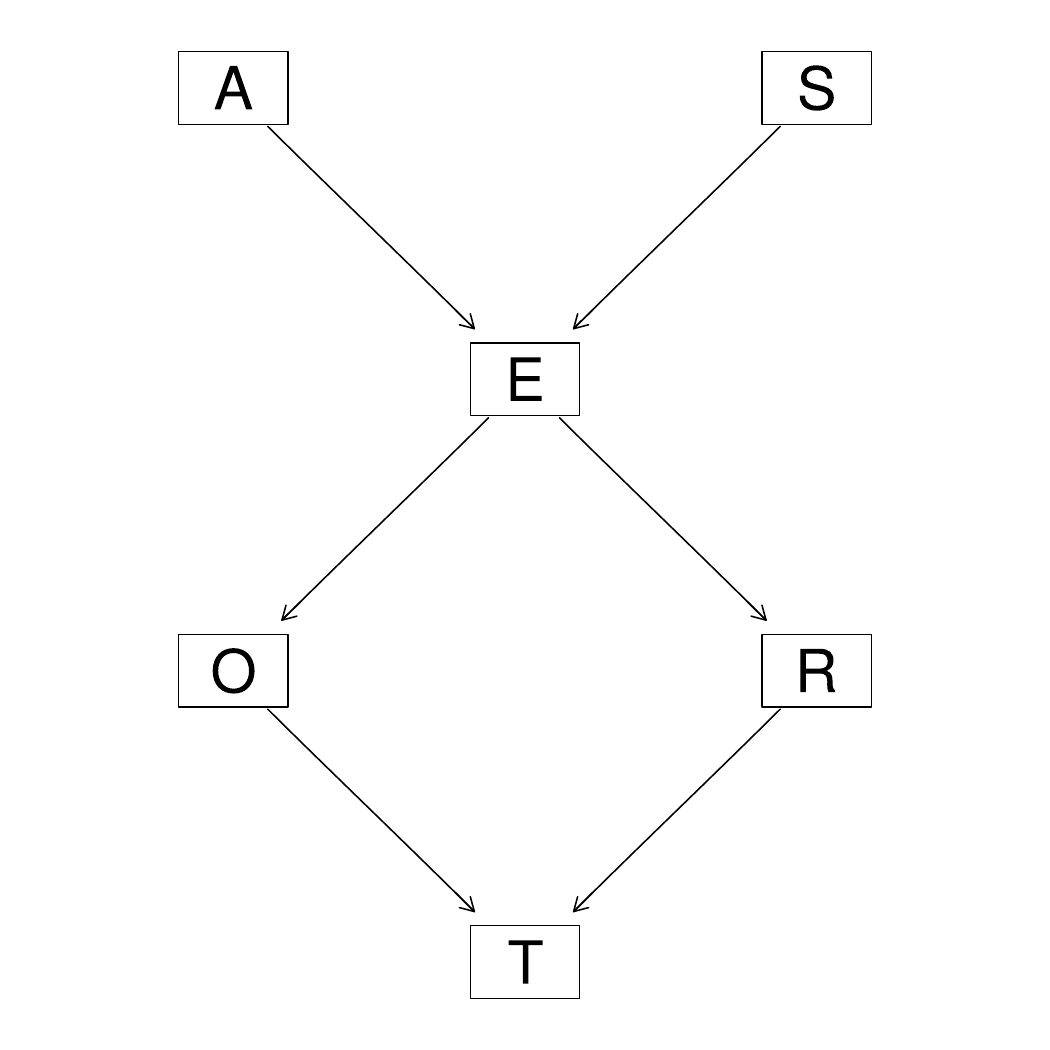}
    \caption{\footnotesize Survey Bayesian network. Abbreviations: A=\textit{Age/Age of people using transport}, S=\textit{Sex/male or female}, E=\textit{Education/up to high school or university degree}, O=\textit{Occupation/employee or self-employed}, R=\textit{Residence/the size of the city the individual lives in, recorded as either small or big}, T=\textit{Travel/the means of transport favoured by the individual}.}
    \label{fig:survey}
\end{figure}
\begin{figure}[H]
    \centering
    \includegraphics[width=0.6\textwidth]{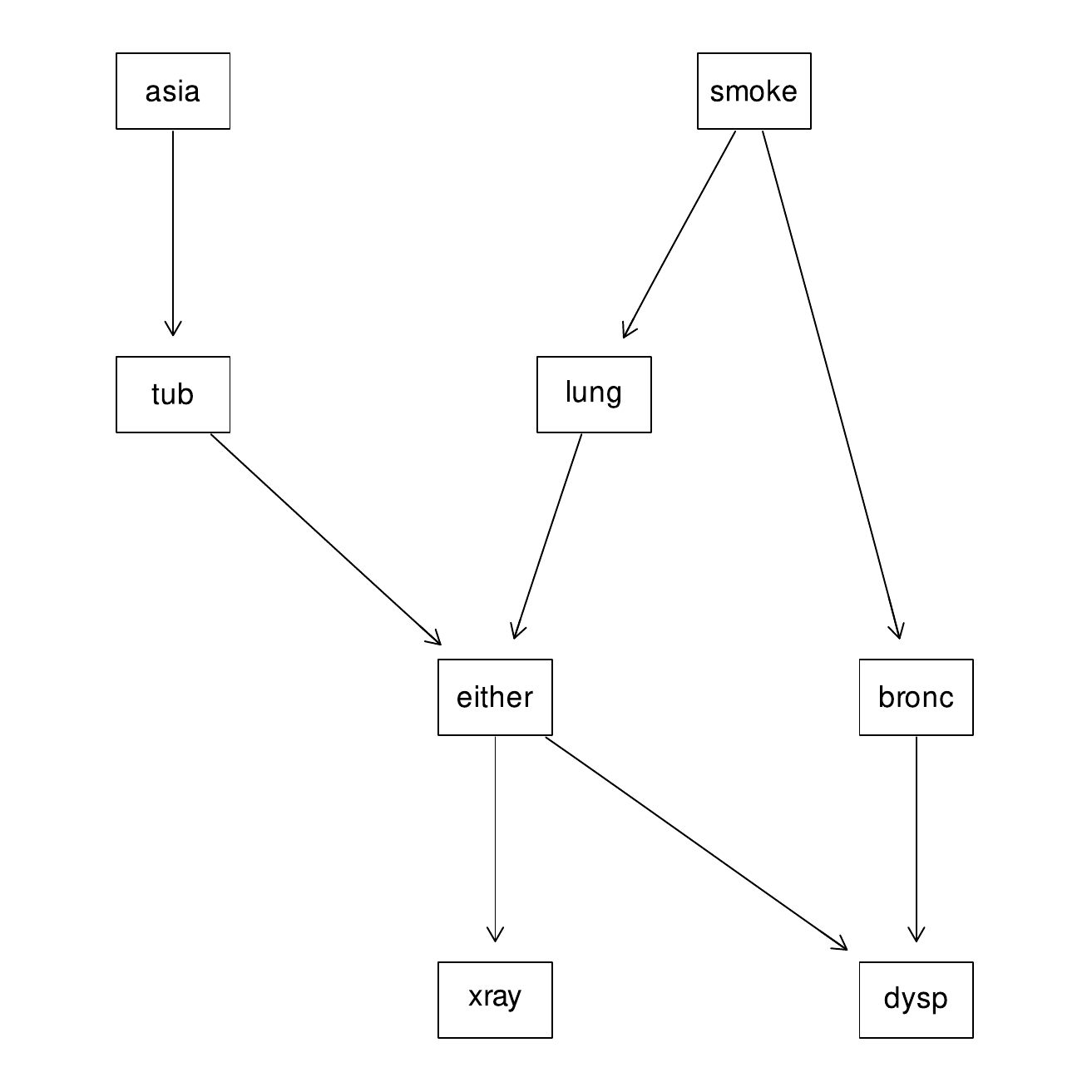}
    \caption{\footnotesize Asia Bayesian network. Abbreviations/Descriptions: asia=\textit{visit to Asia/visiting Asian countries with high exposure to pollutants}, smoke=\textit{smoking habit}, tub=\textit{tuberculosis}, lung=\textit{lung cancer}, either=\textit{either tuberculosis or lung cancer}, bronc=\textit{bronchitis}, dysp=\textit{dyspnoea}, xray=\textit{getting positve xray result}.}
    \label{fig:asia}
\end{figure}

\begin{figure}
    \centering
    \includegraphics[width=0.9\textwidth]{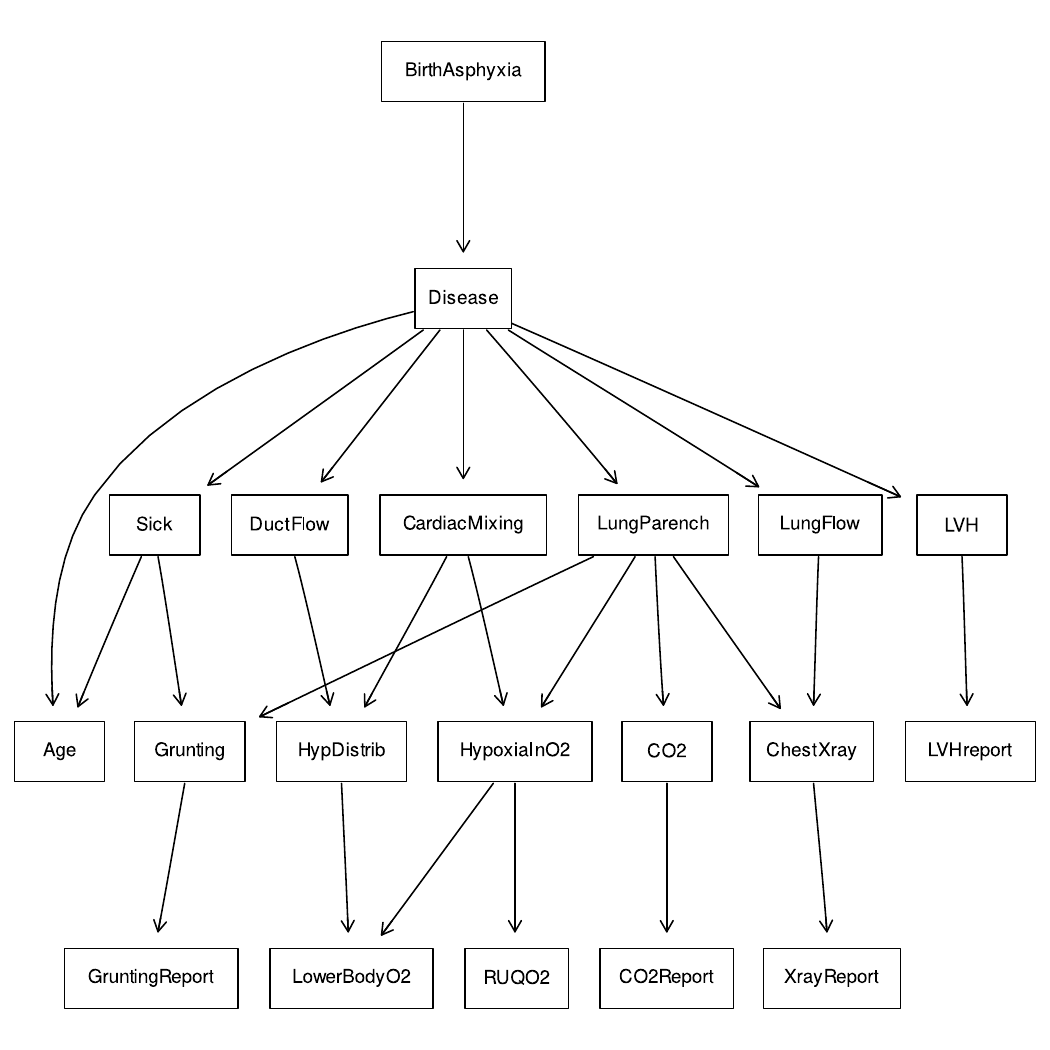}
    \caption{\footnotesize Child Bayesian network.  Abbreviations: BirthAsphyxia: \textit{Lack of oxygen to the blood during the infant's birth}, HypDistrib: \textit{Low oxygen areas equally distributed around the body}, HypoxiaInO2: \textit{Hypoxia when breathing oxygen}, CO2: \textit{Level of carbon dioxide in the body}, ChestXray: \textit{Having a chest x-ray}, Grunting: \textit{Grunting in infants}, LVHreport: \textit{Report of having left ventricular hypertrophy}, LowerBodyO2: \textit{Level of oxygen in the lower body}, RUQO2: \textit{Level of oxygen in the right upper quadricep muscle}, CO2Report: \textit{A document reporting high levels of CO2 levels in blood}, XrayReport: \textit{Report of having a chest x-ray}, Disease: \textit{Presence of an illness}, GruntingReport: \textit{Report of infant grunting}, Age: \textit{Age of infant at disease presentation}, LVH: \textit{Thickening of the left ventricle}, DuctFlow: \textit{Blood flow across the ductus arteriosus}, CardiacMixing: \textit{Mixing of oxygenated and deoxygenated blood}, LungParench: \textit{The state of the blood vessels in the lungs}, LungFlow: \textit{Low blood flow in the lungs}, Sick: \textit{Presence of an illness}}
    \label{fig:child}
\end{figure}

\begin{figure}
    \centering
    \includegraphics[width=1\textwidth]{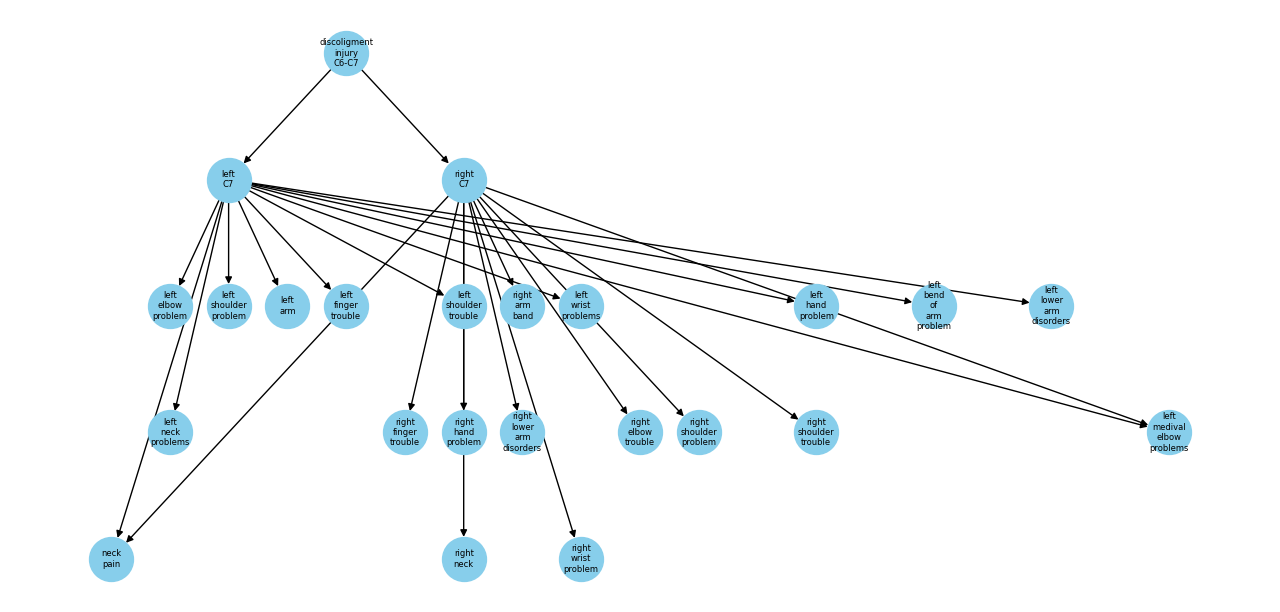}
    \caption{\footnotesize For Neuropathic dataset, we consider a sub-graph induced by one of the root nodes, containing the following 22 nodes and corresponding edges taken from \url{https://observablehq.com/@turuibo/the-complete-causal-graph-of-neuropathic-pain-diagnosis}: `right C7', `right elbow trouble', `left shoulder trouble', `left bend of arm problem', 'right shoulder trouble', `right hand problem', `left medival elbow problems', `right finger trouble', `left neck problems', `left wrist problems', 'left shoulder problem', `right neck', `right wrist problem', `right shoulder problem', `discoligment injury C6 C7', `left hand problem', `left C7', `right arm band', `left lower arm disorders', `neck pain', `left finger trouble', `left arm'. We did not use descriptions for the nodes of Neuropathic graph.}
    \label{fig:Neuropathic graph}
\end{figure}

\begin{figure}
    \centering
    \includegraphics[width=1\textwidth]{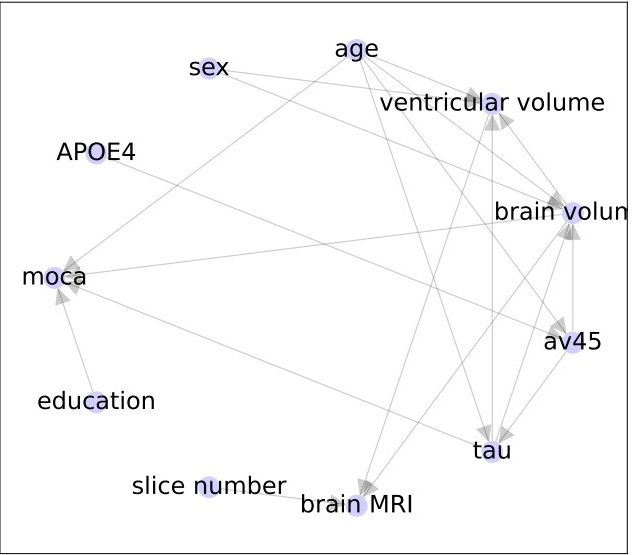}
    \caption{The \textbf{Alzheimer's dataset} is a Bayesian Network developed by Abdulaal, Ahmed, et al. in collaboration with five domain experts, as detailed in their paper "Causal Modelling Agents: Causal Graph Discovery through Synergising Metadata-and Data-driven Reasoning" (ICLR 2024). The dataset includes the following variables: \textbf{age}, which represents the age of the patient; \textbf{sex}, indicating the biological sex of the patient; \textbf{APOE4}, which measures the expression level of the APOE4 gene; \textbf{education}, reflecting the patient's educational attainment in years; \textbf{av45}, measuring the beta amyloid protein level using Florbetapir F 18; \textbf{tau}, indicating phosphorylated-tau deposition; \textbf{brain volume}, representing the total brain matter volume of the patient; \textbf{Ventricular Volume}, indicating the total ventricular volume of the patient; and \textbf{moca}, which is the Montreal Cognitive Assessment Score.}
    \label{fig:Alzheimer's graph}
\end{figure}

\begin{figure}
    \centering
    \includegraphics[width=1\textwidth]{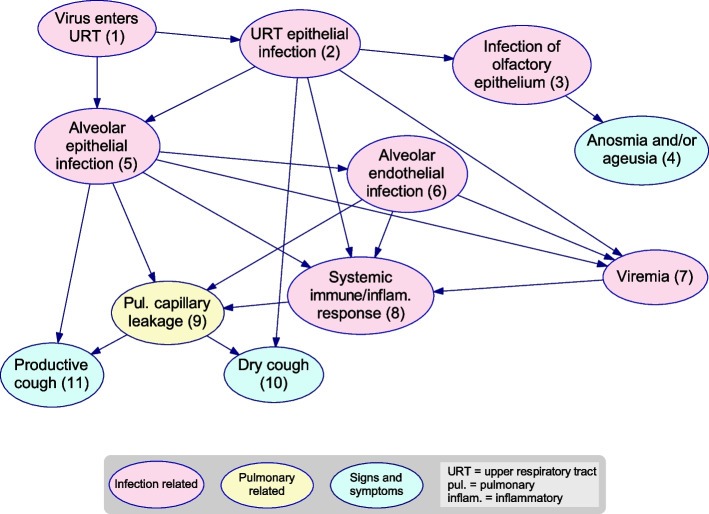}
    \caption{\textbf{Respiratory causal DAG} taken from Mascaro S, Wu Y, Woodberry O, et al. Modeling COVID-19 disease processes by remote elicitation of causal Bayesian networks from medical experts. BMC 
    Med Res Methodol. Here, \textbf{Virus enters upper respiratory tract (URT)}: SARS-CoV-2 viral particles inhaled and attach to upper respiratory tract mucosal surface. The size of the viral inoculum is dependent on exposure related factors, not included in the current model, \textbf{Upper respiratory tract (URT) epithelial infection}:
Viral infection of upper respiratory tract epithelial cells $+/-$ signaling an immune response and leading to local inflammation, \textbf{Infection of
olfactory
epithelium}: Viral infection of the olfactory epithelial cells $+/-$ leading to impaired olfaction, \textbf{Ageusia and/or anosmia}: Loss of the ability to taste and/or smell, \textbf{Alveolar epithelial infection}: Viral infection of the alveolar cells, $+/-$ inducing an immune response which leads to local inflammation. \textbf{Alveolar endothelial
    infection}: Viral infection of the endothelial cells of the capillaries of the terminal airways, $+/-$ inducing an immune response which leads to local inflammation., \textbf{Viremi}: Presence of SARS-CoV-2 in blood allowing for systemic dissemination of the virus., \textbf{Systemic immune/inflammatory (inflam.) response}: Activation of innate and/or adaptive immune system by the presence of virus at one or more body site/s. Manifest by the release of pro- +/- anti-inflammatory markers in blood by immune-related cells, \textbf{Pulmonary capillary leakage}: Leakage of plasma from the pulmonary capillaries into the terminal airways.,
    leakage \textbf{Dry cough}: Cough without the production of mucus or phlegm., \textbf{Productive cough}: Cough with the production of mucus or phlegm.}
    \label{fig:Covid graph}
\end{figure}

\clearpage
\label{sec various prompts}

\begin{table}[H]
    \centering
    \footnotesize
    \begin{tabular}{m{13cm}}
        \toprule
        Question: For a causal graph used to model relationship of various factors and outcomes related to cancer with the following nodes: ['Pollution', 'Cancer', 'Smoker', 'Xray', 'Dyspnoea'],\\
            Which cause-and-effect relationship is more likely between nodes 'smoker' and 'cancer'?
\\\\
            A. changing the state of node 'smoker' causally affects a change in another node 'cancer'.
\\\\
            B. changing the state of node 'cancer' causally affects a change in another node 'smoker'.
\\\\
            C. There is no causal relation between the nodes 'cancer' and 'smoker'.
\\\\
            Make sure to first provide a grounded reasoning for your answer and then provide the answer in the following format: $\langle\text{Answer}\rangle \text{A/B/C}\langle\text{/Answer}\rangle$.
            It is very important that you output the final Causal graph within the tags like $\langle\text{Answer}\rangle \text{A/B/C}\langle\text{/Answer}\rangle$ otherwise your answer will not be processed.
           \\\\
        Answer:  The causal effect of "smoker" directing to "cancer" is based on the strong evidence from epidemiological studies linking smoking to an increased risk of developing cancer. Smoking introduces harmful substances into the respiratory system, leading to cellular damage and mutation, which significantly raises the likelihood of cancer development in the lungs or respiratory tract, subsequently impacting the occurrence of respiratory problems like shortness of breath. Therefore answer is $\langle\text{Answer}\rangle \text{A}\langle\text{/Answer}\rangle$
\\\\
        Question: For a causal graph used to model relationship of various factors and outcomes related to cancer with the following nodes: ['Pollution', 'Cancer', 'Smoker', 'Xray', 'Dyspnoea'],
            Which cause-and-effect relationship is more likely between nodes 'xray' and 'dyspnoea'?
\\\\
            A. changing the state of node 'xray' causally affects a change in another node 'dyspnoea'.
\\\\
            B. changing the state of node 'dyspnoea' causally affects a change in another node 'xray'.
\\\\
            C. There is no causal relation between the nodes 'xray' and 'dyspnoea'.
\\\\
            Make sure to first provide a grounded reasoning for your answer and then provide the answer in the following format: $\langle\text{Answer}\rangle \text{A/B/C}\langle\text{/Answer}\rangle$.
            It is very important that you output the final Causal graph within the tags like $\langle\text{Answer}\rangle \text{A/B/C}\langle\text{/Answer}\rangle$ otherwise your answer will not be processed.
\\\\
        Answer: Reasoning behind the lack of causal relation between X-ray and dyspnoea is that X-ray and dyspnoea are both effects of having cancer, but they do not directly cause or affect each other. X-ray is a diagnostic test that can help detect cancer in the lungs or other organs, while dyspnoea is a symptom of cancer that involves feeling short of breath. Therefore, X-ray and dyspnoea are not causally related, but they are both associated with cancer. Therefore answer is 
        $\langle\text{Answer}\rangle \text{C}\langle\text{/Answer}\rangle$
\\\\
        Question: For a causal graph used to model relationship of various factors and outcomes related to cancer with the following nodes: ['Pollution', 'Cancer', 'Smoker', 'Xray', 'Dyspnoea'],
            Which cause-and-effect relationship is more likely between nodes 'xray' and 'cancer'?
\\\\
            A. changing the state of node 'xray' causally affects a change in another node 'cancer'.
\\\\
            B. changing the state of node 'cancer' causally affects a change in another node 'xray'.
\\\\
            C. There is no causal relation between the nodes 'xray' and 'cancer'.
\\\\
            Make sure to first provide a grounded reasoning for your answer and then provide the answer in the following format: $\langle\text{Answer}\rangle \text{A/B/C}\langle\text{/Answer}\rangle$.
            It is very important that you output the final Causal graph within the tags like $\langle\text{Answer}\rangle \text{A/B/C}\langle\text{/Answer}\rangle$ otherwise your answer will not be processed.
\\\\
        \bottomrule
    \end{tabular}
    \caption{\footnotesize Chain of Thought Prompt}
    \label{tab:cot}
\end{table}

\begin{table}
    \centering
    \footnotesize
    \begin{tabular}{m{13cm}}
        \toprule
        Answer:  The causal effect of cancer on X-ray is that X-rays are often used to diagnose or detect cancer in different parts of the body, such as the bones, lungs, breasts, or kidneys123. X-rays use low doses of radiation to create pictures of the inside of the body and show the presence, size, and location of tumors. X-rays can also help monitor the response to treatment or check for signs of recurrence. Therefore, having cancer may increase the likelihood of getting an X-ray as part of the diagnostic process or follow-up care. Therefore answer is $\langle\text{Answer}\rangle \text{B}\langle\text{/Answer}\rangle$
\\\\
        Question: For a causal graph used to model relationship of various factors and outcomes related to cancer with the following nodes: ['Pollution', 'Cancer', 'Smoker', 'Xray', 'Dyspnoea'],
            Which cause-and-effect relationship is more likely between nodes 'pollution' and 'cancer'?
\\\\
            A. changing the state of node 'pollution' causally affects a change in another node 'cancer'.
\\\\
            B. changing the state of node 'cancer' causally affects a change in another node 'pollution'.
\\\\
            C. There is no causal relation between the nodes 'pollution' and 'cancer'.
\\\\
           Make sure to first provide a grounded reasoning for your answer and then provide the answer in the following format: $\langle\text{Answer}\rangle \text{A/B/C}\langle\text{/Answer}\rangle$.
            It is very important that you output the final Causal graph within the tags like $\langle\text{Answer}\rangle \text{A/B/C}\langle\text{/Answer}\rangle$ otherwise your answer will not be processed.
\\\\
        Answer:  The causal effect of pollution on cancer is that air pollution contains carcinogens (cancer-causing substances) that may be absorbed into the body when inhaled and damage the DNA of cells. Another possible reasoning is that particulate matter (tiny dust-like particles) in air pollution may cause physical damage to the cells in the lungs, leading to inflammation and oxidative stress and eventually cell mutations. A third possible reasoning is that air pollution may create an inflamed environment in the lungs that encourages the proliferation of cells with existing cancer-driving mutations. These are some of the hypotheses that researchers have proposed to explain how air pollution may cause cancer, but more studies are needed to confirm them. Therefore answer is $\langle\text{Answer}\rangle \text{A}\langle\text{/Answer}\rangle$
\\\\
        Question: For a causal graph used to model relationship of various factors and outcomes related to cancer with the following nodes: ['Pollution', 'Cancer', 'Smoker', 'Xray', 'Dyspnoea'],
            Which cause-and-effect relationship is more likely between nodes 'pollution' and 'smoker'?
\\\\
            A. changing the state of node 'pollution' causally affects a change in another node 'smoker'.
\\\\
            B. changing the state of node 'smoker' causally affects a change in another node 'pollution'.
\\\\
            C. There is no causal relation between the nodes 'pollution' and 'smoker'.
\\\\
            Make sure to first provide a grounded reasoning for your answer and then provide the answer in the following format: $\langle\text{Answer}\rangle \text{A/B/C}\langle\text{/Answer}\rangle$.
            It is very important that you output the final Causal graph within the tags like $\langle\text{Answer}\rangle \text{A/B/C}\langle\text{/Answer}\rangle$ otherwise your answer will not be processed.
\\\\
        Answer: Reason behind the lack of causal relation between pollution and smoker is that pollution and smoking are both independent risk factors for respiratory problems, but they do not directly cause or affect each other. Pollution and smoking both contribute to air pollution, which can harm the health of people and the environment. However, pollution is mainly caused by human activities such as burning fossil fuels, deforestation, or industrial processes, while smoking is a personal choice that involves inhaling tobacco smoke. Therefore, pollution and smoker are not causally related, but they are both associated with respiratory problems. Therefore answer is $\langle\text{Answer}\rangle \text{C}\langle\text{/Answer}\rangle$.\\
        \bottomrule
    \end{tabular}
    
    \caption{\footnotesize Chain of Thought Prompt (continued..)}
    \label{tab:cotcontd2}
\end{table}

\begin{table}
    \centering
    \footnotesize
    \begin{tabular}{m{13cm}}
        \toprule
Question: For a causal graph used for modeling factors causing Coronary Heart Diseases with the following nodes: ['Family Disease', 'Gene', 'Smoking', 'Blood Pressure', 'Coronary Heart Disease', 'Headache'],
           Which cause-and-effect relationship is more likely between nodes 'Family Disease' and 'Gene'?
\\\\
           A. changing the state of node 'Family Disease' causally affects a change in another node 'Gene'.
\\\\
           B. changing the state of node 'Gene' causally affects a change in another node 'Family Disease'.
\\\\
           C. There is no causal relation between the nodes 'Family Disease' and 'Gene'.
\\\\
           Make sure to first provide a grounded reasoning for your answer and then provide the answer in the following format: $\langle\text{Answer}\rangle \text{A/B/C}\langle\text{/Answer}\rangle$.
It is very important that you output the final Causal graph within the tags like $\langle\text{Answer}\rangle \text{A/B/C}\langle\text{/Answer}\rangle$ otherwise your answer will not be processed.
\\\\
       Answer: Reason behind the causal effect of family disease on gene is that family disease is a term that refers to diseases or health conditions that run in the family, meaning that they are influenced by genetic factors. Gene is a term that refers to the basic unit of heredity that carries information for a specific trait or function. Family disease can affect gene by altering the type or frequency of genes that are inherited by the offspring from their parents. For example, some family diseases are caused by deterministic genes, which are genes that guarantee the development of a disease if they are present in a person’s genome. Other family diseases are influenced by risk genes, which are genes that increase the likelihood of developing a disease but do not guarantee it. Therefore, family disease can causally direct to gene by changing the genetic makeup of a person or a population. Therefore answer is $\langle\text{Answer}\rangle \text{A}\langle\text{/Answer}\rangle$.
\\\\

        Question: For a causal graph used for modeling factors causing Coronary Heart Diseases with the following nodes: ['Family Disease', 'Gene', 'Smoking', 'Blood Pressure', 'Coronary Heart Disease', 'Headache'],
            Which cause-and-effect relationship is more likely between nodes 'Coronary Heart Disease' and 'Gene'?
\\\\\
            A. changing the state of node 'Coronary Heart Disease' causally affects a change in another node 'Gene'.
\\\\
            B. changing the state of node 'Gene' causally affects a change in another node 'Coronary Heart Disease'.
\\\\
            C. There is no causal relation between the nodes 'Coronary Heart Disease' and 'Gene'.
\\\\
            Make sure to first provide a grounded reasoning for your answer and then provide the answer in the following format: $\langle\text{Answer}\rangle \text{A/B/C}\langle\text{/Answer}\rangle$.
            It is very important that you output the final Causal graph within the tags like $\langle\text{Answer}\rangle \text{A/B/C}\langle\text{/Answer}\rangle$ otherwise your answer will not be processed.
\\\\
        Answer: Possible reasoning behind the causal effect of gene on coronary heart disease is that gene is a term that refers to the basic unit of heredity that carries information for a specific trait or function. Gene can affect coronary heart disease by influencing the structure and function of the blood vessels, the metabolism and transport of lipids (fats) in the blood, the inflammation and clotting processes, or the response to environmental factors such as smoking or diet. For example, some genes code for proteins that regulate the cell cycle and growth of the cells that line the arteries, which can affect their susceptibility to damage or plaque formation. Other genes code for proteins that control the synthesis and clearance of cholesterol or other lipids, which can affect their levels and deposition in the arteries. Therefore, gene can causally direct to coronary heart disease by modifying the biological pathways that contribute to the development or progression of the disease. Therefore answer is $\langle\text{Answer}\rangle \text{B}\langle\text{/Answer}\rangle$\\
        \bottomrule
    \end{tabular}
    
    \caption{\footnotesize Chain of Thought Prompt (continued..)}
    \label{tab:cotcontd3}
\end{table}

\begin{table}
    \centering
    \footnotesize
    \begin{tabular}{m{13cm}}
        \toprule
Question: For a causal graph used for modeling factors causing Coronary Heart Diseases with the following nodes: ['Family Disease', 'Gene', 'Smoking', 'Blood Pressure', 'Coronary Heart Disease', 'Headache'],
            Which cause-and-effect relationship is more likely between nodes 'Blood Pressure' and 'Smoking'?
\\\\
            A. changing the state of node 'Blood Pressure' causally affects a change in another node 'Smoking'.
\\\\
            B. changing the state of node 'Smoking' causally affects a change in another node 'Blood Pressure'.
\\\\
            C. There is no causal relation between the nodes 'Blood Pressure' and 'Smoking'.
\\\\
            Make sure to first provide a grounded reasoning for your answer and then provide the answer in the following format: $\langle\text{Answer}\rangle \text{A/B/C}\langle\text{/Answer}\rangle$.
            It is very important that you output the final Causal graph within the tags like $\langle\text{Answer}\rangle \text{A/B/C}\langle\text{/Answer}\rangle$ otherwise your answer will not be processed.
\\\\
        Answer: Possible reasoning behind the causal effect of smoking on blood pressure is that smoking is a habit that involves inhaling tobacco smoke, which contains nicotine and other harmful chemicals. Smoking can affect blood pressure by activating the sympathetic nervous system (SNS), which is the part of the nervous system that controls the body's response to stress or danger. When the SNS is activated, it releases hormones such as adrenaline and noradrenaline, which cause the heart to beat faster and harder, and the blood vessels to constrict. This results in a temporary increase in blood pressure, which can last for 15 to 20 minutes after each cigarette. Therefore, smoking can causally direct to blood pressure by stimulating the SNS and increasing the cardiac output and vascular resistance. Therefore answer is $\langle\text{Answer}\rangle \text{B}\langle\text{/Answer}\rangle$.
\\\\
        Question: For a causal graph used for modeling factors causing Coronary Heart Diseases with the following nodes: ['Family Disease', 'Gene', 'Smoking', 'Blood Pressure', 'Coronary Heart Disease', 'Headache'],
            Which cause-and-effect relationship is more likely between nodes 'Headache' and 'Smoking'?
\\\\
            A. changing the state of node 'Headache' causally affects a change in another node 'Smoking'.
\\\\
            B. changing the state of node 'Smoking' causally affects a change in another node 'Headache'.
\\\\
            C. There is no causal relation between the nodes 'Headache' and 'Smoking'.
\\\\
            Make sure to first provide a grounded reasoning for your answer and then provide the answer in the following format: $\langle\text{Answer}\rangle \text{A/B/C}\langle\text{/Answer}\rangle$.
            It is very important that you output the final Causal graph within the tags like $\langle\text{Answer}\rangle \text{A/B/C}\langle\text{/Answer}\rangle$ otherwise your answer will not be processed.
\\\\
        Answer: One possible reasoning behind the lack of causal relation between headache and smoking is that headache and smoking are both associated with various health conditions, but they do not directly cause or affect each other12. Headache is a term that refers to pain or discomfort in the head, scalp, or neck, which can have many possible causes, such as stress, dehydration, infection, injury, or medication. Smoking is a habit that involves inhaling tobacco smoke, which contains nicotine and other harmful chemicals, which can increase the risk of diseases such as cancer, heart disease, stroke, and lung disease. Therefore, headache and smoking are not causally related, but they are both linked to different health problems. Therefore the answer is $\langle\text{Answer}\rangle \text{C}\langle\text{/Answer}\rangle$\\
        \bottomrule
    \end{tabular}
    
    \caption{\footnotesize Chain of Thought Prompt (continued..)}
    \label{tab:cotcontd4}
\end{table}
\begin{table}
    \centering
    \footnotesize
    \begin{tabular}{m{13cm}}
        \toprule
 Question: For a causal graph used for modeling factors causing Coronary Heart Diseases with the following nodes: ['Family Disease', 'Gene', 'Smoking', 'Blood Pressure', 'Coronary Heart Disease', 'Headache'],
            Which cause-and-effect relationship is more likely between nodes 'Headache' and 'Smoking'?
\\\\
            A. changing the state of node 'Headache' causally affects a change in another node 'Smoking'.
\\\\
            B. changing the state of node 'Smoking' causally affects a change in another node 'Headache'.
\\\\
            C. There is no causal relation between the nodes 'Headache' and 'Smoking'.
\\\\
            Make sure to first provide a grounded reasoning for your answer and then provide the answer in the following format: $\langle\text{Answer}\rangle \text{A/B/C}\langle\text{/Answer}\rangle$.
            It is very important that you output the final Causal graph within the tags like $\langle\text{Answer}\rangle \text{A/B/C}\langle\text{/Answer}\rangle$ otherwise your answer will not be processed.
\\\\
        Answer: One possible reasoning behind the lack of causal relation between headache and smoking is that headache and smoking are both associated with various health conditions, but they do not directly cause or affect each other. Headache is a term that refers to pain or discomfort in the head, scalp, or neck, which can have many possible causes, such as stress, dehydration, infection, injury, or medication. Smoking is a habit that involves inhaling tobacco smoke, which contains nicotine and other harmful chemicals, which can increase the risk of diseases such as cancer, heart disease, stroke, and lung disease. Therefore, headache and smoking are not causally related, but they are both linked to different health problems. Therefore the answer is $\langle\text{Answer}\rangle \text{C}\langle\text{/Answer}\rangle$
\\\\
        Question: For a causal graph used for modeling factors causing Coronary Heart Diseases with the following nodes: ['Family Disease', 'Gene', 'Smoking', 'Blood Pressure', 'Coronary Heart Disease', 'Headache'],
            Which cause-and-effect relationship is more likely between nodes 'Coronary Heart Disease' and 'Smoking'?
\\\\
            A. changing the state of node 'Smoking' causally affects a change in another node 'Coronary Heart Disease'.
\\\\
            B. changing the state of node 'Coronary Heart Disease' causally affects a change in another node 'Smoking'.
\\\\
            C. There is no causal relation between the nodes 'Coronary Heart Disease' and 'Smoking'.
\\\\
        Make sure to first provide a grounded reasoning for your answer and then provide the answer in the following format: $\langle\text{Answer}\rangle \text{A/B/C}\langle\text{/Answer}\rangle$.
        It is very important that you output the final Causal graph within the tags like $\langle\text{Answer}\rangle \text{A/B/C}\langle\text{/Answer}\rangle$ otherwise your answer will not be processed.
\\\\
        Answer: Possible reasoning behind the causal effect of smoking on coronary heart disease is smoking damages the heart and blood vessels by raising triglycerides, lowering HDL, increasing blood clotting, and impairing blood flow to the heart. This can lead to plaque buildup, heart attacks, and death. Therefore answer is
$\langle\text{Answer}\rangle \text{A}\langle\text{/Answer}\rangle$. 
        \\\\

        Question: For a causal graph used for {context} with the following nodes: {nodes}, 
        Which cause-and-effect relationship is more likely between nodes {X} and {Y}?
\\\\
        A. changing the state of node {X} causally affects a change in another node {Y}.
\\\\
        B. changing the state of node {Y} causally affects a change in another node {X}.
\\\\
        C. There is no causal relation between the nodes {X} and {Y}.
\\\\
        Make sure to first provide a grounded reasoning for your answer and then provide the answer in the following format: $\langle\text{Answer}\rangle \text{A/B/C}\langle\text{/Answer}\rangle$.
        It is very important that you output the final Causal graph within the tags like $\langle\text{Answer}\rangle \text{A/B/C}\langle\text{/Answer}\rangle$ otherwise your answer will not be processed.\\
        \bottomrule
    \end{tabular}
    
    \caption{\footnotesize Chain of Thought Queries (continued..)}
    \label{tab:cotcontd5}
\end{table}

\begin{table}
    \centering
    \begin{tabular}{m{13cm}}
        \toprule
        Which cause-and-effect relationship is more likely?
\\\\
            A. changing the state of node which says {X} causally affects a change in another node which says {Y}.
\\\\
            B. changing the state of node which says {Y} causally affects a change in another node which says {X}.
\\\\
            C. There is no causal relationship between node {X} and {Y}.
\\\\
            Make sure to first output a factually grounded reasoning for your answer. {X} and {Y} are nodes of a Causal Graph.
            The causal graph is sparse and acyclic in nature. So option C could be chosen if there is some uncertainity about causal relationship between {X} and {Y}.\\\\
            First give your reasoning and after that please make sure to provide your final answer within the tags $\langle\text{Answer}\rangle \text{A/B/C}\langle\text{/Answer}\rangle$.\\
            It is very important that you output your final answer between the tags like $\langle\text{Answer}\rangle \text{A/B/C}\langle\text{/Answer}\rangle$ otherwise your response will not be processed.\\ 
        \bottomrule
    \end{tabular}
    
    \caption{\footnotesize Base Queries}
    \label{tab:base prompt}
\end{table}

\begin{table}
    \centering
    \begin{tabular}{m{13cm}}
        \toprule
        For the nodes X and Y which form an edge in a Causal Graph, you have to identify 
        which cause-and-effect relationship is more likely between the nodes of the edge. This will be used to rearrange the nodes in the edge 
        to create a directed edge which accounts for causal relation from one node to another in the edge.
        \\\\
        A. changing the state of node X causally affects a change in another node Y.
        \\\\
        B. changing the state of node Y causally affects a change in another node X.
        \\\\
        C. There is no causal relation between the nodes X and Y. \\
        \\
        You can also take the edges from the skeleton which have been rearranged to create a directed edge to account for causal relationship between the nodes: directed\_edges. \\
        
        Make sure to first output a factually grounded reasoning for your answer. First give your reasoning and after that please make sure to 
        provide your final answer within the tags $\langle\text{Answer}\rangle \text{A/B/C}\langle\text{/Answer}\rangle$. 
        \\
        It is very important that you output your final answer between the tags like $\langle\text{Answer}\rangle \text{A/B/C}\langle\text{/Answer}\rangle$ otherwise your response will not be processed. \\
        \bottomrule
        \end{tabular}
    \caption{\footnotesize Iterative orientation Queries}
    \label{tab:all directed edges prompt}
\end{table}

\begin{table}
    \centering
    \begin{tabular}{m{13cm}}
    \toprule
        For the following undirected edge in a Causal Graph made of nodes X and Y, you have to identify 
        which cause-and-effect relationship is more likely between the nodes of the edge. This will be used to rearrange the nodes in the edge 
        to create a directed edge which accounts for causal relation from one node to another in the edge.\\
\\\\
        A. changing the state of node X causally affects a change in another node Y.
\\\\
        B. changing the state of node Y causally affects a change in another node X.
\\\\
        C. There is no causal relation between the nodes X and Y.  
        \\\\
        You can also take the other directed edges of nodes X: X\_edges and Y: Y\_edges of the Causal graph as context to redirect the edge to account for causal effect.\\
        Make sure to first output a factually grounded reasoning for your answer. First give your reasoning and after that please make sure to 
        provide your final answer within the tags $\langle\text{Answer}\rangle \text{A/B/C}\langle\text{/Answer}\rangle$. \\
        It is very important that you output your final answer between the tags like $\langle\text{Answer}\rangle \text{A/B/C}\langle\text{/Answer}\rangle$ otherwise your response will not be processed.\\
        \bottomrule
    \end{tabular}
    
    \caption{\footnotesize Iterative One Hop Queries}
    \label{tab:markov blanket prompt}
\end{table}

\begin{table}
    \centering
    \footnotesize
    \begin{tabular}{m{12cm}}
        \toprule
        \textit{Identify the causal relationships between the given variables and create a directed acyclic graph to \textbf{\{context\}}. 
        Make sure to give a reasoning for your answer and then output the directed graph in the form of a list of tuples, where each tuple is a directed edge. The desired output should be in the following form:
        [(`A',`B'), (`B',`C')] where first tuple represents a directed edge from Node `A' to Node `B', second tuple represents a directed edge from Node `B' to Node `C'and so on.}\\
         \\
        \textit{If a node should not form any causal relationship with other nodes, then you can add it as an isolated node of the graph by adding it seperately. For example, if `C' should be an isolated node in a graph with nodes `A', `B', `C', then the final DAG representation should be like [(`A',`B'), (`C')].}\\
        \textit{Use the description about the node provided with the nodes in brackets to form a better decision about the causal direction orientation between the nodes.}\\
\\
        \textit{It is very important that you output the final Causal graph within the tags \textless Answer\textgreater \textless /Answer\textgreater otherwise your answer will not be processed.}\\
        \\
        \textit{Example:} \\
        \textit{Input: Nodes: [`A', `B', `C', `D'];}\\
        \textit{Description of Nodes: [(description of Node A), (description of Node B), (description of Node C), (description of Node D)]}
        \\
        \textit{Output: \textless Answer\textgreater[(`A',`B'),(`C',`D')]\textless /Answer\textgreater}\\
        \textit{Question:}\\
        \textit{Input: Nodes: \textbf{\{Triplet Nodes Input\}}}\\
        \textit{Description of Nodes: 
        \textbf{\{Description of Each Node from the Triplet\}}}\\
        \textit{Output:}\\
        \bottomrule
    \end{tabular}
    
    \caption{\footnotesize The \textit{triplet} query template, which includes a concise context of the graph, the input triplet nodes and their respective descriptions. 
    As an example, for the Child graph, the context is \textit{"to model congenital heart disease in babies"}, the three nodes may be \textit{(`HypoxiaInO2', `Grunting', `GruntingReport')}; and their node descriptions are  \textit{["hypoxia when breathing oxygen", "grunting in infants", "report of infant grunting"]} respectively.}
    \label{tab:triplet prompt structure}
\end{table}

\begin{table}
    \centering
    \footnotesize
    \begin{tabular}{m{13cm}}
        \toprule
        Input: \textbf{(`HypDistrib', `LowerBodyO2')}\\
        \\
Answer: Low oxygen areas equally distributed around the body can affect the level of oxygen in the lower body by reducing the amount of oxygen available for circulation. Therefore, the answer is $\langle\text{Answer}\rangle \text{A}\langle\text{/Answer}\rangle$.
 \\
 \\
 Input: \textbf{(`Disease', `LungFlow')}\\
 \\
Answer: Infant methemoglobinemia is a condition where the blood is unable to carry enough oxygen, which can lead to low blood flow in the lungs. Therefore, the answer is $\langle\text{Answer}\rangle \text{A}\langle\text{/Answer}\rangle$.\\
 \\
 Input: \textbf{(`CardiacMixing', `HypDistrib')}\\
 \\
Answer: Mixing of oxygenated and deoxygenated blood is a common cause of congenital heart disease in babies. This mixing can occur due to structural defects in the heart, such as a hole between the heart chambers or abnormal connections between the blood vessels. This mixing can lead to low oxygen areas equally distributed around the body. Therefore, the answer is $\langle\text{Answer}\rangle \text{B}\langle\text{/Answer}\rangle$.\\
 
        \bottomrule
    \end{tabular}
    
    \caption{\footnotesize Example LLM (GPT-3.5-turbo) outputs for estimating the causal direction between a given pair of nodes (with their descriptions) using the CoT Prompt (refer Table \ref{tab:cot} in Appendix for the prompt). Given a pair (X,Y), Option $A$ refers to X causing Y, option $B$ refers to Y causing X, and option $C$ refers to no causal relationship between the variables. The pairs shown are from the \textit{Child} graph from BNLearn repository~\citep{bnlearn}, that is used for modeling Congenital Heart Disease in infants.}
    \label{tab:child llm output}
\end{table}

\begin{table}
    \centering
    \begin{tabular}{m{13cm}}
        \toprule
        Input: \textbf{(`Right C7',`Discoligment injury C6-C7')}\\
        \\
Answer: Discoligment injury C6-C7 can cause compression of the nerve roots that exit the spinal cord at the C7 level, which can lead to symptoms such as pain, numbness, and weakness in the right C7 dermatome. Therefore, the answer is $\langle\text{Answer}\rangle \text{B}\langle\text{/Answer}\rangle$.
 \\
 \\
 Input: \textbf{(`Right C7', `Left C7')}\\
 \\
Answer: Right C7 and left C7 are both parts of the cervical spine and are not known to directly influence each other. Therefore, the answer is $\langle\text{Answer}\rangle \text{C}\langle\text{/Answer}\rangle$.\\
 \\
 Input: \textbf{(`Right elbow trouble', `Left shoulder trouble')}\\
 \\
Answer: There is no direct causal relationship between right elbow trouble and left shoulder trouble. They may both be symptoms of a larger underlying condition, but they do not directly cause or affect each other. Therefore the answer is $\langle\text{Answer}\rangle \text{C}\langle\text{/Answer}\rangle$.\\
 
        \bottomrule
    \end{tabular}
    
    \caption{Example LLM (GPT-3.5-turbo) reasoning outputs for estimating causal directionality between different pairs of nodes using CoT queries (refer Table \ref{tab:cot} for the query) for Neuropathic subgraph (used for pain diagnosis).}
    \label{tab:neuropathic llm output appendix}
\end{table}

\end{document}